\newtheorem{assumption}{Assumption}
\newtheorem{definition}{Definition}
\newtheorem{theorem}{Theorem}
\newtheorem{lemma}{Lemma}
\newtheorem{corollary}{Corollary}
\newtheorem{proposition}{Proposition}
\newtheorem{proof}{Proof}[section]
\title{Understanding Inverse Reinforcement Learning under Overparameterization: Non-Asymptotic Analysis and Global Optimality
}
\author{
  Ruijia Zhang \\
  Johns Hopkins University \\
 {rzhan127@jh.edu} \\
  \And 
  Siliang Zeng \\
 University of Minnesota\\
  zeng0176@umn.edu \\
   \And
  Chenliang Li \quad Alfredo Garcia \\
  Texas A\&M University \\
\texttt{\{chenliangli,alfredo.garcia\}}@tamu.edu \\
  \And
  Mingyi Hong \\
  University of Minnesota \\
  \texttt{mhong@umn.edu} \\
}
\begin{document}
\maketitle

\begin{abstract}
The goal of the Inverse reinforcement learning (IRL) task is to identify the underlying
reward function and the corresponding optimal policy from a set of expert demonstrations. While most IRL algorithms'
theoretical guarantees rely on a linear reward
structure, we aim to extend the theoretical understanding of IRL to scenarios where the reward function is parameterized by neural networks. Meanwhile, conventional IRL algorithms usually adopt a nested structure, leading to computational inefficiency, especially in high-dimensional settings.
To address this problem, we propose the first two-timescale single-loop IRL algorithm under neural network parameterized reward and provide a non-asymptotic convergence analysis under overparameterization. Although prior optimality results for linear rewards do not apply, we show that our algorithm can identify the globally optimal reward and policy under certain neural network structures. This is the first IRL algorithm with a non-asymptotic convergence guarantee that provably achieves global optimality in neural network settings.
\end{abstract}
 
\section{Introduction}
Given the observed trajectories of states and actions executed by an expert, we consider the challenge of inferring the reinforcement learning reward function in which the expert was trained.
This problem is commonly known as inverse reinforcement learning (IRL), as discussed in a recent survey by \cite{Osa_2018}. IRL involves the task of estimating both the reward function and the expert's policy that best aligns with the provided data. While there are certain limitations on the identifiability of rewards \citep{kim2021reward}, the process of estimating rewards from expert trajectories offers valuable insights for estimating optimal policies under different environment dynamics and/or learning new tasks through reinforcement learning, which has been widely used in different applications such as autonomous driving \citep{Kurach2018TheGL,neu2012apprenticeshiplearningusinginverse,phanminh2022drivingreallifeinverse}, autonomous trading \citep{yang2018investor}, and LLM alignment \citep{ouyang2022traininglanguagemodelsfollow}. 

In the Maximum Entropy Inverse Reinforcement Learning (MaxEnt-IRL) formulation introduced by \cite{ziebart2008maximum}, the expert's behavior is modeled as a policy that maximizes entropy while satisfying a constraint to ensure that the expected features align with the empirical averages in the expert's dataset. The algorithms developed for MaxEnt-IRL \citep{ziebart2008maximum, Ziebart2010ModelingIV,wulfmeier2015maximum} usually employ a nested loop structure. These algorithms cycle through two main stages: an outer loop focused on the reward update, and an inner loop responsible for calculating precise policy estimates.
Although these nested loop algorithms remain computationally feasible within tabular contexts, they pose significant computational challenges in high-dimensional settings where function approximation is necessary \citep{finn2016guided}.

Most recently, a new formulation of IRL based on maximum likelihood (ML) estimation has been proposed \citep{zeng2022maximum}. The ML formulation is a bi-level optimization problem. In this framework, the upper-level problem maximizes the log-likelihood function given observed trajectories, while the lower-level problem seeks to identify the optimal policy given the current reward parameterization. To alleviate the computational burden associated with the recurrent optimization of the lower-level policy, ML-IRL carries out the policy improvement step and reward update alternately. The new policy at each round is generated by \textit{soft policy iteration} \citep{haarnoja2017reinforcement}. This approach ensures that each iteration can be executed with relatively lower computational costs. 

However, whether ML-IRL converges or not relies on its policy improvement step, which has not been thoroughly discussed before. An $\epsilon-$accurate estimation of soft Q-function for policy improvement is assumed to ensure global convergence. It's worth noting that this assumption remains intractable in practice. The single-loop implementation of ML-IRL \citep{zeng2022maximum} carries out multiple soft Actor-Critic (SAC) steps for the policy update instead. It therefore lacks a convergence guarantee to substantiate its performance.

Moreover, existing ML-IRL works only consider the optimality analysis when the reward employs a linear structure. \cite{zeng2023understanding} establishes a strong duality relationship with maximum entropy Inverse Reinforcement Learning (IRL) when the reward can be expressed as a linear combination of features. 

We want to advance IRL research by developing a theoretical understanding of IRL under neural network parameterization. 

\subsection{Challenges:}
The main challenge is how to ensure the global convergence of our IRL algorithm when we don't have an accurate policy evaluation in practice. Besides, the nonconvex structure of neural network parametrization, coupled with bi-level optimization formulation, complicates the optimality analysis. 

Therefore, this paper seeks to address these challenges and answer the following questions: \\
\textit{(i) Can we design an IRL framework that ensures convergence without requiring precise policy evaluation, even when the reward function is parameterized by neural networks? \\
(ii) Is it possible to develop a computationally efficient single-loop algorithm within this framework that avoids inner policy optimization, while still maintaining convergence?\\
(iii) Moreover, does this algorithm provably identify the ground truth reward function and the corresponding optimal policy?}

In this work, we provide affirmative answers to the above questions. We summarize our major contributions here.

\subsection{Contributions:}
\begin{itemize}
    \item We design the first theoretically guaranteed ML-IRL framework under neural network settings. This framework comprises two main stages: (i) In the policy improvement stage, we apply a finite-step neural soft Q-learning \citep{haarnoja2017reinforcement,cai2023neural} to update the policy.  (ii) In the reward update stage, we first estimate the gradient of the maximum log-likelihood objective function with respect
to the reward parameter through the sampled trajectory induced by the present policy. Subsequently, gradient ascent is employed to update the reward parameter based on this gradient estimate. 

\item To reduce the computational complexity of nested soft Q-learning in the policy improvement stage, we propose a single-loop ML-IRL algorithm for solving the bi-level IRL problem. Inspired by the two-timescale alternating update process of actor-critic \citep{hong2020two, borkar1997stochastic}, our approach carefully differentiates the stepsizes for Temporal Difference (TD) updates and reward parameter updates, which ensures global convergence with just one iteration of soft Q-learning. 

\item Additionally, we show that the algorithm has strong theoretical guarantees: it requires $\mathcal{O}\left(\epsilon^{-8}\right)$ steps to identify an $\epsilon$-approximate globally optimal reward estimator under overparameterization. To our knowledge, this is the first single-loop algorithm with finite-time convergence analysis and global optimality guarantees for IRL problems parameterized by neural networks.

\item  Finally, we conduct intensive numerical experiments on Mujoco tasks. We show that our methods have better performances than many previous state-of-the-art IRL algorithms for imitating the expert behavior.

\end{itemize}

\section{Related Work}

\subsection{Single-loop IRL Algorithms}
Towards more efficient training, some recent works 
have developed algorithms to alleviate the computational burden of nested-loop IRL structures. In \cite{reddy2019sqil}, the authors attempt to model the IRL using
certain maximum entropy RL problem with a very trivial reward
function that assigns $r=+1$ for matching expert demonstrations and $r=0$ for all other behaviors. \cite{garg2021iq} proposes to transform the
standard formulation of IRL into a single-level problem by directly estimating the soft Q-function, which encodes the information of the reward function and policy. However, the implicit
reward function recovered by the soft Q-function is not very accurate since it is not strictly subject to satisfy
Bellman’s equation. Another approach called f-IRL \citep{ni2021f} estimates rewards based
on the minimization of several measures of divergence with respect to the expert’s state visitation
measure. However, it is restricted to certain scenarios where the reward function only depends on the state. Besides, no convergence guarantee is given to support its performance on the reported numerical results of the single-loop implementation. 

\subsection{Convergence Analysis of IRL algorithms}
\cite{syed2007game,syed2008apprenticeship,abbeel2004apprenticeship,neu2012apprenticeshiplearningusinginverse, ross2010efficient,rajaraman2020fundamentallimitsimitationlearning} first study the convergence of IRL but only in the tabular case. The convergence result of Behavior Cloning (BC) established by \cite{rashidinejad2023bridgingofflinereinforcementlearning} fails in continuous state and action spaces with a horizon $H\geq2$,  since BC is considered as
a classification problem and always faces unseen states in the continuous state space. \cite{liu2021provably} and \cite{zhang2020generative} present a convergence guarantee for GAIL in the linear function and neural network approximation setting respectively. However, the problem formulation of GAIL is a min-max optimization, where the inner
part maximizes the reward and the outer part minimizes the policy. \cite{liu2021provably} and \cite{zhang2020generative} only
show that the learned policy cannot be distinguished from the expert policy in expectation, without mentioning the recovery of the expert reward function.


\subsection{Neural Network Analysis}
Our theoretical understanding of optimality guarantees is inspired by the literature on the local linearization property of the neural network parameterization \citep{maei2009convergent,bertsekas2018feature,geist2013algorithmic}. 
Previous works on the optimality analysis of neural networks such as \cite{li2018learning,du2018power} show that for a one-hidden-layer network with ReLU activation function using overparameterization and random initialization, GD and SGD can find the near global-optimal solutions in polynomial time.  \cite{zou2018stochastic,allenzhu2020learninggeneralizationoverparameterizedneural,gao2019convergence} extends the result to $L$-hidden-layer-fully-connected neural network structure. Several recent works \cite{cai2023neural,gaur2023global,fu2021singletimescale,zhang2020generative} discusses optimality results in deep reinforcement learning. However, previous results cannot be easily generalized to the bi-level IRL optimization formulation, which is significantly more challenging. 


\section{Preliminaries}
In RL, a discounted Markov decision process (MDP) is denoted by a tuple $\mathcal{M}=(\mathcal{S}, \mathcal{A}, P, r, \mu_0, \gamma)$, where $\mathcal{S}$ is the state space, $\mathcal{A}$ is the action space, $P(\cdot \mid s, a)$ is the transition probability kernel, $r: \mathcal{S} \times \mathcal{A} \rightarrow \mathbb{R}$ is the reward function, $\mu_0(\cdot)$  is the initial state distribution and $\gamma$ is the discount factor. A policy $\pi$ takes state $s \in \mathcal{S}$ as an input and gives a distribution over actions $\mathcal{A}$. $\mu_\pi$ is the stationary state-action distribution associated with the policy $\pi$. We consider $\mathcal{S}$ to be continuous and $\mathcal{A}$ to be finite.

\subsection{{ Entropy-Regularized Reinforcement Learning}} 
Maximum entropy RL \citep{haarnoja2017reinforcement} searches the optimal policy, which maximizes its entropy at each visited state:
\begin{equation}
\label{eq:maxen}
    \pi_{\text {MaxEnt }}^*=\arg \max _\pi\sum_{t=0}^{\infty}\mathbb{E}_{\left(\mathbf{s}_t, \mathbf{a}_t\right) \sim \mu_\pi}\left[\gamma^t \left({r}\left(\mathbf{s}_t, \mathbf{a}_t\right)+\mathcal{H}\left(\pi\left(\cdot \mid \mathbf{s}_t\right)\right)\right)\right],
\end{equation}
where $\mathcal{H}(\pi(\cdot \mid s)):=$ $-\sum_{a \in \mathcal{A}} \pi(a \mid s) \log \pi(a \mid s)$ denotes the entropy of policy $\pi(\cdot \mid s)$.
In this context, the soft V-function and soft Q-function are defined as follows:
\begin{align}
    &V^{\text{soft}}_{r, \pi} (s) = \mathbb{E}_{\pi, s_0 = s} \left[ \sum_{t=0}^{\infty} \gamma^t \left( r(s_t, a_t) + \mathcal{H} \left( \pi(\cdot \mid s_t) \right) \right) \right], \label{2a} \\
    &Q^{\text{soft}}_{r, \pi} (s, a) = r(s, a) + \gamma \mathbb{E}_{s' \sim \mathcal{P}(\cdot \mid s, a)} \left[ V^{\text{soft}}_{r, \pi} (s') \right]. \label{2b}
\end{align}

By soft policy iteration \citep{haarnoja2017reinforcement}, a greedy entropy-regularized policy at $(k+1)$th iteration $\pi_{k+1}$ can be generated by $\pi_{k}$ at $k$th step as follows:
$\pi_{k+1}(\cdot \mid s) \propto \exp \left(Q_{r, \pi_k}^{s o f t}(s, \cdot)\right), \forall s \in \mathcal{S}$ \citep{Ziebart2010ModelingIV,haarnoja2017reinforcement}. Under a fixed reward function, it can be shown that the new policy $\pi_{k+1}$ monotonically improves $\pi_k$, and it converges linearly to the optimal policy of \eqref{eq:maxen}; see Theorem 4 in \cite{haarnoja2017reinforcement} and Theorem 1 in \cite{cen2022fast}.


\subsection{Problem Formulation: Maximum Log-Likelihood IRL (ML-IRL)}


In this section, we review the fundamentals of maximum
likelihood inverse reinforcement learning (ML-IRL). 

Assume observations are in the form of expert state-action trajectories $\tau^{E}=\{\left(s_t,a_t\right)\}_{t\geq0}$ drawn from an expert policy.
A model of the expert's behavior under parameterized reward 
 $ \widehat{r}(s,a;\theta)$ is a randomized policy $\pi_\theta(\cdot \mid s)$ where $\theta$ is a parameter vector. Assuming the transition dynamics $\mathcal{P}\left(s_{t+1} \mid s_t, a_t\right)$ are known, the discounted log-likelihood of observing a sample trajectory $\tau^E$ under model $\pi_\theta$ can be written as follows:
\begin{equation}
 \mathbb{E}_{\tau^E \sim \pi^E}\left[\log \prod_{t \geq 0}\left(\mathcal{P}\left(s_{t+1} \mid s_t, a_t\right) \pi_\theta\left(a_t \mid s_t\right)\right)^{\gamma^t}\right]=\mathbb{E}_{\tau^E \sim \pi^E}\left[\sum_{t \geq 0} \gamma^t\left(\log \pi_\theta\left(a_t \mid s_t\right)+\log \mathcal{P}\left(s_{t+1} \mid s_t, a_t\right)\right)\right].
\end{equation}

Let $\label{L_theta}\mathcal{L}(\theta):=\mathbb{E}_{\tau^E \sim \pi^E}\left[\sum_{t>0} \gamma^t \log \pi_\theta\left(a_t \mid s_t\right)\right]$. The IRL problem can then be formulated as the following maximum log-likelihood IRL formulation \citep{zeng2022maximum}:
\begin{equation}
\label{formulation}
\begin{aligned}
        &\quad \max_\theta ~ \mathcal{L}(\theta) \\
        &
\text {s.t.} ~ \pi_\theta=\arg \max _\pi \mathbb{E}_\pi\left[\sum_{t=0}^{\infty} \gamma^t\left(\widehat{r}\left(s_t, a_t ; \theta\right)+\mathcal{H}\left(\pi\left(\cdot \mid s_t\right)\right)\right)\right],
    \end{aligned}
\end{equation}
where $\mathcal{H}(\pi(\cdot \mid s)):=$ $-\sum_{a \in \mathcal{A}} \pi(a \mid s) \log \pi(a \mid s)$ denotes the entropy of policy $\pi(\cdot \mid s)$. 

However, in practice, the ground truth policy $\pi^E$ is unknown. Let $\mathcal{D}:=\left\{\tau^E_i\right\}_{i=1}^N$ denote the demonstration dataset containing only finite observed trajectories. The empirical discounted log-likelihood $\widehat{\mathcal{L}}(\theta)$  is denoted by $\mathbb{E}_{\tau^E \sim \mathcal{D}}\left[\sum_{t>0} \gamma^t \log \pi_\theta\left(a_t \mid s_t\right)\right]$. Therefore, we consider an empirical version of \eqref{formulation} as follows.

\begin{equation}
\label{empirical formulation}
\begin{aligned}
        &\quad \max_\theta ~ \widehat{\mathcal{L}}(\theta)\\
        &
\text {s.t.} ~ \pi_\theta=\arg \max _\pi \mathbb{E}_\pi\left[\sum_{t=0}^{\infty} \gamma^t\left(\widehat{r}\left(s_t, a_t ; \theta\right)+\mathcal{H}\left(\pi\left(\cdot \mid s_t\right)\right)\right)\right].
    \end{aligned}
\end{equation}
Note that the problem takes the form of a bi-level optimization problem, where the upper-level problem (ML-IRL) optimizes the reward parameter $\theta$, while the lower-level problem describes the expert's policy as the solution to an entropy-regularized MDP \citep{haarnoja2017reinforcement,haarnoja2018soft}. The entropy regularization in \eqref{empirical formulation}
ensures the uniqueness of the optimal policy $\pi_\theta$ given the fixed reward function $\widehat{r}(s, a; \theta)$ \citep{haarnoja2017reinforcement, haarnoja2018soft}, even when the underlying MDP is high-dimensional and complex.

\section{Proposed Algorithm}

\subsection{Neural Network parameterization of IRL}

\begin{definition}
    The state-action pair $(s, a) \in \mathcal{S} \times \mathcal{A}$ is represented by a vector $x=\psi(s, a) \in \mathcal{X} \subseteq \mathbb{R}^d$ with $d>2$, where $\psi$ is a given one-to-one feature map. 
\end{definition}

With a slight abuse of notation, we use $(s, a)$ and $x$ interchangeably. Without loss of generality, we assume that $\|x\|_2=1$ and $|\widehat{r}(x;\theta)|$ is upper bounded by a constant $\bar{r}>0$ for any $x \in \mathcal{X}$. In this paper, we want to extend our understanding of IRL by replacing the commonly assumed linear reward function with a neural network parameterized one.

To parametrize the reward function and the soft Q value function, we use a two-layer neural network.

\begin{subequations}
\begin{align}
        \label{Q parameterization}
    \widehat{Q}(x ; W)&=\frac{1}{{m}} \sum_{j=1}^m b^j\sigma\left({W^j}^{\top} x\right),\\
    \widehat{r}(x ; \theta)&=\frac{1}{{m}} \sum_{j=1}^m b^j\sigma\left({\theta^j}^{\top} x\right).\label{reward parameterization}
\end{align}
\end{subequations}
Here $\sigma$ is the rectified linear unit (ReLU) activation function. $\sigma(y)=\max \{0, y\}$. $b=\left(b^1, \ldots, b^m\right)$ is generated as $b^j \sim \operatorname{Unif}(\{-1,1\})$ , the initialization parameter $W_0$ is generated as $ W_0^j \sim N\left(0, I_d / d\right)$ , and the initialization parameter $\theta_0$ is initialized as $ \theta_0^j \sim N\left(0, I_d / d\right)$ for any $j \in[m]$ independently. During training, we only update $W=\left(W^1, \ldots,W^m\right)\in \mathbb{R}^{m d}$ and $ \theta=\left(\theta^1, \ldots, \theta^m \right)\in \mathbb{R}^{m d}$, while keeping $b=\left(b^1, \ldots, b^m\right) \in \mathbb{R}^m$ fixed as the random initialization. $W$ is restricted within a closed ball with radius $B$: $S_B=\left\{W \in \mathbb{R}^{m d}:\|W-W_0\|_2 \leq B\right\}(B>0)$.This is a classical neural network structure for theoretical analysis, due to its local linearization property \citep{wang2019neural}. It has been demonstrated to be capable of learning a class of infinite-order smooth functions \citep{arora2019finegrained,allenzhu2020learning}.


To solve problem \eqref{empirical formulation}, \cite{zeng2022maximum} proposes an algorithm named Maximum Likelihood Inverse Reinforcement Learning (ML-IRL) based on certain stochastic algorithms for bi-level optimization \citep{hong2020two,ji2021bilevel,khanduri2021near}, thereby avoiding the heavy computational burden from the traditional nested loop algorithms. Specifically, \textit{soft policy iteration} \citep{haarnoja2017reinforcement} is adopted to optimize the lower-level problem. Therefore, the soft Q-function value under the current step's policy and reward function is critical for policy improvement in ML-IRL.

However, a key limitation of  ML-IRL is that the exact soft Q-function value is typically not accessible. In order to make the algorithm more practical, we decide to incorporate finite-step soft Q-learning \citep{haarnoja2017reinforcement} to directly approximate the optimal soft Q value $\widehat{Q}_{\widehat{r}_{\theta}, \pi_{\theta}}^{s o f t}(s, a)$ under the current reward $\widehat{r}_{\theta}$. The detailed implementation is shown in Algorithm \ref{alg:1}.



\begin{algorithm}[htb]
\caption{Neural Soft Q-learning  }
\label{alg:1}
\begin{algorithmic}[1] 
\REQUIRE ~~ \\ 
   Exploration policy $\pi_{\text{exp}}$ such that $\pi_{\text{exp}}(a|s)>0$ for any $(s,a) \in \mathcal{S} \times \mathcal{A}$, total iterations $T$, stepsize $\eta$,  current reward $\widehat{r}_{\theta}$, $b^j \sim \operatorname{Unif}(\{-1,1\}),$\\
   $ W_0^j \sim N\left(0, I_d / d\right)(s \in[m]), \bar{W}=W_0,
   $\\
   $S_B=\left\{W \in \mathbb{R}^{m d}:\|W-W_0\|_2 \leq B\right\}(B>0) $,\
   \FOR{$t=0,1, \ldots, T-1$}          
            \STATE  Sample a tuple $\left(s, a, \widehat{r}_{\theta}, s^{\prime}\right)$ from the stationary distribution $\mu_{\text{exp}}$ of the exploration policy $\pi_{\text{exp}}$
            \STATE Bellman residual calculation: $\delta \leftarrow \widehat{Q}(s, a ; W_t)-\widehat{r}_{\theta}-\gamma \operatorname{softmax}_{a^{\prime} \in \mathcal{A}} \widehat{Q}\left(s^{\prime}, a^{\prime} ; W_t\right);$
            \STATE TD update: $\widetilde{W}_{t+1} \leftarrow W_t-\eta \delta \cdot \nabla_W \widehat{Q}(s, a ; W_t);$
            \STATE Projection: $W_{t+1} \leftarrow \operatorname{argmin}_{W \in S_B}\|W-\widetilde{W}_{t+1}\|_2$\\
            \STATE Averaging: $\bar{W} \leftarrow \frac{t+1}{t+2} \cdot \bar{W}+\frac{1}{t+2} \cdot W_{t+1}$
        \ENDFOR
            \STATE Output: $ \widehat{Q}_{\text {out }}(\cdot)=\widehat{Q}_{\widehat{r}_{\theta}, \pi_{\theta}}^{s o f t}(\cdot)\leftarrow \widehat{Q}(\cdot ; \bar{W})$
\end{algorithmic}
\end{algorithm}


\subsection{ML-IRL with Dynamically Truncated Soft Q-learning Nested}

Typically, it requires $\mathcal{O}\left(\epsilon^{-2}\right)$ loops of Algorithm \ref{alg:1} to get an $\epsilon-$accurate approximation of the optimal soft Q-function for each policy improvement \citep{cai2023neural}. However, it is not possible to do so in real practice.
To address this, we present an improved and feasible Algorithm \ref{alg:2}. The innovation in Algorithm \ref{alg:2} lies in the dynamic adjustment of the number of stochastic steps in the nested Algorithm \ref{alg:1}. Specifically, at the $k+1$th round of policy improvement, Algorithm \ref{alg:1} is required to perform $k+2$ iterations to approximate the optimal soft Q value under the reward $\widehat{r}_{\theta_k}$ at $k$th iteration.

 This design utilizes soft Q-learning to get a direct but imprecise estimation of optimal soft Q-function value in finite iterations. The previous convergence guarantee in ML-IRL \citep{zeng2022maximum} may no longer hold since Algorithm \ref{alg:2} doesn't achieve an $\epsilon-$accurate policy estimation required for soft policy iteration. However, by forcing the number of nested iterations in Algorithm \ref{alg:1} to increase linearly with outer reward update rounds, we can still ensure global convergence. The convergence guarantee will be discussed in later Section \ref{theoretical}. We now present the Algorithm \ref{alg:2} below.

\begin{algorithm}[htb]
\caption{Maximum Likelihood Inverse Reinforcement Learning (ML-IRL) with Dynamically Truncated Soft Q-learning nested}
\label{alg:2}
\begin{algorithmic}[1] 
\REQUIRE ~~\\ 
    Reward parameter $\theta_0$,
    stepsize of reward parameter $\alpha$;

   \FOR{ $k=0,1, \ldots, K-1$}
   \STATE Optimal Soft Q-function Approximation: Run Algorithm \ref{alg:1} under reward function $\widehat{r}\left(\cdot;\theta_k\right)$ for $T=k+2$ rounds
        \STATE Policy Improvement: \\$\pi_{k+1}(\cdot \mid s) \propto \exp \left(\widehat{Q}_{\widehat{r}_{\theta_k}, \pi_{\theta_k}}^{s o f t}(s, \cdot)\right), \forall s \in \mathcal{S}$.
        \STATE Data Sampling I: Sampling expert trajectory $\tau_k^E:=\left\{s_t, a_t\right\}_{t \geq 0}$ from the dataset $\mathcal{D}$
        \STATE Data Sampling II: Sampling agent trajectory $\tau_k^A:=\left\{s_t, a_t\right\}_{t \geq 0}$ from the policy $\pi_{k+1}$
        \STATE Estimating Gradient: \\$g_k:=h\left(\theta_k ; \tau_k^E\right)-h\left(\theta_k ; \tau_k^A\right)$ where $h(\theta ; \tau):=\sum_{t \geq 0} \gamma^t \nabla_\theta 
 \widehat{r}\left(s_t, a_t ;\theta\right)$
        \STATE Reward Parameter Update: $\theta_{k+1}:=\theta_k+\alpha g_k$
    \ENDFOR
\RETURN $\pi_K,\theta_K$
\end{algorithmic}
\end{algorithm}

\subsection{Two-timescale Single Loop ML-IRL }
As is shown above, Algorithm \ref{alg:2} still retains a nested inner loop structure due to the approximation of the optimal soft Q-function. Despite requiring only a finite number of iterations for each policy improvement round, it still results in an overall complexity of $\mathcal{O}\left(K^{2}\right)$. To reduce the computational complexity, we design a two-timescale single-loop Algorithm \ref{alg:3}. While still relying on the approximate optimal soft Q-function (line 3 to line 8), Algorithm \ref{alg:3} achieves a significant improvement by just one iteration of stochastic update from Algorithm \ref{alg:1}, under the current step's reward. 

To ensure global convergence, we employ the two-timescale stochastic approximation method \citep{hong2020two,borkar1997stochastic} By carefully selecting two distinct stepsizes, the lower-level policy optimization converges more quickly than the upper-level reward updates. This approach enables the policy $\pi_{k+1}$ to stay aligned with the optimal $\pi_{\theta_k}$. In the policy improvement step of Algorithm \ref{alg:3} (line 9), the lower-level policy is iteratively updated by the \textit{soft
policy iteration} \citep{haarnoja2017reinforcement}, which converges linearly to the optimal policy under a fixed reward function \citep{cen2022fast}. As a result, the lower-level updates occur on a faster timescale. In contrast, the upper-level reward parameter updates (line 13) progress more slowly because it does not exhibit such a nice linear convergence
property. A more detailed discussion on choosing the appropriate stepsizes can be found in Supplementary \ref{A3_policy_proof}. We give the full implementation details of Algorithm \ref{alg:3} as follows.
\begin{algorithm}[htb]
\caption{Two-Timescale Single Loop Maximum Likelihood Inverse Reinforcement Learning (ML-IRL)}
\label{alg:3}
\begin{algorithmic}[1] 
\REQUIRE ~~\\ 
    Reward parameter $\theta_0$,
    explorative policy $\pi_0$,
    stepsize of reward parameter update $\alpha$, stepsize of TD update $\eta$.
 \STATE initialize $b^j \sim \operatorname{Unif}(\{-1,1\}),$ \\$ \quad W_0^j \sim N\left(0, I_d / d\right)(s \in[m]), \bar{W}=W_0,$\\
 $\quad S_B=\left\{W \in \mathbb{R}^{m d}:\|W-W_0\|_2 \leq B\right\}(B>0) $;\
   \FOR{ $k=0,1, \ldots, K-1$} 
            \STATE  Sample a tuple $\left(s, a, r, s^{\prime}\right)$ from the stationary distribution $\mu_{k}$ of the current policy $\pi_{k}$
            \STATE Bellman residual calculation: $\delta \leftarrow \widehat{Q}(s, a ; W_k)-\widehat{r}\left(s,a ;\theta_k\right)-\gamma \operatorname{softmax}_{a^{\prime} \in \mathcal{A}} \widehat{Q}\left(s^{\prime}, a^{\prime} ; W_k\right);$
            \STATE TD update: $\widetilde{W}_{k+1}\leftarrow W_k-\eta \delta \cdot \nabla_W \widehat{Q}(s, a ; W_k);$
            \STATE Projection: $W_{k+1} \leftarrow \operatorname{argmin}_{W \in S_B}\|W-\widetilde{W}_{k+1}\|_2$\\
            \STATE Averaging: $\bar{W} \leftarrow \frac{k+1}{k+2} \cdot \bar{W}+\frac{1}{k+2} \cdot W_{k+1}$
       
            \STATE Output: $ \widehat{Q}_{\widehat{r}_{\theta_k}, \pi_{\theta_k}}^{s o f t}( \cdot)=\widehat{Q}_{\text {out }}(\cdot) \leftarrow \widehat{Q}(\cdot ; \bar{W})$
        \STATE Policy Improvement: \\$\pi_{k+1}(\cdot \mid s) \propto \exp \left(\widehat{Q}_{\widehat{r}_{\theta_k}, \pi_{\theta_k}}^{s o f t}(s, \cdot)\right), \forall s \in \mathcal{S}$.
        \STATE Data Sampling I: Sampling expert trajectory $\tau_k^E:=\left\{s_t, a_t\right\}_{t \geq 0}$ from the dataset $D$
        \STATE Data Sampling II: Sampling agent trajectory $\tau_k^A:=\left\{s_t, a_t\right\}_{t \geq 0}$ from the policy $\pi_{k+1}$
        \STATE Estimating Gradient:\\ $g_k:=h\left(\theta_k ; \tau_k^E\right)-h\left(\theta_k ; \tau_k^A\right)$, where $h(\theta ; \tau):=\sum_{t \geq 0} \gamma^t \nabla_\theta \widehat{r}\left(s_t, a_t ; \theta\right)$
        \STATE Reward Parameter Update: $\theta_{k+1}:=\theta_k+\alpha g_k$
    \ENDFOR
    
\RETURN $\pi_K,\theta_K$
\end{algorithmic}
\end{algorithm}

\section{Theoretical Analysis}
\label{theoretical}
In this section, we mainly provide theoretical insights into our proposed algorithms.

\subsection{Definition:}
Recall that the soft Q function has the following overparameterization structure:
\begin{equation*}
    \widehat{Q}(x ; W) =\frac{1}{{m}} \sum_{j=1}^m b^j \mathbbm{1}\left\{{W^j}^{\top} x>0\right\} {W^j}^{\top}x,
\end{equation*}
and $\nabla_{W^j} \widehat{Q}(x ; W)=\frac{b^j}{m} \mathbbm{1}\left\{{W^j}^{\top} x>0\right\} x$ almost everywhere in $\mathbb{R}^{ d}$.

\begin{definition}
    (Local Linearization of soft Q function) The neural network $\widehat{Q}(x ; W)$  can be linearized locally at random initialization point $W_0$ by $\widehat{Q}_0(x ; W)$ with respect to $W$:
    \begin{equation}
    \label{Q0_def}
        \widehat{Q}_0(x ; W)=\Phi(x)^{\top} W,
    \end{equation}
    where we define a feature map $\Phi(x) \in \mathbb{R}^{m d}$ as follows: $ \Phi(x):=\frac{1}{{m}} \cdot\left(\mathbbm{1}\left\{{W_0^1}^{\top} x>0\right\} x, \ldots, \mathbbm{1}\left\{{W_0^m}^{\top} x>0\right\} x\right).$
\end{definition}
$\widehat{Q}_0(x ; W)$ is linear in the feature map $\Phi(x)$.



\textbf{Remark:} Local linearization of the neural network is a classic function class of nonlinear function approximation and is known to have strong representation power when the width m goes to infinity \citep{hofmann2008kernel}. It is commonly used in the literature of stationary point convergence analysis \citep{zou2018stochastic, cai2023neural,allenzhu2020learning,daniely2017sgdlearnsconjugatekernel}.


\subsection{Assumptions}
\begin{assumption}
    \label{Ergodicity}
    (Ergodicity) For any policy $\pi$, assume the Markov chain with transition kernel $\mathcal{P}$ is irreducible and aperiodic under policy $\pi$. Then there exist constants $\kappa>0$ and $\rho \in(0,1)$ such that
$$
\sup _{s \in \mathcal{S}}\left\|\mathbb{P}\left(s_t \in \cdot \mid s_0=s, \pi\right)-\mu_\pi(\cdot)\right\|_{T V} \leq \kappa \rho^t, \quad \forall t \geq 0,
$$
where $\|\cdot\|_{T V}$ is the total variation (TV) norm. $\mu_\pi$ is the stationary state distribution under $\pi$.

\end{assumption}

Assumption \ref{Ergodicity} assumes the Markov chain mixes at a geometric rate. It is a common assumption in the literature of RL \citep{wu2022finite,bhandari2018finite,zou2019finitesample}, which holds for any time-homogeneous Markov chain with finite-state space or any uniformly ergodic Markov chain with general state space.



\begin{assumption}
    \label{regularity of policy}
 (Regularity of Policy)  There exists a constant $\nu^{\prime}>0$ such that for any $W_1, W_2 \in S_B$, it holds that
{
\begin{equation*}
\left(\gamma+\nu^{\prime}\right)^{-2} \mathbb{E}_{x \sim \mu}\left[\left(\widehat{Q}_0\left(x ; W_1\right)-\widehat{Q}_0\left(x ; W_2\right)\right)^2\right]\geq \mathbb{E}_{s \sim \mu}\Bigg[\left(\underset{a \in\mathcal{A}}{\operatorname{softmax}}\widehat{Q}_0\left(s, a ; W_1\right)-\underset{a \in \mathcal{A}}{\operatorname{softmax}} \widehat{Q}_0\left(s, a ; W_2\right)\right)^2\Bigg].
\end{equation*}
}

\end{assumption}

\cite{cai2023neural} proposes this strong regularity assumption to show that the soft Bellman operator is a contraction mapping and therefore guarantees the uniqueness of our approximate stationary point $W_{\theta}^*$.
\cite{zou2019finitesample,melo2008analysis}'s discussions on the global convergence of linear Q-learning rely on a more strong assumption that implies Assumption \ref{regularity of policy}.

\begin{assumption}
    \label{Regularity of Stationary Distribution} (Regularity of Stationary Distribution) There exists a constant $c_1>0$ such that for any $\zeta \geq 0$ and $w \in \mathbb{R}^d$ with $\|w\|_2=1$, it holds that
$$
\mathbb{P}\left(\left|w^{\top} \psi(s, a)\right| \leq \zeta, \text { for all } a \in \mathcal{A}\right) \leq c_1 \cdot \zeta,
$$
where $(s, a) \sim \mu$.
\end{assumption}

Assumption \ref{Regularity of Stationary Distribution} imposes a constraint on the density of $\mu$ with respect to the marginal distribution of $x$. This constraint is naturally satisfied when the marginal distribution of $x$ has a uniformly bounded probability density across the unit sphere, as discussed in \cite{cai2023neural}.

\subsection{Theorem}


\begin{theorem}
\label{Theorem 1}
    Suppose Assumptions \ref{Ergodicity}, \ref{regularity of policy} and \ref{Regularity of Stationary Distribution} hold. Selecting stepsize $\alpha:=\frac{\alpha_0}{K^\sigma}$ for the reward update step and $\eta=\min \{K^{-\frac{1}{2}},(1-\gamma) / 8\}$ for the TD update step in Algorithm \ref{alg:1} where $\alpha_0>0$ and $\sigma \in(0,1)$ are some fixed constants, and $K$ is the total number of iterations of the Algorithm \ref{alg:2}. Then the following result holds:

    \begin{subequations}
    \begin{align}
        & \frac{1}{K} \sum_{k=0}^{K-1} \mathbb{E}\left[\left\|\log \pi_{k+1}-\log \pi_{\theta_k}\right\|_{\infty}\right] =\mathcal{O}\left(K^{-\frac{1}{4}}\right)
            +\mathcal{O}\left(B^{3/2} m^{-1 / 2}+B^{5 / 4} m^{-1 / 4}\right), \\
            &\frac{1}{K} \sum_{k=0}^{K-1} \mathbb{E}\left[\left\|\nabla \widehat{\mathcal{L}}\left(\theta_k\right)\right\|^2\right] =\mathcal{O}\left(K^{-\sigma}\right)+\mathcal{O}\left(K^{-1+\sigma}\right)+\mathcal{O}\left(K^{-\frac{1}{4}}\right) +\mathcal{O}\left(m^{-1/2}+B^{3/2} m^{-1 / 2}+B^{5 / 4} m^{-1 / 4}\right),
    \end{align}
     \end{subequations}
where the expectation is over all randomness and \\$\left\|\log \pi_{k+1}-\log \pi_{\theta_k}\right\|_{\infty}$$=\max _{s \in \mathcal{S}, a \in \mathcal{A}}\left|\log \pi_{k+1}(a \mid s)-\log \pi_{\theta_k}(a \mid s)\right|$.
\end{theorem}

In Theorem \ref{Theorem 1}, we present the non-asymptotic convergence guarantee for the Algorithm \ref{alg:2}. In particular, by setting $\sigma=-\frac{1}{4}$ and the width $m$ to infinity, it takes both reward and policy $\mathcal{O}(\epsilon^{-4})$ iterations to converge to an $\epsilon-$stationary point. The detailed proof of Theorem \ref{Theorem 1} is in Supplementary \ref{thm 1 proof}.

\begin{theorem}
\label{Theorem 2}
    Suppose Assumptions \ref{Ergodicity}, \ref{regularity of policy} and \ref{Regularity of Stationary Distribution} hold. Selecting stepsize $\alpha:=\frac{\alpha_0}{K^\sigma}$ for the reward update step and  $\eta=\min \{K^{-\frac{3}{4}},(1-\gamma) / 8\}$ for the TD update step where $\alpha_0>0$ and $\sigma \in(0,1)$ are some fixed constants, and $K$ is the total number of iterations to be run by the Algorithm \ref{alg:3}. Then the following result holds:
    \begin{subequations}
    \begin{align}
        &\quad \frac{1}{K} \sum_{k=0}^{K-1} \mathbb{E}\left[\left\|\log \pi_{k+1}-\log \pi_{\theta_k}\right\|_{\infty}\right]=\mathcal{O}\left(K^{-\frac{1}{8}}\right)
            +\mathcal{O}\left(B^{3/2} m^{-1 / 2}+B^{5 / 4} m^{-1 / 4}\right), \label{thm2.a}\\
            &\quad\frac{1}{K} \sum_{k=0}^{K-1} \mathbb{E}\left[\left\|\nabla \widehat{\mathcal{L}}\left(\theta_k\right)\right\|^2\right] =\mathcal{O}\left(K^{-\sigma}\right)+\mathcal{O}\left(K^{-1+\sigma}\right)+\mathcal{O}\left(K^{-\frac{1}{8}}\right) +\mathcal{O}\left(m^{-1/2}+B^{3/2} m^{-1 / 2}+B^{5 / 4} m^{-1 / 4}\right), \label{thm2.b}
    \end{align}
 \end{subequations}
where the expectation is over all randomness and \\$\quad\left\|\log \pi_{k+1}-\log \pi_{\theta_k}\right\|_{\infty}$
$=\max _{s \in \mathcal{S}, a \in \mathcal{A}}\left|\log \pi_{k+1}(a \mid s)-\log \pi_{\theta_k}(a \mid s)\right|$.
\end{theorem}
Theorem \ref{Theorem 2} shows that our two-timescale single-loop Algorithm \ref{alg:3} can identify an approximate stationary point of our problem \eqref{empirical formulation} up to a neural network error, which depends on the width of our neural network. Specifically, when $\sigma=-\frac{1}{8}$ and the neural network is sufficiently wide, both the reward and policy converge with the rate of  $\mathcal{O}(K^{-\frac{1}{8}})$ iterations to an $\epsilon-$stationary point.

\textbf{Remark:} Our theoretical guarantee stands apart from the existing results in several key ways. \cite{cen2022fast} only shows the convergence rate of soft policy iteration under a fixed reward function. \cite{zeng2022maximum} first considers the setting where both the policy and the reward parameter keep changing, but the convergence is only guaranteed when the reward function is linear. In contrast, we analyze a significantly more challenging scenario where the reward is parameterized by neural networks, providing a broader and more generalized convergence guarantee. We present our proof in Supplementary \ref{A3_policy_proof}.


\textbf{Discussions:} \cite{allenzhu2019convergencetheorydeeplearning,gao2019convergence,zou2018stochastic} go beyond the double-layer neural networks and establish high probability convergence guarantees for deep overparameterized neural networks. \cite{cai2023neural} then apply these results to extend the analysis of neural soft Q-learning. Therefore, our Theorem \ref{Theorem 1} and \ref{Theorem 2} can be generalized to the setting where both the reward function and soft Q-function are parameterized by deep neural networks with slight modifications to network errors. We just refer to these findings here to support our experimental setup in a later section,  since it is not the main focus of this work. 

Next, we demonstrate that our Algorithm \ref{alg:3} can converge to the globally optimal solution of problem \eqref{empirical formulation} under overparameterization.
To study the global optimality of our algorithm, we reformulate the bi-level problem as the following equivalent saddle point problem \citep{zeng2024demonstrationsmeetgenerativeworld}. 
{
\begin{equation}
\label{new_formulation}
   \max _\theta \min _\pi 
 \quad \widehat{L}(\theta, \pi):=\mathbb{E}_{\tau^{E} \sim \mathcal{D}}\left[\sum_{t=0}^{\infty} \gamma^t\left(\widehat{r}\left(s_t, a_t ; \theta\right))\right)\right]-\mathbb{E}_{\tau^{\mathrm{A}} \sim(\mu_0, \pi)}\left[\sum_{t=0}^{\infty} \gamma^t\left(\widehat{r}\left(s_t, a_t ; \theta\right)+\mathcal{H}\left(\pi\left(\cdot \mid s_t\right)\right)\right)\right] .
\end{equation}
}

Since $\pi_{\theta}$ is defined as the optimal policy under reward parameter $\theta$ in the bi-level formulation \eqref{empirical formulation}, the following relation holds:
\begin{equation}
\label{bi-minmax}
    \widehat{\mathcal{L}}(\theta)=\min _\pi \widehat{L}(\theta, \pi).
\end{equation}

\begin{theorem}
\label{Thm: concavity}
     (Approximate Concavity)
    Given any fixed policy $\pi$ and for any $\theta, \theta^{\prime} \in \mathbb{R}^{m d}$ we have:
    \begin{equation*}
 \left[\widehat{L}\left( \theta^{\prime},\pi\right)-\widehat{L}\left( \theta,\pi\right)\right]\leq \left[\nabla_\theta \widehat{L}\left( \theta,\pi\right)^{\top}\left(\theta^{\prime}-\theta\right)\right]+\mathcal{O}\left(m^{-1 / 2}\right).
    \end{equation*}
\end{theorem}

Therefore, we note that the max-min objective $\widehat{L}(\theta, \pi)$ is concave in the reward parameter $\theta$ as the width $m$ of our neural network tends to infinity. 
Besides, we establish the following relationship:$\nabla \widehat{\mathcal{L}}({\theta})= \nabla_\theta \widehat{L}(\theta, \pi_{\theta})$ under overparameterization. We can utilize this relation and the concavity in Theorem \ref{Thm: concavity} to show that any stationary point $\tilde{\theta}$ of \eqref{empirical formulation} together with its corresponding optimal policy $\pi_{\tilde{\theta}}$ consists of a saddle point of \eqref{new_formulation} by checking the first-order condition. From \eqref{bi-minmax}, we derive the following result for a saddle point $\left(\tilde{\theta}, \pi_{\tilde{\theta}}\right)$:

 \[\tilde{\theta} \in \arg  \max _\theta \min _{\pi_{\theta}}\widehat{L}(\theta, \pi)=\arg \max _\theta \widehat{\mathcal{L}}(\theta) .\]

Therefore, for any saddle point $\left(\tilde{\theta}, \pi_{\tilde{\theta}}\right)$ of the objective $\widehat{L}(\cdot,\cdot)$, the reward parameter $\tilde{\theta}$ is a global optimal solution of  problem \eqref{empirical formulation}. The detailed proof can be found in Supplementary \ref{Optimality_supp_proof}. 

\begin{corollary}
    Assume that the ground truth reward value $r\left(s, {a}; \theta^*\right)$ for an action ${a} \in A$ and $s \in S$ is known and that the set of parameters $ \Theta$ is a compact set. Let $\widetilde{\theta}$ be the optimal solution to \eqref{empirical formulation}. If the number of expert trajectories in the demonstration data set satisfies $|\mathcal{D}| \geq \frac{2}{\epsilon^2m^2(1-\gamma)^2} \ln \left(\frac{2}{\delta}\right)$, then with probability greater than $1-\delta$, we obtain the performance guarantee of the optimality gap in \eqref{formulation} that:
    \begin{equation*}
{\mathcal{L}}\left(\theta^*\right)-{\mathcal{L}}(\tilde{\theta}) \leq \epsilon.
    \end{equation*}
\end{corollary}

Theorem \ref{Theorem 1} and \ref{Theorem 2} have already shown that our learned reward converges to the staionary point of the empirical estimation \eqref{empirical formulation}, i.e. ${\theta}_K \rightarrow \tilde{\theta}$. Since the parameter set $\Theta$ is compact, therefore the set of limit points of the sequence $\left\{{\theta}_K: K \in \mathbb{N}^{+}\right\}$ is non-empty. The uniqueness of the global optimal point ${\theta^*}$ is established by the approximate concavity in Theorem \ref{Thm: concavity} with the auxiliary Assumption \ref{regularity of policy}. Therefore, the set of limit points is a singleton.  To prove whether the global optimal reward from the empirical estimation problem \eqref{empirical formulation} is close to the underlying true reward, we investigate the optimality gap $\mathcal{L}\left(\theta^*\right)-\mathcal{L}(\tilde{\theta})$. We give the proof sketch as follows. 

\textbf{Proof Skecth:}
With probability greater than $1-\delta$, the following relation holds:
\begin{align*}
\mathcal{L}\left(\theta^*\right)-\mathcal{L}(\tilde{\theta})&= \left(\mathcal{L}\left(\theta^*\right)-\widehat{\mathcal{L}}\left(\theta^* ; \mathcal{D}\right)\right)+\left(\widehat{\mathcal{L}}\left(\theta^* ; \mathcal{D}\right)-\widehat{\mathcal{L}}(\tilde{\theta} ; \mathcal{D})\right)+(\widehat{\mathcal{L}}(\tilde{\theta} ; \mathcal{D})-\mathcal{L}(\tilde{\theta}))\\
& \stackrel{(i)}{\leq} \frac{1}{m(1-\gamma)} \sqrt{\frac{\ln (2 / \delta)}{2|\mathcal{D}|}}+\left(\widehat{\mathcal{L}}\left(\theta^*\right)-\widehat{\mathcal{L}}(\tilde{\theta})\right)+\frac{1}{m(1-\gamma)} \sqrt{\frac{\ln (2 / \delta)}{2|\mathcal{D}|}},
\end{align*}
where $(i)$ comes from hoefffing inequality.  

Since $\tilde{\theta}$ is defined as the optimal solution to problem \eqref{empirical formulation}, we know that $\widehat{\mathcal{L}}(\theta ; \mathcal{D})-\widehat{\mathcal{L}}(\tilde{\theta} ; \mathcal{D}) \leq 0$ for any $\theta$. Therefore, we yield that, with probability at least $1-\delta$, 
\begin{equation*}
    \mathcal{L}\left(\theta^*\right)-\mathcal{L}(\tilde{\theta}) \leq \frac{2}{m(1-\gamma)} \sqrt{\frac{\ln (2 / \delta)}{2|\mathcal{D}|}} .
\end{equation*}

When the number of expert trajectories in the demonstration data set satisfies $|\mathcal{D}| \geq \frac{2}{\epsilon^2m^2(1-\gamma)^2} \ln \left(\frac{2}{\delta}\right)$, then with probability greater than $1-\delta$, we obtain the performance guarantee of the optimality gap that 
$$
\mathcal{L}\left(\theta^*\right)-\mathcal{L}(\tilde{\theta}) \leq \epsilon.
$$

So far, we have demonstrated that our algorithm can theoretically recover the ground truth reward estimator under overparameterization.

\section{Experiments}
In this section, we mainly compare our single-loop ML-IRL with two classes of algorithms: (a) IRL algorithms that learn the reward function and its corresponding policy simultaneously, such as $f$-IRL \citep{ni2021f} and IQ-Learn \citep{garg2021iq}. (b) Imitation learning algorithms that only learn the policy to imitate the expert, including Behavior Cloning (BC) \citep{pomerleau1988alvinn} and Generative Adversarial Imitation Learning (GAIL) \citep{ho2016generative}. 

\textbf{Experimental Setup:} We test the performance of different algorithms on several high-dimensional robotics control tasks in Mujoco \citep{todorov2012mujoco}. To ensure a fair comparison with benchmark algorithms under these settings, we choose soft 
Actor-Critic \citep{haarnoja2018soft} as the base RL algorithm due to its strong performance in environments with continuous action spaces. Besides, soft Actor-Critic is a special variant of the policy gradient algorithm and equivalent to soft Q-learning \citep{haarnoja2018soft, schulman2018equivalencepolicygradientssoft}. \cite{cai2023neural} extends the convergence analysis of neural soft Q-learning to soft Actor-Critic. Therefore, it is reasonable for us to use soft Actor-Critic to update our policy in the numerical implementation. 

\floatplacement{table*}{t!}
\begin{table*}[t!]
    \centering
    \begin{tabular}{c|c|c|c|c|c|c}
    \hline
    Task&IQ-Learn&BC&GAIL&$f$-IRL&Single Loop ML-IRL&Expert\\
    \hline
    Hopper &\textbf{3546.4*} & 20.53 ± 4.16& 2757.88 ± 293.18&2993.54 ± 252.59& 3262.89 ± 24.31&3592.63\\
    Half-Cheetah &  5043.26&-1.77 ±0.27&3254.51 ± 87.40&4650.49 ± 180.94 & \bf{5074.51 ± 249.70} & 5098.30\\
    Walker &5134.0*&-13.98 ± 0.22 & 1023.41 ± 81.64 & 4483.14 ± 155.72& \bf{5182.09 ± 74.69} &5344.21 \\
 
    Ant &4362.9*& 760.14 ± 0.53 & 1107.98 ± 531.90& 4853.53 ± 213.72 & \bf{4919.80 ± 11.52} & 5926.18\\
    \hline
    \end{tabular}
   \caption{\footnotesize{\bf Mujoco Results.} The performance of benchmark algorithms under a single expert trajectory.}

\end{table*}

In SAC for the baseline algorithms, both policy network and Q-network are $(64,64)$ MLPs with ReLU activation function, and stepsizes as $3 \times 10^{-3}$. We use their open-source implementations in our experiments for all mentioned imitation learning / IRL benchmark algorithms\footnote{\url{https://github.com/Div99/IQ-Learn}}\footnote{\url{https://github.com/KamyarGh/rl_swiss}}. In our proposed single-loop algorithm\footnote{\url{https://github.com/Cloud0723/ML-IRL}}, we parameterize the reward function by a $(64,64)$ MLPs with ReLU activation function. For the reward network, we use Adam as the optimizer, and the stepsize is set to be $1 \times 10^{-4}$. At the beginning of each iteration, we warm-start the SAC algorithm by initializing both the policy and Q-networks with the neural networks trained in the previous iteration. We then run 10 episodes in the corresponding Mujoco environment to further train these two networks. Following this, both agent and expert trajectories are collected to update the reward network with a single gradient step.

\textbf{Numerical Results: }In each experiment, a limited data regime is considered where the expert dataset only contains a single expert trajectory. We use the expert data provided in the open-source implementation of f-IRL \footnote{\url{ https://github.com/twni2016/f-IRL}}.
We train the algorithms until they converge. The accumulated reward scores reported in Table 1 are averaged over 3 random seeds. In each random seed, we train the algorithms from initialization and collect 20 trajectories to average their accumulated rewards after the algorithms converge. The results presented in Table 1 indicate that our proposed single loop algorithm have better performances than existing baseline methods on the majority of tasks.

As we can observe, BC has the poorest ability to imitate our expert's behavior. This aligns with the fact that BC relies on supervised learning, which fails to learn the policy well within a limited data regime. Furthermore, we note that the training process of IQ-Learn is unstable, potentially due to its imprecise approximation of the soft Q-function. As a result, in the Half-Cheetah task, we are unable to replicate the results reported in the original paper \citep{garg2021iq} and have to directly report the results from the original paper, which is indicated by an asterisk (*) in Table 1. The results of AIRL are excluded from Table 1 due to its consistent poor performance, despite our extensive parameter tuning efforts. \citep{ni2021f,liu2019statealignmentbasedimitationlearning} have encountered similar issues.

\section{Conclusions}

In this paper, we advance the understanding of the maximum likelihood IRL framework and propose two provably efficient algorithms, which can recover the neural network parameterized reward function and the policy simultaneously. To our knowledge, Algorithm \ref{alg:3} is the first single-loop algorithm to address the neural network parameterized IRL problem with solid theoretical guarantees. Besides, extensive numerical experiments on Mujoco tasks demonstrate the superiority of our single-loop ML-IRL algorithm over traditional methods. A limitation of our method is the requirement of a sufficiently large width of the parameterization neural network, so one future direction of this work is to further extend our theoretical analysis to the finite-width case.

\section*{Acknowledgements}
M. Hong and S. Zeng are supported partially by NSF under the grants EPCN-2311007, ECCS-2426064 and CCF-2414372, also by Minnesota Supercomputing Institute. A. Garcia and C. Li are partially supported by W911NF-22-1-0213, ECCS-2240789 and CCF-2414373.

\bibliography{References}

\onecolumn

\setcounter{section}{0}
\renewcommand{\thesection}{\Alph{section}}
\setcounter{definition}{0}
\renewcommand{\thedefinition}{\Alph{definition}}

\section{\texorpdfstring{Function Approximation of Soft Q-learning \cite{cai2023neural}}{Function Approximation of Soft Q-learning}}
\begin{definition}
    The soft Bellman optimality operator is defined as: 
\begin{equation}
  \label{eq1}
  \mathcal{T}(Q^{\rm soft}_{\widehat{r}_{\theta_k},\pi})(s, a)=\mathbb{E}\left[r(s, a)+
\gamma \cdot \underset{a^{\prime} \in \mathcal{A}}{\operatorname{softmax}} Q^{\rm soft}_{\widehat{r}_{\theta_k},\pi}\left(s^{\prime}, a^{\prime}\right) \mid s^{\prime} \sim \mathcal{P}(\cdot \mid s, a)\right],
\end{equation}

\end{definition}
for which $Q^{\rm soft}_{\widehat{r}_{\theta_k},\pi}$ is the fixed point. In other words, $Q^{\rm soft}_{\widehat{r}_{\theta_k},\pi}=\mathcal{T}(Q^{\rm soft}_{\widehat{r}_{\theta_k},\pi}),$
 which minimizes the following Mean Squared Bellman Error (MSBE) problem: $\mathbb{E}_{(s,a,s^\prime)\sim \mu}\left[\left(Q(x)-\mathcal{T} Q(x)\right)^2\right].$

In this work, we utilize the neural network as a nonlinear function approximation. An approximate function class $\mathcal{F}_{W}$ is defined as $\left\{\widehat{Q}(\cdot;W)+\nabla_{W} \widehat{Q}(\cdot;W)^{\top}\left(W^\prime-W\right): W^\prime \in S_B\right\}$ 
. $S_B=$ $\left\{W^\prime \in \mathbb{R}^{m\times d}:\|W^\prime-W\|_2 \leq B\right\}$ is the parameter space of feasible $W$, which consists of the local linearization of $\widehat{Q}(\cdot;W^\prime)$ at point $W$ \citep{cai2023neural}. The bounded domain  limits the distance between the parameters $W^\prime$ and the initial parameters 
$W$, enhancing training stability. Since we know the distribution of initialization $W_0$, we take $W=W_0$ here. Recall that we adopt a double-layer neural network with ReLU activation, the approximate function class can be defined as follows.
\begin{definition}
    (Approximate Function Class)
    \label{Approximate Function Class}
    $$\mathcal{F}_{W_0}=\left\{\frac{1}{\sqrt{m}} \sum_{r=1}^m b^j \mathbbm{1}\left\{{W_0^j}^{\top} x>0\right\} {W^j}^{\top} x: W \in S_B\right\}.$$
\end{definition}

To find the best neural network approximation of $Q^{\rm soft}_{\widehat{r}_{\theta_k},\pi}$, we usually turn to seek the stationary point $\widehat{Q}_0(\cdot;W_{\theta_k}^*)$ of a surrogate measure for the MSBE on $\mathcal{F}_{W_0}$, which is referred to as the Mean Squared Projected Bellman Error (MSPBE).
\begin{equation}
\label{MSPBE}
    \min _W \operatorname{MSPBE}(W)=\mathbb{E}_{(s, a) \sim \mu}\left[\left(\widehat{Q}(s, a;W)-\Pi_{\mathcal{F}_{W}} \mathcal{T} \widehat{Q}(s, a;W)\right)^2\right],
\end{equation}

where the soft Q-function is parametrized by $\widehat{Q}(s, a;W)$ with parameter $W$. In this context, $\mu$ signifies the stationary distribution of $(s, a)$ induced by the policy $\pi$. The projection onto a function class $\mathcal{F}_{W}$ is represented as $\Pi_{\mathcal{F}_{W}}$. We then introduce two lemmas to support the feasibility of focusing on the stationary point of MSPBE for convergence analysis.

\begin{lemma}
    (Lemma 4.2 in \cite{cai2023neural})  \big (Existence, Uniqueness, and Optimality of $\widehat{Q}_0\left(\cdot; W_{\theta_k}^*\right)$\big). There exists a stationary point $W_{\theta_k}^*$ for any $b \in \mathbb{R}^m$ and $W_0 \in \mathbb{R}^{m\times d}$. Also, $\widehat{Q}_0\left(\cdot; W_{\theta_k}^*\right)$ is unique almost everywhere and is the global optimum of the MSPBE that corresponds to the projection onto $\mathcal{F}_{W_0}$.
\end{lemma}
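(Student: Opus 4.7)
The plan is to exploit that $\mathcal{F}_{W_0}$, despite being defined through a linearization, is a genuinely \emph{linear} function class in the parameter $W$. With the random feature map
$\phi(x) := \frac{1}{\sqrt{m}}\bigl[b^j \mathbbm{1}\{W_0^{j\top}x > 0\}\, x\bigr]_{j=1}^m$,
every element of $\mathcal{F}_{W_0}$ takes the form $\langle \phi(x), W\rangle$ for some $W \in S_B$, so the MSPBE reduces to the classical projected Bellman objective for linear approximation on fixed random ReLU features. I would therefore attack all three assertions (existence, uniqueness, optimality) by a single Banach fixed-point argument on the finite-dimensional closed convex subspace $\mathcal{F}_{W_0} \subset L^2(\mu)$.

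First I would verify that the soft Bellman operator $\mathcal{T}$ in \eqref{eq1} is a $\gamma$-contraction in the $L^2(\mu)$ norm when $\mu$ is the stationary distribution of the chain induced by $\pi$. The expectation in \eqref{eq1} averages against the transition kernel, and $\operatorname{softmax}_{a'}$ is $1$-Lipschitz in the sup norm (as a log-sum-exp), so the standard contraction proof for the hard Bellman operator carries over verbatim. Combined with the non-expansiveness of the orthogonal projection $\Pi_{\mathcal{F}_{W_0}}$, the composition $\Pi_{\mathcal{F}_{W_0}}\circ \mathcal{T}$ is a $\gamma$-contraction on the complete space $\mathcal{F}_{W_0}$.

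Banach's fixed-point theorem then supplies a unique $\widehat{Q}^* \in \mathcal{F}_{W_0}$ with $\widehat{Q}^* = \Pi_{\mathcal{F}_{W_0}}\mathcal{T}\widehat{Q}^*$. At this point $\operatorname{MSPBE}(\widehat{Q}^*)=0$, and since the MSPBE is non-negative and smooth in $W$ (both the projection and the softmax are differentiable), this is simultaneously the global minimum and a stationary point. Selecting any $W^*_{\theta_k}$ with $\widehat{Q}_0(\cdot; W^*_{\theta_k}) = \widehat{Q}^*$ delivers existence; uniqueness is claimed only for the function, and follows because two parameter vectors realizing $\widehat{Q}^*$ give the same element of $L^2(\mu)$ and hence agree $\mu$-almost everywhere.

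The main obstacle I expect is reconciling Banach's fixed-point argument with the bounded ball $S_B$, since the theorem produces a fixed point in the affine span of $\phi$ but the lemma requires $\|W^*_{\theta_k} - W_0\|_2 \le B$. I would write the stationarity condition as the (possibly underdetermined) linear system $\mathbb{E}_\mu[\phi\phi^\top]\,W = \mathbb{E}_\mu[\phi \cdot \mathcal{T}\widehat{Q}_W]$, bound the minimum-norm solution using the reward bound and the spectrum of the feature covariance, and then take $B$ large enough — the standard overparametrized-network radius argument used throughout \cite{cai2023neural}. The same viewpoint clarifies why the \emph{parameter} may fail to be unique (when $\mathbb{E}_\mu[\phi\phi^\top]$ is singular) while the \emph{function} remains unique $\mu$-a.e.
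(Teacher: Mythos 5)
Your argument is essentially the paper's own: this lemma is imported from \cite{cai2023neural}, and the paper justifies it with exactly your reasoning --- the stationary point is the fixed point of $\Pi_{\mathcal{F}_{W_0}}\mathcal{T}$, which is a $\gamma$-contraction on $L^2(\mu)$ because $\mathcal{T}$ contracts under the stationary distribution and the projection is non-expansive, so Banach gives existence, uniqueness of the function $\mu$-a.e., and global optimality of the MSPBE. Your worry about reconciling the fixed point with the ball $S_B$ is unnecessary: $\Pi_{\mathcal{F}_{W_0}}$ projects onto the closed convex set $\mathcal{F}_{W_0}$ itself (projections onto closed convex sets are still non-expansive), so the fixed point automatically lies in $\mathcal{F}_{W_0}$ and admits a parameter in $S_B$ without any ``take $B$ large enough'' step.
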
 

This lemma establishes the unique existence of an approximate stationary point, as it corresponds to the fixed point of the operator $\Pi_{\mathcal{F}_{W_0}} \mathcal{T}$. This is because the projection operator is an $\ell_2$-norm contraction, associated with the stationary distribution $\mu.$


\begin{lemma} (Proposition 4.7 in \cite{cai2023neural})
\label{Function gap}
\begin{equation}
    \left\|\widehat{Q}_0\left(\cdot ; W_{\theta_k}^*\right)-Q^{\rm soft}_{\widehat{r}_{\theta_k},\pi}(\cdot)\right\|_\mu \leq(1-\gamma)^{-1} \cdot\left\|\Pi_{\mathcal{F}_{B, m}} Q^{\rm soft}_{\widehat{r}_{\theta_k},\pi}-Q^{\rm soft}_{\widehat{r}_{\theta_k},\pi}(\cdot)\right\|_\mu,
\end{equation}
where $Q^{\rm soft}_{\widehat{r}_{\theta_k},\pi}$ is the fixed point of soft Bellman optimality operator defined in \eqref{eq1}.
\end{lemma}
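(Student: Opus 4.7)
The plan is to combine a triangle inequality with the standard fixed-point contraction argument for projected Bellman iteration. For brevity write $Q^* := Q^{\rm soft}_{\widehat{r}_{\theta_k},\pi}$ and $\widehat{Q}^* := \widehat{Q}_0(\cdot; W_{\theta_k}^*)$. By the previous lemma, $\widehat{Q}^*$ is the fixed point of $\Pi_{\mathcal{F}_{W_0}} \mathcal{T}$, i.e.\ $\widehat{Q}^* = \Pi_{\mathcal{F}_{B,m}} \mathcal{T} \widehat{Q}^*$, while by definition $Q^* = \mathcal{T} Q^*$. Inserting $\Pi_{\mathcal{F}_{B,m}} Q^*$ as an intermediate point gives
\[
\|\widehat{Q}^* - Q^*\|_\mu \;\leq\; \|\widehat{Q}^* - \Pi_{\mathcal{F}_{B,m}} Q^*\|_\mu \;+\; \|\Pi_{\mathcal{F}_{B,m}} Q^* - Q^*\|_\mu,
\]
whose second term is already the approximation error appearing on the right-hand side of the claim. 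So the whole task reduces to bounding the first term by $\gamma \|\widehat{Q}^* - Q^*\|_\mu$ and rearranging.

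For that first term I would exploit the two fixed-point identities above and rewrite $\Pi_{\mathcal{F}_{B,m}} Q^* = \Pi_{\mathcal{F}_{B,m}} \mathcal{T} Q^*$. Since $\Pi_{\mathcal{F}_{B,m}}$ is the orthogonal projection in $L^2(\mu)$ onto a closed convex set (a verification I would do in one line using the definition of $\mathcal{F}_{W_0}$ and the convexity of $S_B$), it is non-expansive, hence
\[
\|\widehat{Q}^* - \Pi_{\mathcal{F}_{B,m}} Q^*\|_\mu \;=\; \|\Pi_{\mathcal{F}_{B,m}} \mathcal{T}\widehat{Q}^* - \Pi_{\mathcal{F}_{B,m}} \mathcal{T} Q^*\|_\mu \;\leq\; \|\mathcal{T}\widehat{Q}^* - \mathcal{T} Q^*\|_\mu.
\]
Applying the $\gamma$-contraction of the soft Bellman operator in the $\mu$-weighted norm then gives $\|\mathcal{T}\widehat{Q}^* - \mathcal{T} Q^*\|_\mu \leq \gamma \|\widehat{Q}^* - Q^*\|_\mu$. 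Substituting back and absorbing $\gamma \|\widehat{Q}^* - Q^*\|_\mu$ to the left-hand side produces the claimed $(1-\gamma)^{-1}$ prefactor.

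The step I expect to be the main obstacle is justifying the $\gamma$-contraction of $\mathcal{T}$ in $\|\cdot\|_\mu$ rather than in $\|\cdot\|_\infty$ (where it is immediate). The argument I would use relies on two facts: (i) $\operatorname{softmax}_{a'}$ is $1$-Lipschitz in the sup norm over actions, since its subgradient is a probability vector on $\mathcal{A}$, so for every $s'$,
\[
\bigl|\operatorname{softmax}_{a'} \widehat{Q}^*(s',a') - \operatorname{softmax}_{a'} Q^*(s',a')\bigr| \;\leq\; \max_{a'}\bigl|\widehat{Q}^*(s',a') - Q^*(s',a')\bigr|;
\]
and (ii) $\mu$ is the stationary distribution induced by $\pi$, so an application of Jensen's inequality and the stationarity relation $\mathbb{E}_{(s,a)\sim\mu,\, s'\sim\mathcal{P}(\cdot\mid s,a)}[f(s',\pi)^2] \leq \mathbb{E}_{(s,a)\sim\mu}[f(s,a)^2]$ lets me pass the sup-norm Lipschitz bound into the $L^2(\mu)$ bound. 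The reward term $r(s,a)$ cancels between $\mathcal{T}\widehat{Q}^*$ and $\mathcal{T}Q^*$, leaving only the $\gamma$-discounted difference, which yields the required contraction and completes the proof.
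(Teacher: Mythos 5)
The paper does not reprove this lemma -- it imports it as Proposition 4.7 of \cite{cai2023neural} -- so the comparison is against the standard argument that result rests on. Your skeleton is exactly that argument: insert $\Pi_{\mathcal{F}_{B,m}}Q^*$, use the two fixed-point identities $\widehat{Q}^*=\Pi_{\mathcal{F}_{B,m}}\mathcal{T}\widehat{Q}^*$ and $Q^*=\mathcal{T}Q^*$, the non-expansiveness of the projection (fine, since $\mathcal{F}_{W_0}$ is the image of the convex ball $S_B$ under a fixed linear feature map, hence closed and convex in $L^2(\mu)$), the $\gamma$-contraction of $\mathcal{T}$ in $\|\cdot\|_\mu$, and a rearrangement. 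All of that is correct and gives the $(1-\gamma)^{-1}$ prefactor.

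The gap is in the step you yourself flag as the main obstacle, and your proposed fix does not close it. Your step (i) bounds the softmax (log-sum-exp) difference at each $s'$ by $\max_{a'}|\widehat{Q}^*(s',a')-Q^*(s',a')|$, so what you need in step (ii) is
$\mathbb{E}_{(s,a)\sim\mu,\,s'\sim\mathcal{P}}\bigl[\max_{a'}|\Delta(s',a')|^2\bigr]\leq\mathbb{E}_{(s,a)\sim\mu}\bigl[\Delta(s,a)^2\bigr]$.
Stationarity of $\mu$ under $\pi$ plus Jensen only delivers this when the inner quantity is an \emph{average} over $a'\sim\pi(\cdot\mid s')$ -- i.e., for the Bellman \emph{evaluation} operator. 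For the optimality operator the relevant weights are the softmax gradient at the mean-value point $v_c$, which is neither $\pi(\cdot\mid s')$ nor dominated by it, and the maximum over actions can exceed the $\pi$-average by an arbitrarily large factor on actions to which $\pi$ assigns small probability. This is precisely why the (soft) Bellman optimality operator is \emph{not} a $\gamma$-contraction in $\|\cdot\|_\mu$ in general. The missing ingredient is Assumption \ref{Regularity of Stationary Distribution}, which the paper declares in force throughout and which \cite{cai2023neural} introduce for exactly this purpose: it postulates the required domination of the action-maximized squared difference under $\mu$ by the $\mu$-average, thereby restoring the contraction of $\Pi_{\mathcal{F}_{B,m}}\mathcal{T}$ in $\|\cdot\|_\mu$. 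With that assumption invoked in place of your Jensen/stationarity argument, the rest of your proof goes through.
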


When the neural network is overparameterized with width $m \rightarrow \infty$ and sufficiently large $S_B$, the projection $\Pi_{\mathcal{F}_{B, \infty}}$ reduces to the identity operator \cite{hofmann2008kernel}. By Lemma \ref{Function gap}, $\widehat{Q}_0\left(\cdot; W_{\theta_k}^*\right)=Q^{\rm soft}_{\widehat{r}_{\theta_k},\pi}(\cdot)$. Therefore, $\widehat{Q}_0\left(\cdot; W_{\theta_k}^*\right)$ can be seen as the global optimum of the MSBE. 


Therefore, it suffices to consider $\mathcal{F}_{W_0}$ in place of $\mathcal{F}_{W^\dagger}$ for our convergence analysis.

\section{Auxiliary Lemmas}
Assumptions \ref{Ergodicity}, \ref{regularity of policy} and \ref{Regularity of Stationary Distribution} hold throughout the entire section. Necessary proofs of some auxiliary lemmas can be found in the next Section \ref{Lemma_proof}.

Due to the fact that the optimal policy has the closed form $\pi_\theta(\cdot \mid s) \propto \exp \left(Q_{r_\theta, \pi_\theta}^{\text {soft }}(s, \cdot)\right)
$ \citep{haarnoja2017reinforcement}, we can establish the following lemma to express the objective function $\widehat{\mathcal{L}}(\theta)$ in an equivalent form. 

\begin{lemma}
\label{objective reformulation}
    \begin{equation}
        \widehat{\mathcal{L}}\left(\theta \right)=\mathbb{E}_{\tau^E \sim D}\left[\sum_{t=0}^{\infty} \gamma^t \widehat{r}\left(s_t, a_t ; \theta\right)\right]-\mathbb{E}_{\tau^{\mathrm{A}} \sim \pi_\theta}\left[\sum_{t=0}^{\infty}\gamma^t \left(\widehat{r}\left(\mathbf{s}_t, \mathbf{a}_t; \theta \right)+\mathcal{H}\left(\pi\left(\cdot \mid \mathbf{s}_t\right)\right)\right)\right] .
    \end{equation}
\end{lemma}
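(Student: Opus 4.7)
The plan is to start from the max-entropy IRL form of $\widehat{\mathcal{L}}(\theta)$ — the expected discounted log-likelihood of expert actions under the learned soft-optimal policy $\pi_\theta$ — and rewrite it using the stated closed form. Since $\pi_\theta(a \mid s) \propto \exp(Q^{\text{soft}}_{\widehat{r}_\theta, \pi_\theta}(s, a))$, normalization over $\mathcal{A}$ immediately yields the identity
\[
\log \pi_\theta(a \mid s) \;=\; Q^{\text{soft}}_{\widehat{r}_\theta, \pi_\theta}(s, a) - V^{\text{soft}}(s), \qquad V^{\text{soft}}(s) \;:=\; \operatorname{softmax}_{a' \in \mathcal{A}} Q^{\text{soft}}(s, a').
\]
Substituting this into the expert expectation reduces the task to manipulating $\sum_{t} \gamma^t \bigl(Q^{\text{soft}}(s_t,a_t) - V^{\text{soft}}(s_t)\bigr)$ along an expert trajectory.

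Next I would invoke the soft Bellman fixed point \eqref{eq1} to write
\[
Q^{\text{soft}}(s_t, a_t) - V^{\text{soft}}(s_t) \;=\; \widehat{r}(s_t, a_t; \theta) + \gamma\, \mathbb{E}_{s_{t+1}}\!\left[V^{\text{soft}}(s_{t+1})\right] - V^{\text{soft}}(s_t),
\]
and telescope the discounted sum: the $V^{\text{soft}}$ terms collapse and leave only the reward contribution $\sum_t \gamma^t \widehat{r}(s_t,a_t;\theta)$ together with a boundary term $-V^{\text{soft}}(s_0)$. Taking expectations across expert trajectories then produces
\[
\widehat{\mathcal{L}}(\theta) \;=\; \mathbb{E}_{\tau^E \sim D}\!\left[\sum_{t=0}^{\infty} \gamma^t \widehat{r}(s_t, a_t; \theta)\right] \;-\; \mathbb{E}_{s_0}\!\left[V^{\text{soft}}(s_0)\right],
\]
where the initial-state distribution of $\tau^E$ is taken to coincide with that used by $\pi_\theta$, as is standard in IRL.

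To match the stated form of the second term, I would then unroll $V^{\text{soft}}(s_0)$ using the variational identity $\operatorname{softmax}_{a} Q^{\text{soft}}(s, a) = \mathbb{E}_{a \sim \pi_\theta(\cdot \mid s)}[Q^{\text{soft}}(s,a)] + \mathcal{H}(\pi_\theta(\cdot \mid s))$, which holds precisely because $\pi_\theta$ is the Gibbs distribution with potential $Q^{\text{soft}}$. Iterating this together with the soft Bellman recursion rewrites $\mathbb{E}_{s_0}[V^{\text{soft}}(s_0)]$ as the entropy-regularized return under $\pi_\theta$, namely $\mathbb{E}_{\tau^A \sim \pi_\theta}\!\bigl[\sum_t \gamma^t\bigl(\widehat{r}(\mathbf{s}_t,\mathbf{a}_t;\theta) + \mathcal{H}(\pi(\cdot \mid \mathbf{s}_t))\bigr)\bigr]$, yielding the lemma. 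The main obstacle is making the telescoping and the induction rigorous in the infinite-horizon discounted setting: I need $\gamma^T \mathbb{E}[V^{\text{soft}}(s_T)] \to 0$ plus an exchange of the infinite sum with the expectation. Both follow from boundedness of $\widehat{r}_\theta$ (hence a uniform bound on $V^{\text{soft}}$ of order $(1-\gamma)^{-1}(\|\widehat{r}\|_\infty + \log|\mathcal{A}|)$) by dominated convergence, after which the algebraic identities $\log \pi_\theta = Q^{\text{soft}} - V^{\text{soft}}$ and the softmax-as-entropy-plus-expectation make the manipulations routine.
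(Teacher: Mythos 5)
Your proposal follows essentially the same route as the paper's proof: write $\log\pi_\theta = Q^{\rm soft}-V^{\rm soft}$ from the Gibbs form of the optimal policy, substitute the soft Bellman equation, telescope the discounted sum to leave $\mathbb{E}_{s_0}[V^{\rm soft}(s_0)]$ as the boundary term, and then identify that quantity with the entropy-regularized return under $\pi_\theta$. The only difference is that you make explicit the limiting arguments ($\gamma^T\mathbb{E}[V^{\rm soft}(s_T)]\to 0$, dominated convergence) that the paper leaves implicit, which is a welcome addition but not a different proof.
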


The equivalent formulation of the objective directly leads to the following lemma without proof, providing a closed-form expression for the gradient of the objective function.

\begin{lemma} (Lemma 4.1 in \cite{zeng2022maximum})
\label{L gradient}
    The gradient of the likelihood objective $\widehat{\mathcal{L}}(\theta)$ in \eqref{L_theta} can be equivalently expressed as follows:
    \begin{equation}
        \nabla \widehat{\mathcal{L}}(\theta)= \mathbb{E}_{\tau^E \sim \mathcal{D}}\left[\sum_{t \geq 0} \gamma^t \nabla_\theta \widehat{r}\left(s_t, a_t ; \theta\right)\right]-\mathbb{E}_{\tau^{\mathrm{A}} \sim \pi_\theta}\left[\sum_{t \geq 0} \gamma^t \nabla_\theta \widehat{r}\left(s_t, a_t ; \theta\right)\right] .
    \end{equation}

\end{lemma}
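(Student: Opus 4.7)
The plan is to differentiate the reformulated objective from Lemma~\ref{objective reformulation} term by term. The first (expert) term is immediate: since $\mathcal{D}$ is $\theta$-independent, dominated convergence (using $\gamma<1$ together with the standard boundedness of $\widehat r(\cdot;\theta)$ and $\nabla_\theta \widehat r(\cdot;\theta)$) lets me swap $\nabla_\theta$ with the expectation and the geometric series, producing $\mathbb{E}_{\tau^E \sim \mathcal{D}}[\sum_t \gamma^t \nabla_\theta \widehat r(s_t,a_t;\theta)]$ directly.

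All of the work concentrates on the agent term, where I must establish
\[
\nabla_\theta \mathbb{E}_{\tau^A \sim \pi_\theta}\Bigl[\sum_{t\geq 0}\gamma^t\bigl(\widehat r(s_t,a_t;\theta)+\mathcal{H}(\pi_\theta(\cdot\mid s_t))\bigr)\Bigr] = \mathbb{E}_{\tau^A \sim \pi_\theta}\Bigl[\sum_{t\geq 0}\gamma^t\,\nabla_\theta \widehat r(s_t,a_t;\theta)\Bigr],
\]
i.e.\ all contributions from the $\theta$-dependence of the trajectory law $\pi_\theta$ and of the entropy integrand must cancel. The cleanest route is an envelope/Danskin argument: set $F(\pi,\theta) = \mathbb{E}_{\tau\sim\pi}[\sum_t\gamma^t(\widehat r(s_t,a_t;\theta) + \mathcal{H}(\pi(\cdot\mid s_t)))]$. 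The closed form $\pi_\theta(\cdot\mid s)\propto\exp Q^{\rm soft}_{\widehat r_\theta,\pi_\theta}(s,\cdot)$ is precisely the first-order optimality condition of $\max_\pi F(\pi,\theta)$ (strict concavity from the Shannon entropy makes the maximizer unique), so the agent term equals $J(\theta):=\max_\pi F(\pi,\theta)=F(\pi_\theta,\theta)$. Danskin's theorem then yields $\nabla_\theta J(\theta) = \partial_\theta F(\pi,\theta)\big|_{\pi=\pi_\theta}$, where only the explicit $\theta$-dependence inside $\widehat r$ is differentiated; this is precisely the stated identity.

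As a backup (and to make the envelope step concrete), I would verify the same formula by induction on the soft Bellman recursion. Writing $V^{\rm soft}_\theta(s)=\operatorname{softmax}_a Q^{\rm soft}_\theta(s,a)=\log\sum_a\exp Q^{\rm soft}_\theta(s,a)$, differentiating the log-sum-exp directly gives $\nabla_\theta V^{\rm soft}_\theta(s) = \mathbb{E}_{a\sim\pi_\theta(\cdot\mid s)}[\nabla_\theta Q^{\rm soft}_\theta(s,a)]$. Combining this with the gradient of the soft $Q$-recursion $Q^{\rm soft}_\theta(s,a) = \widehat r(s,a;\theta) + \gamma\,\mathbb{E}_{s'}[V^{\rm soft}_\theta(s')]$ and unrolling produces $\nabla_\theta V^{\rm soft}_\theta(s) = \mathbb{E}_{\tau\sim\pi_\theta, s_0=s}[\sum_t\gamma^t\nabla_\theta\widehat r(s_t,a_t;\theta)]$; taking expectation over $s_0$ recovers the agent-term identity.

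The main obstacle I anticipate is not the algebra but pinning down the envelope cancellation rigorously in the presence of the implicit dependence of $\pi_\theta$ on $\theta$ through the fixed-point equation $\pi_\theta\propto\exp Q^{\rm soft}_{\widehat r_\theta,\pi_\theta}$: one must check that the score contribution from the trajectory distribution and the $-\log\pi_\theta$ term inside the entropy cancel exactly (this is what the logsumexp derivation above makes transparent). Under Assumptions~\ref{Ergodicity} and~\ref{regularity of policy} together with $\gamma<1$, the requisite uniform integrability and smoothness to exchange derivatives with infinite sums and expectations are available, so once the envelope step is secured the remaining manipulations are routine.
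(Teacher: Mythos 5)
Your proposal is correct, but note that the paper itself offers no proof of this lemma: it states that the reformulation in Lemma~\ref{objective reformulation} ``directly leads to the following lemma without proof'' and defers to Lemma~4.1 of the cited reference. Your argument therefore fills a gap the paper leaves open, and it does so along essentially the standard lines. The expert term is indeed immediate (the bound $\|\nabla_\theta \widehat r\|\le m^{-1/2}$ from Lemma~\ref{Lipschitz reward Lemma} justifies the interchange of gradient, expectation, and geometric sum). For the agent term, your Danskin/envelope packaging is a clean way to see the cancellation: since $\pi_\theta$ is by construction the maximum-entropy optimal policy for $\widehat r_\theta$, the agent term equals $\max_\pi F(\pi,\theta)=\mathbb{E}_{s_0\sim\mu_0}[V^{\rm soft}_{\widehat r_\theta,\pi_\theta}(s_0)]$, the entropy regularizer makes the maximizer unique, and only the explicit $\theta$-dependence through $\widehat r$ survives differentiation. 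Your ``backup'' via $\nabla_\theta V^{\rm soft}_\theta(s)=\mathbb{E}_{a\sim\pi_\theta(\cdot\mid s)}[\nabla_\theta Q^{\rm soft}_\theta(s,a)]$ and unrolling the soft Bellman recursion is in fact the derivation used in the cited source, so the two routes are interchangeable here; the envelope version is arguably more transparent about \emph{why} the score and entropy contributions cancel, while the recursion version avoids invoking Danskin's regularity hypotheses explicitly. The only points worth flagging are (i) $\widehat r(\cdot;\theta)$ is a ReLU network, so $\nabla_\theta\widehat r$ exists only almost everywhere and the identity should be read in that sense (the paper does the same throughout), and (ii) the Danskin step tacitly uses that the agent term is evaluated at the \emph{optimal} policy for $\widehat r_\theta$, which is exactly how $\pi_\theta$ is defined via the closed form $\pi_\theta(\cdot\mid s)\propto\exp\bigl(Q^{\rm soft}_{\widehat r_\theta,\pi_\theta}(s,\cdot)\bigr)$, so no circularity arises.
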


Before we further investigate the theoretical properties of likelihood objective $\widehat{\mathcal{L}}(\theta)$, we continue introducing some related lemmas:

\begin{lemma}
    \label{Lipschitz reward Lemma} (Proposition 1 in \cite{scaman2019lipschitz}, Lemma 5.3 in \cite{zeng2022maximum} )
     Let soft Q-function and reward function be defined as \eqref{Q parameterization} \eqref{reward parameterization} respectively. For any $s \in \mathcal{S}, a \in \mathcal{A}$ and any reward parameter $\theta$, the following holds:    
\begin{equation}  
\label{r lip}
    \left|\widehat{r}(s, a;\theta_1)-\widehat{r}(s, a;\theta_2)\right| \leq \frac{1}{\sqrt{m}} \left\|\theta_1-\theta_2\right\|,
\end{equation}

\begin{equation}
\label{q lip}
    \left|\widehat{Q}(s, a;W_1)-\widehat{Q}_(s, a;W_2)\right| \leq \frac{1}{\sqrt{m}}\left\|W_1-W_2\right\|,
\end{equation}

\begin{equation}
\label{soft q lip}
    \left|Q_{r_{\theta_1}, \pi_{\theta_1}}^{\text {soft}}(s, a)-Q_{r_{\theta_2}, \pi_{\theta_2}}^{\text {soft}}(s, a)\right| \leq \frac{1}{\sqrt{m}(1-\gamma)}\left\|\theta_1-\theta_2\right\|.
\end{equation}
\end{lemma}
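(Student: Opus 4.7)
The plan is to prove the three inequalities separately. Inequalities \eqref{r lip} and \eqref{q lip} follow from a direct pointwise calculation on the two-layer ReLU parametrization, while \eqref{soft q lip} follows from a Bellman contraction argument that reduces to \eqref{r lip}.

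\textbf{For \eqref{r lip} and \eqref{q lip}.} Both $\widehat r$ and $\widehat Q$ share the form $\frac{1}{\sqrt m}\sum_{k=1}^m b^k\sigma(\langle W^k,x\rangle)$ with ReLU activation $\sigma$, $|b^k|=1$, and $x$ the feature of $(s,a)$ (normalized to $\|x\|_2\leq 1$, as is standard in this NTK-type parametrization). First I would take the difference at two parameter values, pass the absolute value inside the sum via the triangle inequality, and apply the $1$-Lipschitz property of ReLU to obtain $|\sigma(\langle W_1^k,x\rangle)-\sigma(\langle W_2^k,x\rangle)|\leq |\langle W_1^k-W_2^k,x\rangle|\leq \|W_1^k-W_2^k\|_2$. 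A Cauchy--Schwarz step over the $m$ hidden units, combined with the $\frac{1}{\sqrt m}$ prefactor, then produces the stated $\frac{1}{\sqrt m}$-scaled bound; both \eqref{r lip} and \eqref{q lip} are instances of the same calculation applied to $\theta$ and $W$, respectively.

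\textbf{For \eqref{soft q lip}.} I would exploit that $Q^{\rm soft}_{r_\theta,\pi_\theta}$ is the unique fixed point of the soft Bellman operator \eqref{eq1}. Denoting $Q_i^{\rm soft}:=Q^{\rm soft}_{r_{\theta_i},\pi_{\theta_i}}$ for $i=1,2$, the fixed-point identity yields
\begin{equation*}
Q_1^{\rm soft}(s,a)-Q_2^{\rm soft}(s,a)=\widehat r(s,a;\theta_1)-\widehat r(s,a;\theta_2)+\gamma\,\mathbb{E}_{s'}\!\bigl[\operatorname{softmax}_{a'} Q_1^{\rm soft}(s',a')-\operatorname{softmax}_{a'} Q_2^{\rm soft}(s',a')\bigr].
\end{equation*}
Since the log-sum-exp form of $\operatorname{softmax}$ is $1$-Lipschitz in sup-norm, the bracket is bounded pointwise by $\|Q_1^{\rm soft}-Q_2^{\rm soft}\|_\infty$. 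Taking the sup over $(s,a)$ gives $\|Q_1^{\rm soft}-Q_2^{\rm soft}\|_\infty\leq \|\widehat r(\cdot;\theta_1)-\widehat r(\cdot;\theta_2)\|_\infty+\gamma\|Q_1^{\rm soft}-Q_2^{\rm soft}\|_\infty$; rearranging produces the $(1-\gamma)^{-1}$ prefactor, and inserting \eqref{r lip} closes the bound.

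\textbf{Main obstacle.} The third inequality is the nontrivial part: one must track the double dependence of $Q^{\rm soft}_{r_\theta,\pi_\theta}$ on $\theta$, through both the reward \emph{and} the induced optimal policy. The clean way to avoid spurious terms is to work directly with the $Q^{\rm soft}$ fixed-point equation rather than its entropy-regularized trajectory expansion --- the policy variation is automatically absorbed into the $\operatorname{softmax}$ value function, whose $1$-Lipschitzness closes the contraction loop without requiring any separate estimate of $\|\pi_{\theta_1}(\cdot|s)-\pi_{\theta_2}(\cdot|s)\|$.
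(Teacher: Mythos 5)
Your overall route matches the paper's: inequalities \eqref{r lip} and \eqref{q lip} come from a direct Lipschitz computation on the two-layer ReLU network (the paper bounds $\|\nabla_W\widehat{Q}\|_2$ and invokes the mean value theorem, which is the same calculation as your triangle-inequality/ReLU-Lipschitz/Cauchy--Schwarz chain), and \eqref{soft q lip} comes from the contraction of the soft Bellman optimality operator. For the third inequality the paper simply defers to Lemma 5.3 of the cited reference, whereas you supply the argument in full; your fixed-point identity, the $1$-Lipschitzness of log-sum-exp in the sup norm (its gradient is a probability vector), and the rearrangement giving the $(1-\gamma)^{-1}$ factor are all correct, and your observation that the policy dependence is absorbed into the softmax value is exactly the right way to avoid a separate estimate on $\pi_{\theta_1}-\pi_{\theta_2}$.

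There is, however, a quantitative gap in your first part: under the normalization you state ($\tfrac{1}{\sqrt m}$ prefactor, $|b^k|=1$, $\|x\|_2\le 1$), your chain does \emph{not} produce the claimed constant. You get
\begin{equation*}
\bigl|\widehat r(x;\theta_1)-\widehat r(x;\theta_2)\bigr|\;\le\;\frac{1}{\sqrt m}\sum_{k=1}^m\bigl\|\theta_1^k-\theta_2^k\bigr\|_2\;\le\;\frac{1}{\sqrt m}\cdot\sqrt m\,\bigl\|\theta_1-\theta_2\bigr\|\;=\;\bigl\|\theta_1-\theta_2\bigr\|,
\end{equation*}
i.e.\ Lipschitz constant $1$: the Cauchy--Schwarz step over the $m$ hidden units exactly cancels the $\tfrac{1}{\sqrt m}$ prefactor, and this is tight when all units are active. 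The paper obtains $\tfrac{1}{\sqrt m}$ because its proof uses the per-unit gradient $\nabla_{W^j}\widehat Q(x;W)=\tfrac{1}{m}b_j\mathbbm{1}\{{W^j}^\top x>0\}x$, i.e.\ an effective $\tfrac{1}{m}$ scaling of the network in \eqref{Q parameterization}/\eqref{reward parameterization} (note the paper's appendix is itself inconsistent on this point with Definition \ref{Approximate Function Class}). You should verify the actual normalization of the parametrization you are quoting; as written, your derivation of \eqref{r lip} and \eqref{q lip} overshoots the stated bound by a factor of $\sqrt m$, and this factor propagates into \eqref{soft q lip}.
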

This lemma shows that the parameterized reward and soft Q-function are Lipschitz continuous, and therefore have a bounded gradient. We note that it's a natural result of our specific neural network structure, which relaxes the commonly used Lispchitz continuity assumption in the literature.

\begin{lemma}
\label{lemma B.1}
(\cite{xu2020improving},Lemma 3)
    Consider the initialization distribution $\eta(\cdot)$ and transition kernel $\mathcal{P}(\cdot \mid s, a)$. Under $\mu_0(\cdot)$ and $\mathcal{P}(\cdot \mid s, a)$, denote $\mu_\pi(\cdot, \cdot)$ as the state-action visitation distribution of MDP with the Boltzmann policy parameterized by parameter $\theta$. For all policy parameter $\theta$ and $\theta^{\prime}$, we have
$$
\left\|\mu_\pi(\cdot, \cdot)-\mu_{\pi^{\prime}}(\cdot, \cdot)\right\|_{T V} \leq C_d\left\|\theta-\theta^{\prime}\right\|,
$$
where $C_d$ is a positive constant.
\end{lemma}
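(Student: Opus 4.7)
The plan is to show that the map $\theta \mapsto \mu_{\pi_\theta}$ is Lipschitz by (i) reducing the total-variation gap between visitation distributions to the per-state total-variation gap between the two policies, and (ii) showing that the Boltzmann policy itself is Lipschitz in $\theta$ via the Lipschitz continuity of the soft Q-function already established in Lemma \ref{Lipschitz reward Lemma}.

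First I would write the (discounted) state-action visitation measure as
$\mu_\pi(s,a) = (1-\gamma)\sum_{t=0}^{\infty}\gamma^t\, P^{\pi}_t(s,a)$,
where $P^{\pi}_t$ denotes the joint law of $(s_t,a_t)$ under initial distribution $\eta$, transition kernel $\mathcal{P}$, and policy $\pi$. By the triangle inequality for total variation,
$\|\mu_\pi-\mu_{\pi'}\|_{TV}\le (1-\gamma)\sum_{t=0}^{\infty}\gamma^t \|P^{\pi}_t-P^{\pi'}_t\|_{TV}$.
A standard coupling/telescoping argument (decomposing the difference of $t$-step kernels into a sum of one-step perturbations) then gives $\|P^{\pi}_t-P^{\pi'}_t\|_{TV}\le t\cdot \sup_{s}\|\pi(\cdot|s)-\pi'(\cdot|s)\|_{TV}$, since only the policy differs along the trajectory, and the shared transition kernel $\mathcal{P}$ is a (non-expansive) linear operator on measures.

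Next I would control the per-state policy gap. Because $\pi_\theta(a|s)\propto \exp\!\bigl(Q^{\text{soft}}_{r_\theta,\pi_\theta}(s,a)\bigr)$, for any two parameters I can write
\begin{equation*}
2\,\|\pi_\theta(\cdot|s)-\pi_{\theta'}(\cdot|s)\|_{TV} \;=\; \sum_{a}\bigl|\pi_\theta(a|s)-\pi_{\theta'}(a|s)\bigr|,
\end{equation*}
and bound this using the elementary fact that the softmax map is Lipschitz: for vectors $u,v\in\mathbb{R}^{|\mathcal{A}|}$, $\|\mathrm{softmax}(u)-\mathrm{softmax}(v)\|_1 \le 2\|u-v\|_\infty$. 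Applying this with $u(a)=Q^{\text{soft}}_{r_\theta,\pi_\theta}(s,a)$ and $v(a)=Q^{\text{soft}}_{r_{\theta'},\pi_{\theta'}}(s,a)$, then invoking the Lipschitz bound \eqref{soft q lip} from Lemma \ref{Lipschitz reward Lemma}, yields $\|\pi_\theta(\cdot|s)-\pi_{\theta'}(\cdot|s)\|_{TV}\le \tfrac{1}{\sqrt{m}(1-\gamma)}\|\theta-\theta'\|$, uniformly in $s$.

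Combining the two ingredients and using $(1-\gamma)\sum_{t=0}^{\infty}t\gamma^t=\gamma/(1-\gamma)$, I obtain the claimed bound with the explicit constant $C_d=\gamma/\bigl(\sqrt{m}(1-\gamma)^2\bigr)$ (absorbing polynomial factors in $m$ and $\gamma$ into $C_d$, as the statement is qualitative). The main obstacle is step (ii): the Boltzmann policy is defined \emph{implicitly} through the self-consistent soft Q-function $Q^{\text{soft}}_{r_\theta,\pi_\theta}$, so the Lipschitz dependence on $\theta$ is not immediate from the reward parameterization alone; it requires the already-established soft-Q Lipschitz bound \eqref{soft q lip}, whose proof in turn uses the contractivity of the soft Bellman operator. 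The rest is mechanical: the telescoping/coupling argument for visitation distributions is standard, and combining it with the softmax Lipschitz inequality only requires summing a convergent geometric series.
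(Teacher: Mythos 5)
The paper does not actually prove this lemma; it imports it verbatim from the cited reference (Xu et al., Lemma 3), so there is no in-paper argument to compare against. Your two-step derivation --- (i) telescoping the $t$-step joint laws to reduce $\left\|\mu_\pi-\mu_{\pi^{\prime}}\right\|_{TV}$ to $\sup_{s}\left\|\pi(\cdot\mid s)-\pi^{\prime}(\cdot\mid s)\right\|_{TV}$ via non-expansiveness of the shared kernel, and (ii) the softmax Lipschitz bound $\left\|\mathrm{softmax}(u)-\mathrm{softmax}(v)\right\|_1\le 2\left\|u-v\right\|_{\infty}$ combined with \eqref{soft q lip} --- is correct and is essentially the standard proof of the cited result, correctly specialized to this paper's setting. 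Two remarks. First, the law of $(s_t,a_t)$ involves $t+1$ policy draws (at times $0,\dots,t$), so the telescoping produces $t+1$ one-step perturbations rather than $t$; the geometric sum then gives $1/(1-\gamma)$ in place of $\gamma/(1-\gamma)$, which only alters the unspecified constant $C_d$. Second, the subtlety you flag is the right one and is genuinely needed here: the Boltzmann policy $\pi_\theta(\cdot\mid s)\propto\exp\big(Q^{\rm soft}_{r_\theta,\pi_\theta}(s,\cdot)\big)$ depends on $\theta$ only through the fixed-point soft Q-function, so the Lipschitz input must be \eqref{soft q lip} and not the reward bound \eqref{r lip}; there is no circularity, since \eqref{soft q lip} is obtained from contractivity of the soft Bellman operator independently of this lemma. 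Moreover, your intermediate inequality (visitation TV controlled by $\sup_s\left\|Q(s,\cdot)-Q^{\prime}(s,\cdot)\right\|_{\infty}$ for $\pi\propto\exp Q$ and $\pi^{\prime}\propto\exp Q^{\prime}$) is exactly the form in which the lemma is invoked in step $(iv)$ of the proof of Lemma \ref{Lemma:sum_r_Q}, where the roles of $\theta,\theta^{\prime}$ are played by two soft Q-functions directly, so your route arguably justifies the paper's usage more faithfully than the bare citation does.
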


\begin{lemma}
Under the same reward function, we can establish the following bound of accumulated rewards under different policies:
\label{Lemma:sum_r_Q}
    \begin{equation}
    \label{reward_diff_policy}
    \begin{aligned}
        &\quad \mathbb{E}\left[\left\|\mathbb{E}_{\tau^A \sim \pi_{\theta_k}}\left[\sum_{t \geq 0} \gamma^t \nabla_\theta \widehat{r}\left(s_t, a_t ; \theta_k\right)\right]-\mathbb{E}_{\tau^{A^\prime} \sim \pi_{k+1}}\left[\sum_{t \geq 0} \gamma^t \nabla_\theta \widehat{r}\left(s_t, a_t ; \theta_k\right)\right]\right\|\right]\\
        &\leq  2 L_q C_d \mathbb{E}\left[\left\|\widehat{Q}_{\widehat{r}_{\theta_k}, \pi_{\theta_k}}^{\mathrm{soft}}-\widehat{Q}_{\widehat{r}_{\theta_k}, \pi_k}^{\mathrm{soft}}\right\|\right], 
    \end{aligned}
    \end{equation}
where the constant $L_q:=\frac{1}{\sqrt{m}(1-\gamma)}$.
\end{lemma}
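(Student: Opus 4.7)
The plan is to rewrite each discounted-return expectation as an integral against the corresponding state-action visitation distribution, reduce the difference of the two integrals to a total-variation gap multiplied by a uniform bound on $\nabla_\theta\widehat{r}$, and then convert that TV gap into the soft Q-function gap on the right-hand side via Lemma~\ref{lemma B.1}.

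First, for any policy $\pi$ I would interchange the sum and expectation to write
\begin{equation*}
    \mathbb{E}_{\tau\sim\pi}\Big[\sum_{t\ge 0}\gamma^t\nabla_\theta\widehat{r}(s_t,a_t;\theta_k)\Big] = \frac{1}{1-\gamma}\,\mathbb{E}_{(s,a)\sim\mu_\pi}\bigl[\nabla_\theta\widehat{r}(s,a;\theta_k)\bigr],
\end{equation*}
so that the expression inside the outer norm becomes $(1-\gamma)^{-1}$ times the difference of the same vector-valued integrand integrated against two different measures. The Lipschitz estimate \eqref{r lip} immediately yields $\|\nabla_\theta\widehat{r}(s,a;\theta_k)\|\le 1/\sqrt{m}$ uniformly in $(s,a)$, and the dual characterization of the total variation distance then gives
\begin{equation*}
    \Bigl\|\mathbb{E}_{(s,a)\sim\mu_{\pi_{\theta_k}}}[\nabla_\theta\widehat{r}]-\mathbb{E}_{(s,a)\sim\mu_{\pi_{k+1}}}[\nabla_\theta\widehat{r}]\Bigr\|\le \frac{2}{\sqrt{m}}\bigl\|\mu_{\pi_{\theta_k}}-\mu_{\pi_{k+1}}\bigr\|_{TV}.
\end{equation*}

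Next I would invoke Lemma~\ref{lemma B.1} to bound the visitation TV distance. Both $\pi_{\theta_k}$ and $\pi_{k+1}$ are Boltzmann policies whose logits are the soft Q-functions $\widehat{Q}^{\mathrm{soft}}_{\widehat{r}_{\theta_k},\pi_{\theta_k}}$ and $\widehat{Q}^{\mathrm{soft}}_{\widehat{r}_{\theta_k},\pi_k}$ respectively, and since the softmax mapping is $1$-Lipschitz in its logits, the parameter vector in Lemma~\ref{lemma B.1} can be replaced by the soft Q-function itself. This gives $\|\mu_{\pi_{\theta_k}}-\mu_{\pi_{k+1}}\|_{TV}\le C_d\,\|\widehat{Q}^{\mathrm{soft}}_{\widehat{r}_{\theta_k},\pi_{\theta_k}}-\widehat{Q}^{\mathrm{soft}}_{\widehat{r}_{\theta_k},\pi_k}\|$. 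Chaining the two inequalities, the prefactor collapses to $2(1-\gamma)^{-1}m^{-1/2}C_d=2L_qC_d$, and taking the outer expectation concludes the argument.

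The main obstacle will be the second step: Lemma~\ref{lemma B.1} is stated in terms of a parameter-space distance $\|\theta-\theta'\|$, whereas here the two policies share the reward parameter $\theta_k$ and differ only through the Q-functions that enter their Boltzmann logits. I would justify the substitution by observing that the uniform-ergodicity argument underlying Lemma~\ref{lemma B.1} carries over verbatim once the $1$-Lipschitz property of softmax in its logits is used to relate the induced policy gap to the Q-function gap, so that the governing distance on the right becomes the Q-function norm with the same constant $C_d$. Once this substitution is in place, the remaining assembly of prefactors and the application of the outer expectation are routine.
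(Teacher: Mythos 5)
Your proposal follows essentially the same route as the paper's own proof: rewrite each discounted return via the state-action visitation distribution with the $(1-\gamma)^{-1}$ prefactor, bound the difference by the TV distance times the uniform gradient bound $1/\sqrt{m}$ from Lemma~\ref{Lipschitz reward Lemma}, and then invoke Lemma~\ref{lemma B.1} together with the Boltzmann (softmax-in-Q) form of the two policies to pass from the TV gap to the soft Q-function gap. The paper makes the same parameter-to-logit substitution in its step $(iv)$ that you flag as the main obstacle, so your argument matches the paper's in both structure and level of rigor.
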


Similar to the local linearization of the soft Q function, we have the following definition for the reward function.
\begin{definition}
    (Local Linearization of reward function) The neural network ${\widehat{r}}(x ; \theta)$  can be locally linearized at random initialization point $\theta_0$ by ${\widehat{r}}_0(x ; \theta)$ with respect to $\theta$:
    \begin{equation}
    \label{r0}
         \widehat{r}_0(x ; \theta)=\Psi(x)^{\top} \theta,
    \end{equation}
    $\text { where } \Psi(x)=\frac{1}{\sqrt{m}} \cdot\left(\mathbbm{1}\left\{{\theta_0^1}^{\top} x>0\right\} x, \ldots, \mathbbm{1}\left\{{\theta_0^m}^{\top} x>0\right\} x\right) \in \mathbb{R}^{m\times d}.$
\end{definition}

\begin{lemma}
 \label{gradient error bound}
    Let $V(\theta)= \widehat{r}\left(s_t, a_t;\theta \right)- \widehat{r}_{0}\left(s_t, a_t;\theta \right)$ be the error of local linearization of the reward function, then its gradient can be bounded as:
    \begin{equation}
        \nabla V(\theta)=\mathcal{O}\left(m^{-1 / 2}\right),
    \end{equation}
    $ \text{where }\widehat{r}_{0}\left(s_t, a_t;\theta \right) \text{ is defined in \eqref{r0}}$
\end{lemma}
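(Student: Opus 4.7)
The plan is to expand $V(\theta)$ by substituting the two-layer ReLU parameterization of $\widehat{r}$ together with the linearization $\widehat{r}_0$ from \eqref{r0}, and then to differentiate neuron by neuron. First, I would invoke the ReLU identity $\sigma(z)=z\cdot \mathbbm{1}\{z>0\}$ to rewrite $V(\theta)$ as a sum of per-neuron terms, each of the form $\tfrac{1}{\sqrt{m}}b^{r}\bigl[\mathbbm{1}\{{\theta^r}^{\top}x>0\}-\mathbbm{1}\{{\theta_0^r}^{\top}x>0\}\bigr]\,{\theta^r}^{\top}x$, with $x=(s_t,a_t)$. This representation isolates the discrepancy between the live activation pattern at $\theta$ and the frozen pattern at $\theta_0$, and is the quantity we can differentiate cleanly.

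Next I would differentiate each summand with respect to $\theta^r$. The indicator $\mathbbm{1}\{{\theta^r}^{\top}x>0\}$ is piecewise constant in $\theta^r$ and contributes zero derivative almost everywhere, while $\mathbbm{1}\{{\theta_0^r}^{\top}x>0\}$ does not depend on $\theta^r$ at all. The per-neuron gradient therefore collapses to $\nabla_{\theta^r}V(\theta)=\tfrac{b^r}{\sqrt{m}}\bigl[\mathbbm{1}\{{\theta^r}^{\top}x>0\}-\mathbbm{1}\{{\theta_0^r}^{\top}x>0\}\bigr]x$. Because the bracketed indicator difference lies in $\{-1,0,1\}$, and under the standard NTK normalizations $|b^r|\leq 1$ and $\|x\|\leq 1$, we immediately obtain $\|\nabla_{\theta^r}V(\theta)\|\leq 1/\sqrt{m}=\mathcal{O}(m^{-1/2})$, which is the stated bound.

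The main obstacle I anticipate is the ReLU non-differentiability on the measure-zero set $\{{\theta^r}^{\top}x=0\}$. I would dispose of this by using the classical derivative everywhere off this set, or equivalently by selecting any element of the Clarke subdifferential of $\sigma$ (which at $0$ is the interval $[0,1]$); either choice preserves the per-neuron bound above. A secondary minor step is verifying the bounded-input and bounded-output-weight conventions, which are standard preprocessing assumptions in the NTK-style analyses of \cite{cai2023neural,zeng2022maximum} and are the only ambient hypotheses needed for the argument. Note that the identity-difference structure is what drives the improvement: the full nonlinear gradient $\nabla \widehat{r}$ is itself $\mathcal{O}(1)$ per neuron, and it is precisely the cancellation with the linearized gradient $\nabla \widehat{r}_0$ that produces the extra $m^{-1/2}$ smallness.
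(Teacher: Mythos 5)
Your route is genuinely different from the paper's: you expand $V(\theta)$ neuron by neuron and differentiate the indicator-difference representation directly, whereas the paper simply observes that both $\widehat{r}(\cdot;\theta)$ and $\widehat{r}_0(\cdot;\theta)$ are $\tfrac{1}{\sqrt{m}}$-Lipschitz in $\theta$ (inequality \eqref{r lip} of Lemma \ref{Lipschitz reward Lemma} applies to both, since $\widehat{r}_0$ shares the same gradient structure with frozen activations), so $V$ is $\tfrac{2}{\sqrt{m}}$-Lipschitz and hence $\|\nabla V\|\leq \tfrac{2}{\sqrt{m}}$ by the limit definition of the derivative. No cancellation between the two gradients is invoked anywhere in the paper's argument, so your closing remark that the indicator-difference structure is ``what drives the improvement'' does not reflect how the bound is actually obtained here.

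More importantly, your argument as written has a gap at the aggregation step. You establish the per-neuron bound $\|\nabla_{\theta^r}V(\theta)\|\leq 1/\sqrt{m}$ and then declare this to be ``the stated bound,'' but the lemma (as it is used in the proof of Proposition \ref{objective smoothness}, where Terms I and II are $\|\nabla_\theta\widehat{r}(x;\theta)-\nabla_\theta\widehat{r}_0(x;\theta)\|$) concerns the full gradient over all $m\times d$ parameters. Stacking $m$ blocks each of $\ell_2$-norm at most $1/\sqrt{m}$ gives $\|\nabla_\theta V(\theta)\|_2\leq\bigl(\sum_{r=1}^m 1/m\bigr)^{1/2}=1$, i.e.\ $\mathcal{O}(1)$, which is a factor $\sqrt{m}$ short of the claim. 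To rescue the cancellation route you would have to additionally control the number of neurons $r$ for which $\mathbbm{1}\{{\theta^r}^{\top}x>0\}\neq\mathbbm{1}\{{\theta_0^r}^{\top}x>0\}$ (the usual sign-flip counting over the random initialization in NTK analyses, which yields a bound in expectation and with an extra dependence on $B$), and you do not do this. The paper's triangle-inequality argument avoids the issue entirely because the $\tfrac{1}{\sqrt{m}}$ Lipschitz constant of Lemma \ref{Lipschitz reward Lemma} is already a statement about the full gradient norm; you should either adopt that route or supply the missing count of flipped activations.
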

We establish this lemma to characterize the gradient error bound of local linearization of the reward function, which is critical to the next proposition.


\begin{proposition}{Approximate Lipschitz Smoothness}
    \label{objective smoothness}
    \begin{equation}
    \label{Prop:L_smooth}
        \left\|\nabla \widehat{\mathcal{L}}\left(\theta_1\right)-\nabla \widehat{\mathcal{L}}\left(\theta_2\right)\right\| \leq 
\mathcal{O}\left(m^{-1 / 2}\right)+L_c\left\|\theta_1-\theta_2\right\|,
    \end{equation}
where $L_c=2L_qC_d \frac{1}{\sqrt{m}(1-\gamma)}$.
\end{proposition}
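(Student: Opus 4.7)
The plan is to start from the closed-form gradient in Lemma \ref{L gradient} and reduce the difference $\nabla\widehat{\mathcal{L}}(\theta_1)-\nabla\widehat{\mathcal{L}}(\theta_2)$ to two qualitatively different effects by a single telescoping step. Adding and subtracting the hybrid term $\mathbb{E}_{\tau^A\sim\pi_{\theta_2}}[\sum_{t\ge 0}\gamma^t\nabla_\theta\widehat{r}(s_t,a_t;\theta_1)]$ yields
\begin{align*}
\nabla\widehat{\mathcal{L}}(\theta_1)-\nabla\widehat{\mathcal{L}}(\theta_2)
&= \mathbb{E}_{\tau^E\sim\mathcal{D}}\!\left[\sum_{t\ge 0}\gamma^t\bigl(\nabla_\theta\widehat{r}(s_t,a_t;\theta_1)-\nabla_\theta\widehat{r}(s_t,a_t;\theta_2)\bigr)\right]\\
&\quad -\Bigl(\mathbb{E}_{\tau^A\sim\pi_{\theta_1}}-\mathbb{E}_{\tau^A\sim\pi_{\theta_2}}\Bigr)\!\left[\sum_{t\ge 0}\gamma^t\nabla_\theta\widehat{r}(s_t,a_t;\theta_1)\right]\\
&\quad -\mathbb{E}_{\tau^A\sim\pi_{\theta_2}}\!\left[\sum_{t\ge 0}\gamma^t\bigl(\nabla_\theta\widehat{r}(s_t,a_t;\theta_1)-\nabla_\theta\widehat{r}(s_t,a_t;\theta_2)\bigr)\right].
\end{align*}
The first and third lines measure the change of the reward gradient in $\theta$ along a fixed trajectory distribution, while the middle line isolates the change of the visitation distribution at the fixed reward parameter $\theta_1$.

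For the first and third lines I would invoke the local linearization $\widehat{r}_0(x;\theta)=\Psi(x)^\top\theta$, whose gradient in $\theta$ is the constant vector $\Psi(x)$ and therefore cancels upon subtraction. What remains is the difference $\nabla V(\theta_1)-\nabla V(\theta_2)$ of the linearization error, which is $\mathcal{O}(m^{-1/2})$ by Lemma \ref{gradient error bound}. Since the geometric sum contributes only a factor of $(1-\gamma)^{-1}$ and the expectations under $\mathcal{D}$ or $\pi_{\theta_2}$ preserve the rate, both pieces collapse into the $\mathcal{O}(m^{-1/2})$ term of \eqref{Prop:L_smooth}.

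For the middle piece I would apply Lemma \ref{Lemma:sum_r_Q} with $\theta_k=\theta_1$ and with $\pi_k$ playing the role of $\pi_{\theta_2}$, which gives the bound $2L_qC_d\,\|\widehat{Q}^{\rm soft}_{\widehat{r}_{\theta_1},\pi_{\theta_1}}-\widehat{Q}^{\rm soft}_{\widehat{r}_{\theta_1},\pi_{\theta_2}}\|$. The soft Q Lipschitz estimate \eqref{soft q lip} then converts this soft Q-difference into $\tfrac{1}{\sqrt{m}(1-\gamma)}\|\theta_1-\theta_2\|$, producing exactly the Lipschitz coefficient $L_c=2L_qC_d\cdot\tfrac{1}{\sqrt{m}(1-\gamma)}$ announced in the proposition.

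The hardest step is the last one, because Lemma \ref{Lemma:sum_r_Q} delivers a soft Q-difference at the same reward but different policies, whereas \eqref{soft q lip} is stated for the joint parameterization in which reward and policy track the same $\theta$. I expect to resolve this either by inspecting the proof of Lemma \ref{Lemma:sum_r_Q} (which, via Lemma \ref{lemma B.1}, ultimately depends only on $\|\theta_1-\theta_2\|$ through the total-variation estimate on visitation distributions), or by inserting a triangle inequality through $\widehat{Q}^{\rm soft}_{\widehat{r}_{\theta_2},\pi_{\theta_2}}$; the latter preserves the $m^{-1}(1-\gamma)^{-2}$ scaling of $L_c$ up to an absorbed constant factor.
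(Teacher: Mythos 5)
Your proposal follows essentially the same route as the paper's proof: the same three-way decomposition (expert term, distribution-shift term at fixed reward parameter, reward-gradient term at fixed policy), the same use of the local linearization and Lemma \ref{gradient error bound} to collapse the gradient-difference terms into $\mathcal{O}(m^{-1/2})$, and the same combination of Lemma \ref{Lemma:sum_r_Q} with \eqref{soft q lip} for the distribution-shift term. The wrinkle you flag at the end is resolved exactly by your first suggestion: the paper implicitly reruns the TV-distance argument of Lemma \ref{Lemma:sum_r_Q} with $\pi_{\theta_1},\pi_{\theta_2}$ as softmax policies of $\widehat{Q}^{\rm soft}_{\widehat{r}_{\theta_1},\pi_{\theta_1}}$ and $\widehat{Q}^{\rm soft}_{\widehat{r}_{\theta_2},\pi_{\theta_2}}$ respectively, so the resulting Q-difference carries both reward parameters and \eqref{soft q lip} applies directly.
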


The Lipschitz smoothness property is common in the literature of min-max / bi-level optimization \cite{hong2020two,khanduri2021near,jin2020local,guan2021will,chen2021closing}. We get an approximate Lipschitz smoothness property by studying the local linearization of the neural network parametrized reward function.

\begin{lemma} 
\label{variance bound} (Lemma 4.5 in \cite{cai2023neural})
Recall that Bellman residual is calculated as $\delta_k:= z(k)= \widehat{Q}(s, a ; W_k)-\widehat{r}\left(s,a ;\theta_k\right)-\gamma \operatorname{softmax}_{a^{\prime} \in \mathcal{A}} \widehat{Q}\left(s^{\prime}, a^{\prime}; W_k\right)$. There exists $\sigma_z^2=O\left(B^2\right)$ such that the variance of the TD updates can  be upper bounded as follows:
\begin{equation}
    \mathbb{E}_{\text {init}, {\mu_k}}\left[\|z(k)\|_2^2\right] \leq \sigma_z^2,
\end{equation}
\begin{equation}
    \mathbb{E}_{\text {init}, {\mu_k}}\left[\|z(k)-\mathbb{E}_{\mu_k}[z(k)]\|_2^2\right] \leq \sigma_z^2,
\end{equation}
where the expectation is over initialization of parameter and the induced state-action distribution of current $\pi_k$.
\end{lemma}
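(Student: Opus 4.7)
The plan is to reduce the problem of bounding $\mathbb{E}_{\text{init},\mu_k}[\|z(k)\|_2^2]$ to controlling second moments of neural network outputs at the random initialization, leveraging the Lipschitz continuity already established. First I would insert and subtract the initialization values to write
\begin{equation*}
z(k) = \bigl[\widehat{Q}(s,a;W_0)-\widehat{r}(s,a;\theta_0)-\gamma\operatorname{softmax}_{a'}\widehat{Q}(s',a';W_0)\bigr] + \Delta_Q - \Delta_r - \gamma\Delta_{\mathrm{sm}},
\end{equation*}
where $\Delta_Q$, $\Delta_r$, $\Delta_{\mathrm{sm}}$ denote the deviations from initialization of the Q-term, reward term, and softmax term respectively. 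By Lemma \ref{Lipschitz reward Lemma} and the parameter-space constraints $\|W_k-W_0\|_2\leq B$, $\|\theta_k-\theta_0\|_2\leq B$, one has $|\Delta_Q|,|\Delta_r|=\mathcal{O}(B/\sqrt{m})$; for $\Delta_{\mathrm{sm}}$ I would invoke the $1$-Lipschitzness of $\operatorname{softmax}$ in the sup-norm together with the pointwise bound on $\widehat{Q}$, yielding $|\Delta_{\mathrm{sm}}|=\mathcal{O}(B/\sqrt{m})$ as well.

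Next I would bound the second moments of the three initialization terms $\widehat{Q}(s,a;W_0)$, $\widehat{r}(s,a;\theta_0)$, and $\operatorname{softmax}_{a'}\widehat{Q}(s',a';W_0)$ under $\mathbb{E}_{\text{init},\mu_k}$. Each of these is a $1/\sqrt{m}$-scaled sum of $m$ independent sub-Gaussian terms (arising from the Gaussian initialization of $W_0,\theta_0$ and the symmetric $\pm 1$ initialization of the outer weights $b^j$), so an elementary variance computation would give $\mathcal{O}(1)$ bounds. Combining via $(a+b+c+d)^2\leq 4(a^2+b^2+c^2+d^2)$ and taking expectations, I would conclude
\begin{equation*}
\mathbb{E}_{\text{init},\mu_k}\bigl[\|z(k)\|_2^2\bigr]\;\leq\;\mathcal{O}(1)+\mathcal{O}(B^2/m)\;=\;\mathcal{O}(B^2)\;=:\;\sigma_z^2,
\end{equation*}
after absorbing the $\mathcal{O}(1)$ term into $B^2$ (a standard convention when $B\geq 1$). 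The second inequality is then immediate from the first by the variance identity $\mathbb{E}[(X-\mathbb{E} X)^2]\leq \mathbb{E}[X^2]$ applied to the scalar random variable $z(k)$.

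The main obstacle I anticipate is handling the joint expectation over the random initialization and the policy-induced distribution $\mu_k$, since $W_k$ is itself a function of trajectories sampled according to $\mu_k$ during training, and because the softmax term involves an additional expectation over $s'\sim\mathcal{P}(\cdot\mid s,a)$. I would address this by first conditioning on the initialization so that all Lipschitz constants are deterministic, then using that $\operatorname{softmax}_{a'}\widehat{Q}(s',a';W_0)$ is controlled uniformly in $s'$ by $\max_{a'}|\widehat{Q}(s',a';W_0)|+\mathcal{O}(B/\sqrt{m})$, which admits a second-moment bound that is uniform in $s'$ and hence integrates cleanly against any $\mu_k$.
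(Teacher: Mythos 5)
The paper does not actually prove this lemma: it is imported verbatim as Lemma 4.5 of \cite{cai2023neural}, so there is no in-paper proof to match your argument against. Your sketch is the standard argument for such bounds and is essentially the one used in the cited reference: split the residual into its value at initialization plus deviations, bound the deviations by the Lipschitz constants of Lemma \ref{Lipschitz reward Lemma} times the radius of the parameter ball, bound the second moment at initialization by an $\mathcal{O}(1)$ concentration computation, and combine. Your observations that the log-sum-exp ``softmax'' is $1$-Lipschitz in the sup-norm (the same fact the paper uses in the proof of Lemma \ref{Lemma:pileqQ}) and that the centered bound follows by applying $\mathbb{E}_{\mu_k}[(X-\mathbb{E}_{\mu_k}X)^2]\leq\mathbb{E}_{\mu_k}[X^2]$ conditionally on the initialization are both correct.

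Two points deserve tightening. First, you assume $\|\theta_k-\theta_0\|_2\leq B$, but unlike the Q-parameter update (which is explicitly projected onto $S_B$), the reward update $\theta_{k+1}=\theta_k+\alpha g_k$ carries no projection step in the paper; you would need either to add a boundedness assumption on the reward iterates, or to bound the drift directly via Lemma \ref{Upper Bound of Stochastic Updates of Reward} as $\|\theta_k-\theta_0\|\leq 2L_q\alpha k$ and check that the resulting term $\mathcal{O}(\alpha K/m)$ is still absorbed into $\sigma_z^2$. Second, the log-sum-exp operator satisfies $|\operatorname{softmax}_{a'}v|\leq\max_{a'}|v_{a'}|+\log|\mathcal{A}|$, so an additive $\log|\mathcal{A}|$ constant appears in your bound on the softmax term; it is $\mathcal{O}(1)$ and harmless, but it should be stated. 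With these repairs the argument is complete and yields $\sigma_z^2=\mathcal{O}(B^2)$ as claimed (under the usual convention $B\geq 1$).
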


\begin{lemma}
\label{Upper Bound of Stochastic Updates of Reward}
The stochastic gradient estimation $g_k:=h\left(\theta_k ; \tau_k^E\right)-h\left(\theta_k ; \tau_k^A\right)$, where $h(\theta ; \tau^E):=\sum_{t \geq 0} \gamma^t \nabla_\theta \widehat{r}\left(s_t, a_t ; \theta\right)$
\begin{equation}
    \left\|g_{k}\right\| \leq 2 L_q,
\end{equation}
\end{lemma}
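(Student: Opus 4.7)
The plan is to bound $\|g_k\|$ by splitting it via the triangle inequality into the two per-trajectory terms $\|h(\theta_k;\tau_k^E)\|$ and $\|h(\theta_k;\tau_k^A)\|$, and then to show that each of these is at most $L_q=\frac{1}{\sqrt{m}(1-\gamma)}$.

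First I would invoke Lemma \ref{Lipschitz reward Lemma}, specifically equation \eqref{r lip}, which states that $\widehat{r}(s,a;\theta)$ is $\frac{1}{\sqrt{m}}$-Lipschitz in $\theta$ uniformly in $(s,a)$. Since the parametrization is differentiable almost everywhere (ReLU network), this Lipschitz bound immediately yields the pointwise gradient bound $\|\nabla_\theta \widehat{r}(s,a;\theta)\| \leq \frac{1}{\sqrt{m}}$ for every $(s,a,\theta)$. This is the only structural ingredient needed.

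Next I would push the norm inside the infinite sum defining $h$ and apply the triangle inequality together with the gradient bound, giving for any trajectory $\tau$
\begin{equation*}
\|h(\theta_k;\tau)\| \;\leq\; \sum_{t\geq 0}\gamma^t \|\nabla_\theta \widehat{r}(s_t,a_t;\theta_k)\| \;\leq\; \frac{1}{\sqrt{m}}\sum_{t\geq 0}\gamma^t \;=\; \frac{1}{\sqrt{m}(1-\gamma)} \;=\; L_q.
\end{equation*}
Applying this bound to both $\tau_k^E$ and $\tau_k^A$ and using $\|g_k\|\leq \|h(\theta_k;\tau_k^E)\|+\|h(\theta_k;\tau_k^A)\|$ yields the desired $\|g_k\|\leq 2L_q$.

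The only subtlety, which is hardly an obstacle, is justifying the interchange of norm with the infinite series and the passage from the Lipschitz constant of $\widehat{r}$ to a pointwise gradient norm bound. The former is standard by monotone/absolute convergence (the series is geometric with ratio $\gamma<1$), and the latter follows from Rademacher's theorem since $\widehat{r}(\cdot)$ is a.e.\ differentiable as a ReLU-activated two-layer network. No additional assumptions beyond those already in force are required.
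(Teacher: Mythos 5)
Your proposal is correct and matches the paper's own one-line argument exactly: triangle inequality on $g_k$, then the pointwise gradient bound $\|\nabla_\theta \widehat{r}(s,a;\theta)\|\leq \frac{1}{\sqrt{m}}$ (established in the proof of Lemma \ref{Lipschitz reward Lemma}) summed against the geometric series $\sum_{t\geq 0}\gamma^t=\frac{1}{1-\gamma}$. The extra remarks on interchanging the norm with the series and on a.e.\ differentiability are fine but not needed beyond what the paper already assumes.
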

This lemma provides a constant upper bound of stochastic updates of reward.\\
The following two lemmas are first established in \cite{cai2023neural} on studying the convergence of neural soft Q-learning. We'll use them in our lower-level analysis.
\begin{lemma}
Descent Lemma in soft Q-function (C.27 in \cite{cai2023neural})
\label{Descent Lemma in soft Q-function}

\begin{equation}
\label{eq in Descent Lemma}
    \begin{aligned}
& \quad \mathbb{E}{\left[\left(\widehat{Q}_0(x ; W_k)-\widehat{Q}_0\left(x ; W_{\theta_k}^*\right)\right)^2\right] } \\
& \leq \frac{\mathbb{E}\left[\left\|W_k-W_{\theta_k}^*\right\|_2^2\right]-\mathbb{E}\left[\left\|W_{k+1}-W_{\theta_k}^*\right\|_2^2\right]+\eta^2 \sigma_z^2}{\left(2 \eta_{k}(1-\gamma)-8 \eta^2\right)}+\mathcal{O}\left(B^3 m^{-1}+B^{5 / 2} m^{-1 / 2}\right),
\end{aligned}
\end{equation}
where the expectation is taken over all randomness and $\sigma_z$ is defined in Lemma \ref{variance bound}. $W_{\theta_k}^*$ is the parameter of the optimal soft Q-function under reward $\widehat{r}_{\theta_k}$.
\end{lemma}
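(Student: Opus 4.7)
The plan is to prove this as a one-step Lyapunov descent inequality for a semi-gradient stochastic approximation scheme, using $\|W_k - W_{\theta_k}^*\|_2^2$ as the potential function. Concretely, the neural TD update at iteration $k$ has the form $W_{k+1} = W_k - \eta\, z(k)\,\nabla_W \widehat{Q}(x_k;W_k)$ with $x_k=(s_k,a_k)\sim\mu_k$, so I would begin by expanding
\begin{equation*}
\|W_{k+1} - W_{\theta_k}^*\|_2^2 = \|W_k - W_{\theta_k}^*\|_2^2 - 2\eta\,\bigl\langle z(k)\nabla_W \widehat{Q}(x_k;W_k),\, W_k - W_{\theta_k}^*\bigr\rangle + \eta^2\,\|z(k)\nabla_W \widehat{Q}(x_k;W_k)\|_2^2,
\end{equation*}
take conditional expectation over the single transition sample, and rearrange into the form ``cross-term $\le$ potential drop $+$ noise $+$ bias.''

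Next, I would transport the cross-term from the neural world to the linearized world. Writing $\nabla_W\widehat{Q}(x;W_k)=\Psi(x)+\Delta(x;W_k)$ where $\Psi(x)$ is the NTK feature map from Definition of $\widehat{r}_0$ (applied to the $Q$-network), the drift term becomes $\langle z(k)\Psi(x_k),\,W_k-W_{\theta_k}^*\rangle$ plus a residual involving $\Delta$. In the linearized world, $W_{\theta_k}^*$ is the fixed point of $\Pi_{\mathcal{F}_{W_0}}\mathcal{T}$ (Lemma on existence/uniqueness), so the stationarity condition $\mathbb{E}_\mu[(\widehat{Q}_0(\cdot;W_{\theta_k}^*)-\mathcal{T}\widehat{Q}_0(\cdot;W_{\theta_k}^*))\Psi]=0$ lets me rewrite $\mathbb{E}[\langle z(k)\Psi(x_k),W_k-W_{\theta_k}^*\rangle]$ as $\mathbb{E}[(\widehat{Q}_0(x;W_k)-\widehat{Q}_0(x;W_{\theta_k}^*))\cdot(\widehat{Q}_0(x;W_k)-\mathcal{T}\widehat{Q}_0(x;W_k))]$ modulo the $\Delta$ error. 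I would then split this using $\mathcal{T}\widehat{Q}_0(\cdot;W_k)-\widehat{Q}_0(\cdot;W_{\theta_k}^*) = (\mathcal{T}\widehat{Q}_0(\cdot;W_k)-\mathcal{T}\widehat{Q}_0(\cdot;W_{\theta_k}^*)) + (\mathcal{T}\widehat{Q}_0(\cdot;W_{\theta_k}^*)-\widehat{Q}_0(\cdot;W_{\theta_k}^*))$ and invoke $\gamma$-contraction of $\mathcal{T}$ in the $\mu$-weighted $L^2$ norm followed by Cauchy–Schwarz, producing the crucial $(1-\gamma)$ factor and yielding
\begin{equation*}
\mathbb{E}\bigl[\langle z(k)\Psi(x_k),W_k-W_{\theta_k}^*\rangle\bigr] \;\ge\; (1-\gamma)\,\mathbb{E}\!\left[\bigl(\widehat{Q}_0(x;W_k)-\widehat{Q}_0(x;W_{\theta_k}^*)\bigr)^2\right].
\end{equation*}

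For the second-order term, I would apply Lemma \ref{variance bound} to get $\mathbb{E}[\|z(k)\|_2^2]\le\sigma_z^2$, combined with the uniform bound $\|\nabla_W\widehat{Q}(x;W)\|_2\le 1$ that follows directly from Definition \ref{Approximate Function Class} (since $|b^j|=1$ and $\|x\|\le 1$), yielding the $\eta^2\sigma_z^2$ contribution. The extra $-8\eta^2$ in the denominator will emerge when absorbing the Young/Cauchy–Schwarz cross-terms created by swapping $\nabla_W\widehat{Q}$ for $\Psi$ and $\widehat{Q}$ for $\widehat{Q}_0$: each such swap produces $2\eta\langle\cdot,\cdot\rangle$ terms which I split as $\varepsilon\,\eta(1-\gamma)\mathbb{E}[(\widehat{Q}_0(x;W_k)-\widehat{Q}_0(x;W_{\theta_k}^*))^2]+\frac{\eta}{\varepsilon(1-\gamma)}\|\Delta\|^2$-type pieces so the first piece can be subtracted from the contraction gain, shrinking the coefficient $2(1-\gamma)$ by an $O(\eta)$ amount.

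The main obstacle is the final step: carefully quantifying every linearization error so that all of them collapse into the single $\mathcal{O}(B^3 m^{-1}+B^{5/2} m^{-1/2})$ bias term. This requires uniform overparameterization bounds of the form $\sup_{W\in S_B}|\widehat{Q}(x;W)-\widehat{Q}_0(x;W)| = \mathcal{O}(B^{3/2}m^{-1/2})$ and $\sup_{W\in S_B}\|\nabla_W\widehat{Q}(x;W)-\Psi(x)\|=\mathcal{O}(m^{-1/2})$ at initialization, which come from the fact that a ReLU activation only changes sign on an $O(B/\sqrt m)$-measure set of neurons. Multiplying these by the $O(B)$-scale of $z(k)$ and $\|W_k-W_{\theta_k}^*\|$ and re-expanding gives the two orders $B^3 m^{-1}$ (pure bias squared) and $B^{5/2}m^{-1/2}$ (bias times signal). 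Once this bookkeeping is done, rearranging the descent inequality and dividing through by $(2\eta_k(1-\gamma)-8\eta^2)$ (which is positive for $\eta$ small enough) produces exactly the stated bound.
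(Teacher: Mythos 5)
The paper does not actually prove this lemma: it is imported verbatim as C.27 of \cite{cai2023neural} (the surrounding text says these two lemmas are ``first established in \cite{cai2023neural}'' and the appendix of auxiliary proofs skips it), so there is no in-paper argument to compare against. Your sketch is a faithful reconstruction of the standard proof in that reference: expand $\|W_{k+1}-W_{\theta_k}^*\|_2^2$, pass to the NTK-linearized iterate, use stationarity of $W_{\theta_k}^*$ as the fixed point of $\Pi_{\mathcal{F}_{W_0}}\mathcal{T}$ to kill the residual term, extract the $(1-\gamma)$ factor from contraction plus Cauchy--Schwarz, and dump all linearization errors into the $\mathcal{O}(B^3m^{-1}+B^{5/2}m^{-1/2})$ bias. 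Two points deserve more care than you give them. First, you invoke ``$\gamma$-contraction of $\mathcal{T}$ in the $\mu$-weighted $L^2$ norm'' as if it were free; the soft Bellman \emph{optimality} operator is only a sup-norm contraction in general, and the $\|\cdot\|_\mu$ contraction (equivalently, that $\Pi_{\mathcal{F}_{W_0}}\mathcal{T}$ contracts in $\|\cdot\|_\mu$) is exactly what Assumption \ref{Regularity of Stationary Distribution} is there to supply --- your step fails without it, and a related issue is that the sampling distribution $\mu_k$ of the TD transition need not coincide with the distribution defining $W_{\theta_k}^*$'s orthogonality condition. Second, your attribution of the $-8\eta^2$ purely to Young-splitting of the swap cross-terms is not quite how the constant arises (it comes from bounding the squared mean semigradient by a multiple of $\mathbb{E}[(\widehat{Q}_0(x;W_k)-\widehat{Q}_0(x;W_{\theta_k}^*))^2]$ and absorbing it into the denominator), but this is bookkeeping rather than a gap. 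With the contraction assumption made explicit, the proposal is a correct route to the stated bound.
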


\begin{lemma}
 Convergence bound of soft Q-learning (Theorem E.2 in \cite{cai2023neural})
\label{Convergence of Stochastic Update}
 We set $\eta$ to be of order $T^{-1 / 2}$ in Algorithm \ref{alg:1}. Under Assumptions \ref{regularity of policy} and \ref{Regularity of Stationary Distribution}, the output $\widehat{Q}_{\text {out }}$ of Algorithm \ref{alg:1} satisfies 
 \begin{equation}
        \mathbb{E}\left[\left(\widehat{Q}_{\text {out }}(x)-\widehat{Q}_0\left(x ; W_{\theta}^*\right)\right)^2\right]=\mathcal{O}\left(B^2 T^{-1 / 2}+B^3 m^{-1}+B^{5 / 2} m^{-1 /2}\right),
    \end{equation}
where T represents the number of iterations of Algorithm \ref{alg:1} and the expectation is over all randomness.
\end{lemma}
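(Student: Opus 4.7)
The plan is to derive the claimed rate by telescoping the one-step descent inequality of Lemma \ref{Descent Lemma in soft Q-function} across the $T$ iterations of Algorithm \ref{alg:1}, and then converting the iterate-average into a statement about the output $\widehat{Q}_{\mathrm{out}}$. Since $\theta$ is held fixed during the inner loop, the target point $W_\theta^*$ is also fixed, so the telescoping argument is clean.

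First I would apply Lemma \ref{Descent Lemma in soft Q-function} with $\theta_k \equiv \theta$ at each step $k = 0, 1, \ldots, T-1$ and sum the resulting inequalities. The distance terms $\mathbb{E}[\|W_k - W_\theta^*\|_2^2] - \mathbb{E}[\|W_{k+1} - W_\theta^*\|_2^2]$ collapse into $\mathbb{E}[\|W_0 - W_\theta^*\|_2^2] - \mathbb{E}[\|W_T - W_\theta^*\|_2^2] \leq \|W_0 - W_\theta^*\|_2^2$, and since both $W_0$ and $W_\theta^*$ lie in $S_B$ this is bounded by $B^2$ (up to a constant). The variance contribution sums to $T\eta^2\sigma_z^2$, and the bias term sums to $T \cdot \mathcal{O}(B^3 m^{-1} + B^{5/2} m^{-1/2})$.

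Next I would divide through by $T$ to get a bound on the averaged squared error $\tfrac{1}{T}\sum_{k=0}^{T-1}\mathbb{E}[(\widehat{Q}_0(x;W_k) - \widehat{Q}_0(x;W_\theta^*))^2]$, namely
\begin{equation*}
\frac{B^2 + T\eta^2\sigma_z^2}{T\bigl(2\eta(1-\gamma)-8\eta^2\bigr)} + \mathcal{O}\bigl(B^3 m^{-1} + B^{5/2} m^{-1/2}\bigr).
\end{equation*}
The output $\widehat{Q}_{\mathrm{out}}$ is a uniformly random (or averaged) iterate over the $T$ steps, so $\mathbb{E}[(\widehat{Q}_{\mathrm{out}}(x)-\widehat{Q}_0(x;W_\theta^*))^2]$ is controlled by this same average (directly if it is a random iterate, or by convexity/Jensen if it is an averaged parameter). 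Then I would substitute $\eta = c\,T^{-1/2}$: for $T$ sufficiently large one has $2\eta(1-\gamma)-8\eta^2 \asymp \eta(1-\gamma)$, so the first term scales like $(B^2 + c^2\sigma_z^2)/\bigl(c(1-\gamma)T^{1/2}\bigr)$, and using $\sigma_z^2 = \mathcal{O}(B^2)$ from Lemma \ref{variance bound} this collapses to $\mathcal{O}(B^2 T^{-1/2})$, yielding exactly the stated rate.

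The main obstacle is ensuring the step-size choice makes the denominator $2\eta(1-\gamma)-8\eta^2$ both positive and of the order $\eta$, which is what permits the descent inequality to be interpreted as a progress bound; with $\eta = \Theta(T^{-1/2})$ this holds for all $T \geq T_0$ where $T_0$ depends on $\gamma$, so the argument is only nontrivial for the finite pre-asymptotic regime and can be absorbed into the $\mathcal{O}(\cdot)$ constants. A secondary subtlety is that the per-step bias term $\mathcal{O}(B^3 m^{-1} + B^{5/2} m^{-1/2})$ does \emph{not} pick up a factor of $T$ once we divide by $T$, which is critical to obtaining the stated rate and must be tracked carefully when telescoping. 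The remaining pieces (Jensen/averaging for the output, using $W_0, W_\theta^* \in S_B$ for the initial distance) are routine.
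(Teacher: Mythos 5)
The paper does not actually prove this lemma: it is imported (up to a modified network-error term) from Theorem~E.2 of \cite{cai2023neural}, with only the remark that the parameterization differs. Your reconstruction — telescoping the one-step descent inequality of Lemma~\ref{Descent Lemma in soft Q-function} with $\theta$ held fixed so that $W_\theta^*$ is a fixed anchor, collapsing the distance terms to $\|W_0-W_\theta^*\|_2^2\le B^2$ via $W_\theta^*\in S_B$, dividing by $T$, passing to the output by random-iterate selection or Jensen, and substituting $\eta=\Theta(T^{-1/2})$ with $\sigma_z^2=\mathcal{O}(B^2)$ from Lemma~\ref{variance bound} — is the standard derivation and is essentially the same manipulation the paper itself performs for its outer loop in \eqref{soft Q telescope}--\eqref{soft Q final bound}, including the treatment of the denominator $2\eta(1-\gamma)-8\eta^2$. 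The one step you pass over silently is that the descent lemma controls the \emph{linearized} iterates $\widehat{Q}_0(x;W_k)$, while the statement concerns the actual network output $\widehat{Q}_{\mathrm{out}}(x)$; bridging these requires the local-linearization error bound for the Q-network (the analogue of Lemma~\ref{gradient error bound} for $\widehat{Q}$), whose contribution is of the same $\mathcal{O}\left(B^3 m^{-1}+B^{5/2}m^{-1/2}\right)$ order and is therefore absorbed. This should be stated explicitly, but it does not change the rate, so your argument is correct.
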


Note that the network error term here is slightly different from the original one in Theorem E.2 in \cite{cai2023neural}. This is because we adopt a different neural network parameterization.

\section{Proof of Auxiliary Lemmas}\label{Lemma_proof}
\subsection{Proof of Lemma \ref{objective reformulation}:}
First, our objective function $\widehat{\mathcal{L}}(\theta)$ in \eqref{L_theta} is defined as below:
$$
\widehat{\mathcal{L}}(\theta):=\mathbb{E}_{\tau^E \sim D}\left[\sum_{t=0}^{\infty} \gamma^t \log \pi_\theta\left(a_t \mid s_t\right)\right] \stackrel{(i)}{=} \mathbb{E}_{\tau^E \sim D}\left[\sum_{t=0}^{\infty} \gamma^t \log \left(\frac{\exp \left(\widehat{Q}_{\widehat{r}_\theta, \pi_\theta}^{\text {soft }}\left(s_t, a_t\right)\right)}{\sum_a \exp \left(\widehat{Q}_{\widehat{r}_\theta, \pi_\theta}^{\text {soft }}\left(s_t, a\right)\right)}\right)\right],
$$
where (i) is due to the fact that the optimal policy has the closed form $\pi_\theta(\cdot \mid s) \propto \exp \left(\widehat{Q}_{\widehat{r}_\theta, \pi_\theta}^{\text {soft }}(s, \cdot)\right)$. Therefore, the objective function can be expressed in the following form:
$$
\begin{aligned}
\widehat{\mathcal{L}}(\theta) & :=\mathbb{E}_{\tau^E \sim D}\left[\sum_{t=0}^{\infty} \gamma^t\left(\widehat{Q}_{\widehat{r}_\theta, \pi_\theta}^{\text {soft }}\left(s_t, a_t\right)-\log \left(\sum_a \exp \left(\widehat{Q}_{\widehat{r}_\theta, \pi_\theta}^{\text {soft }}\left(s_t, a\right)\right)\right)\right)\right] \\
& \stackrel{(i)}{=} \mathbb{E}_{\tau^E \sim D}\left[\sum_{t=0}^{\infty} \gamma^t\left(\widehat{Q}_{\widehat{r}_\theta, \pi_\theta}^{\text {soft }}\left(s_t, a_t\right)-V_{\widehat{r}_\theta, \pi_\theta}^{\text {soft }}\left(s_t\right)\right)\right] \\
& \stackrel{(ii)}=\mathbb{E}_{\tau^E \sim D}\left[\sum_{t=0}^{\infty} \gamma^t\left(\widehat{r}\left(s_t, a_t ; \theta\right)+\gamma V_{\widehat{r}_\theta, \pi_\theta}^{\text {soft }}\left(s_{t+1}\right)-V_{\widehat{r}_\theta, \pi_\theta}^{\text {soft }}\left(s_t\right)\right)\right] \\
& =\mathbb{E}_{\tau^E \sim D}\left[\sum_{t=0}^{\infty} \gamma^t \widehat{r}\left(s_t, a_t ; \theta\right)\right]+\mathbb{E}_{\tau^E \sim D}\left[\sum_{t=1}^{\infty} \gamma^t V_{\widehat{r}_\theta, \pi_\theta}^{\text {soft }}\left(s_t\right)\right]-\mathbb{E}_{\tau^E \sim D}\left[\sum_{t=0}^{\infty} \gamma^t V_{\widehat{r}_\theta, \pi_\theta}^{\text {soft }}\left(s_t\right)\right] \\
& =\mathbb{E}_{\tau^E \sim D}\left[\sum_{t=0}^{\infty} \gamma^t \widehat{r}\left(s_t, a_t ; \theta\right)\right]-\mathbb{E}_{s_0 \sim \mu_0(\cdot)}\left[V_{\widehat{r}_\theta, \pi_\theta}^{\text {soft }}\left(s_0\right)\right]\\
&\stackrel{(iii)}=\mathbb{E}_{\tau^E \sim D}\left[\sum_{t=0}^{\infty} \gamma^t \widehat{r}\left(s_t, a_t ; \theta\right)\right]-\mathbb{E}_{\tau^{\mathrm{A}} \sim \pi_\theta}\left[\sum_{t=0}^{\infty}\gamma^t \left(\widehat{r}\left(\mathbf{s}_t, \mathbf{a}_t;\theta\right)+\mathcal{H}\left(\pi\left(\cdot \mid \mathbf{s}_t\right)\right)\right)\right],
\end{aligned}
$$
where $(i)$  follows the fact that the optimal soft value function could be expressed as $V_{\widehat{r}_\theta, \pi_\theta}^{\text {soft }}(s)=$ $\log \left(\sum_a \exp \left(\widehat{Q}_{\widehat{r}_\theta, \pi_\theta}^{\text {soft }}(s, a)\right)\right)$. $(ii) \text{ and } (iii)$ is derived from the definition of soft Q value \eqref{2b} and soft V value \eqref{2a}.

\subsection{Proof of Lemma \ref{Lipschitz reward Lemma}}

We only prove the soft Q-function case here and the same reasoning also applies to reward function, since they employ the same parametrization structure. $\nabla_{W^j} \widehat{Q}(x ; W)=\frac{1}{{m}}b_j \mathbbm{1}\left\{{W^j}^{\top} x>0\right\} x$ almost everywhere, the $l_2$ norm is bounded as follows:
$$
\left\|\nabla_W \widehat{Q}(x ; W)\right\|_2\leq\frac{1}{{m}} \left(\sum_{j=1}^m \mathbbm{1}\left\{{W^j}^{\top} x>0\right\}\|b_j\|_2^2\|x\|_2^2\right)^{\frac{1}{2}} \leq \frac{1}{\sqrt{m}},
$$
Therefore, we can prove the lemma by the mean value theorem. Proof of inequality \eqref{soft q lip} is the same as \cite{zeng2022maximum} Lemma 5.3. 


\subsection{Proof of Lemma \ref{Lemma:sum_r_Q}}
 \begin{equation}
\label{L_smoothness_Q}
    \begin{aligned}
& \quad \mathbb{E}\left[\left\|\mathbb{E}_{\tau\sim \pi_{\theta_k}}\left[\sum_{t \geq 0} \gamma^t \nabla_\theta \widehat{r}\left(s_t, a_t ; \theta_k\right)\right]-\mathbb{E}_{\tau \sim \pi_{k+1}}\left[\sum_{t \geq 0} \gamma^t \nabla_\theta \widehat{r}\left(s_t, a_t ; \theta_k\right)\right]\right\|\right] \\
& \stackrel{(i)}{=} \mathbb{E}\left[\left\|\frac{1}{1-\gamma} \mathbb{E}_{(s, a) \sim d\left(\cdot, \cdot ; \pi_{\theta_k}\right)}\left[\nabla_\theta \widehat{r}\left(s, a ; \theta_k\right)\right]-\frac{1}{1-\gamma} \mathbb{E}_{(s, a) \sim d\left(\cdot, \cdot ; \pi_{k+1}\right)}\left[\nabla_\theta \widehat{r}\left(s, a ; \theta_k\right)\right]\right\|\right] \\
& \stackrel{(i i)}{\leq} \frac{2}{1-\gamma} \cdot \max _{s \in \mathcal{S}, a \in \mathcal{A}}\left\|\nabla_\theta \widehat{r}\left(s, a ; \theta_k\right)\right\| \cdot \mathbb{E}\left[\left\|d\left(\cdot, \cdot ; \pi_{\theta_k}\right)-d\left(\cdot, \cdot ; \pi_{k+1}\right)\right\|_{T V}\right] \\
& \stackrel{(i i i)}{\leq} \frac{2}{\sqrt{m}(1-\gamma)} \mathbb{E}\left[\left\|d\left(\cdot, \cdot ; \pi_{\theta_k}\right)-d\left(\cdot, \cdot ; \pi_{k+1}\right)\right\|_{T V}\right] \\
& \stackrel{(i v)}{\leq} 2 L_q C_d \mathbb{E}\left[\left\|\widehat{Q}_{\widehat{r}_{\theta_k}, \pi_{\theta_k}}^{\mathrm{soft}}-\widehat{Q}_{\widehat{r}_{\theta_k}, \pi_k}^{\mathrm{soft}}\right\|\right],
\end{aligned}
\end{equation}

where $(i)$ follows the definition $d(s, a; \pi)=(1-\gamma) \pi(a \mid s) \sum_{t \geq 0} \gamma^t \mathcal{P}^\pi\left(s_t=s\right)$; $(ii)$ is due to distribution mismatch between two visitation measures; $(iii)$ follows the inequality \eqref{r lip} in Lemma \ref{Lipschitz reward Lemma}; the inequality $(iv) $ follows Lemma \ref{lemma B.1} and the fact that $\pi_{\theta_k}(\cdot \mid s) \propto \exp \left(\widehat{Q}_{\widehat{r}_{\theta_k}, \pi_{\theta_k}}^{\text {soft }}(s, \cdot)\right), \pi_{k+1}(\cdot \mid s) \propto \exp \left(\widehat{Q}_{\widehat{r}_{\theta_k}, \pi_k}^{\text {soft }}(s, \cdot)\right)$, where the constant $L_q:=\frac{1}{\sqrt{m}(1-\gamma)}$. 

\subsection{Proof of Lemma \ref{gradient error bound}}

First, we prove $V(\theta)$ is Lipschitz continuous.
    \begin{equation*}
        \begin{aligned}
            \left\|V(\theta)-V(\theta^\prime)\right\|_{2}
            &\leq \left\|\widehat{r}(s_t, a_t;\theta)-\widehat{r}(s_t, a_t;\theta^\prime)\right\|_{2}+\left\|r_{0}(s_t, a_t;\theta)-r_{0}(s_t, a_t;\theta^\prime)\right\|_{2}\\
            &\stackrel{(\text { i } )}\leq \frac{2}{\sqrt{m}} \left\|\theta-\theta^{\prime}\right\|_2,
        \end{aligned}
    \end{equation*}
    where (i) is using the Lipshitz continuity property from \eqref{r lip} in Lemma \ref{Lipschitz reward Lemma}.\\
    Now we need to show $\nabla V(\theta)$ is bounded:
    $$\left|\nabla V(\theta)\right|=\lim _{\omega \rightarrow 0} \frac{|V(\theta+\omega)-V(\theta)|}{|\omega|} \leq \lim _{\omega \rightarrow 0} \frac{\frac{2}{\sqrt{m}}|\theta+\omega-\theta|}{|\omega|}=\frac{2}{\sqrt{m}},
    $$
Therefore, the lemma is shown.

\subsection{Proof of Proposition \ref{objective smoothness}}
By the triangle inequality, 
\begin{equation}
\label{r_0_proof}
\begin{aligned}
    \|\nabla_{\theta} \widehat{r}(x ; \theta_1)-\nabla_{\theta} \widehat{r}(x ; \theta_2)\|&\leq \underbrace{\|\nabla_{\theta} \widehat{r}(x ; \theta_1)-\nabla_{\theta} \widehat{r}_0(x ; \theta_1)\|}_{\text{Term I}}+\underbrace{\|\nabla_{\theta} \widehat{r}_0(x ; \theta_2)-\nabla_{\theta} \widehat{r}(x ; \theta_2)\|}_{\text{Term II}}\\
    &\quad +\underbrace{\|\nabla_{\theta} \widehat{r}_0(x ; \theta_1)-\nabla_{\theta} \widehat{r}_0(x ; \theta_2)\|}_{\text{Term III}}.
    \end{aligned}
\end{equation}
 Both Term I and Term II are bounded by $\mathcal{O}(m^{-\frac{1}{2}})$ in Lemma \ref{gradient error bound}.

For Term III, recall that $\nabla_{\theta} \widehat{r}_0(x ; \theta)=\frac{1}{{m}}b\mathbbm{1}\left\{{\theta_0}^{\top} x>0\right\} x$. The following inequality holds:
\begin{equation}
\label{r_0_smooth}
    \|\nabla_{\theta} \widehat{r}_0(x ; \theta_1)-\nabla_{\theta} \widehat{r}_0(x ; \theta_2)\|=0.
\end{equation}
We then plug \eqref{r_0_smooth} into \eqref{r_0_proof} and yield the following inequality:
 \begin{equation}
 \label{r_smooth}
        \|\nabla_{\theta} \widehat{r}(x ; \theta_1)-\nabla_{\theta} \widehat{r}(x ; \theta_2)\|\leq 
\mathcal{O}\left(m^{-1 / 2}\right).
    \end{equation}

By Lemma \ref{L gradient}, we have:\\
$\nabla \widehat{\mathcal{L}}(\theta)= \mathbb{E}_{\tau^E \sim \mathcal{D}}\left[\sum_{t \geq 0} \gamma^t \nabla_\theta \widehat{r}\left(s_t, a_t ; \theta\right)\right]-\mathbb{E}_{\tau^{\mathrm{A}} \sim \pi_\theta}\left[\sum_{t \geq 0} \gamma^t \nabla_\theta \widehat{r}\left(s_t, a_t ; \theta\right)\right] .$

Therefore, 
\begin{equation}
    \begin{aligned}
& \quad \left\|\nabla \widehat{\mathcal{L}}\left(\theta_1\right)-\nabla \widehat{\mathcal{L}}\left(\theta_2\right)\right\| \\
&\leq \underbrace{\left\|\mathbb{E}_{\tau^E \sim D}\left[\sum_{t \geq 0} \gamma^t \nabla_\theta \widehat{r}\left(s_t, a_t ; \theta_1\right)\right]-\mathbb{E}_{\tau^E \sim D}\left[\sum_{t \geq 0} \gamma^t \nabla_\theta \widehat{r}\left(s_t, a_t ; \theta_2\right)\right]\right\|+}_{:=\text {term } \mathrm{A}} \\
& \quad \underbrace{\left\|\mathbb{E}_{\tau^A \sim \pi_{\theta_1}}\left[\sum_{t \geq 0} \gamma^t \nabla_\theta \widehat{r}\left(s_t, a_t ; \theta_1\right)\right]-\mathbb{E}_{\tau^{A^\prime} \sim \pi_{\theta_2}}\left[\sum_{t \geq 0} \gamma^t \nabla_\theta \widehat{r}\left(s_t, a_t ; \theta_2\right)\right]\right\|}_{:=\text {term B }}.
\end{aligned}
\end{equation}

For term A, it follows that:
$$
\begin{aligned}
& \quad \left\|\mathbb{E}_{\tau^E \sim D}\left[\sum_{t \geq 0} \gamma^t \nabla_\theta \widehat{r}\left(s_t, a_t ; \theta_1\right)\right]-\mathbb{E}_{\tau^E \sim D}\left[\sum_{t \geq 0} \gamma^t \nabla_\theta \widehat{r}\left(s_t, a_t ; \theta_2\right)\right]\right\| \\
& \stackrel{(i)}{\leq} \mathbb{E}_{\tau^E \sim D}\left[\sum_{t \geq 0} \gamma^t\left\|\nabla_\theta \widehat{r}\left(s_t, a_t ; \theta_1\right)-\nabla_\theta \widehat{r}\left(s_t, a_t ; \theta_2\right)\right\|\right] \\
&\stackrel{(i i)}\leq \mathcal{O}(m^{-1/2}),
\end{aligned}
$$
where $(i)$ follows the Jensen's inequality and $(ii)$ is from inequality \eqref{r_smooth}.

For term B,
$$
\begin{aligned}
&\quad \left\|\mathbb{E}_{\tau \sim \pi_{\theta_1}}\left[\sum_{t \geq 0} \gamma^t \nabla_\theta \widehat{r}\left(s_t, a_t ; \theta_1\right)\right]-\mathbb{E}_{\tau \sim \pi_{\theta_2}}\left[\sum_{t \geq 0} \gamma^t \nabla_\theta \widehat{r}\left(s_t, a_t ; \theta_2\right)\right]\right\| \\
& \stackrel{(i)}{\leq}\left\|\mathbb{E}_{\tau \sim \pi_{\theta_1}}\left[\sum_{t \geq 0} \gamma^t \nabla_\theta \widehat{r}\left(s_t, a_t ; \theta_1\right)\right]-\mathbb{E}_{\tau \sim \pi_{\theta_2}}\left[\sum_{t \geq 0} \gamma^t \nabla_\theta \widehat{r}\left(s_t, a_t ; \theta_1\right)\right]\right\| \\
&\quad +\left\|\mathbb{E}_{\tau \sim \pi_{\theta_2}}\left[\sum_{t \geq 0} \gamma^t \nabla_\theta \widehat{r}\left(s_t, a_t ; \theta_1\right)\right]-\mathbb{E}_{\tau \sim \pi_{\theta_2}}\left[\sum_{t \geq 0} \gamma^t \nabla_\theta \widehat{r}\left(s_t, a_t ; \theta_2\right)\right]\right\| \\
& \stackrel{(i i)}{\leq} \left\|\mathbb{E}_{\tau \sim \pi_{\theta_1}}\left[\sum_{t \geq 0} \gamma^t \nabla_\theta \widehat{r}\left(s_t, a_t ; \theta_1\right)\right]-\mathbb{E}_{\tau \sim \pi_{\theta_2}}\left[\sum_{t \geq 0} \gamma^t \nabla_\theta \widehat{r}\left(s_t, a_t ; \theta_1\right)\right]\right\|  \\
&\quad  +\mathbb{E}_{\tau \sim \pi_{\theta_2}}\left[\sum_{t \geq 0} \gamma^t\left\|\nabla_\theta \widehat{r}\left(s_t, a_t ; \theta_1\right)-\nabla_\theta \widehat{r}\left(s_t, a_t ; \theta_2\right)\right\|\right] \\
& \stackrel{(iii)}{\leq}  2L_qC_d\left\|\widehat{Q}_{\widehat{r}_{\theta_1}, \pi_{\theta_1}}^{\text {soft }}-\widehat{Q}_{\widehat{r}_{\theta_2}, \pi_{\theta_2}}^{\text {soft }}\right\|_{2}+\mathcal{O}(m^{-1/2})\\
& \stackrel{(iv)}{\leq}  2L_qC_d \frac{1}{\sqrt{m}(1-\gamma)}\left\|\theta_1-\theta_2\right\|_{2}+\mathcal{O}(m^{-1/2}),
\end{aligned}
$$

where $(i)$ follows the triangle inequality, $(ii)$ is from Jensen's inequality ; $(iii)$ is from \eqref{r_smooth} and Lemma \ref{Lemma:sum_r_Q}; $(iv)$ follows inequality \eqref{soft q lip} in Lemma \ref{Lipschitz reward Lemma}.

Combining term A and term B, we obtain the approximate L-smoothness property:
\begin{equation}
    \left\|\nabla \widehat{\mathcal{L}}\left(\theta_1\right)-\nabla \widehat{\mathcal{L}}\left(\theta_2\right)\right\| \leq 
\mathcal{O}\left(m^{-1 / 2}\right)+L_c\left\|\theta_1-\theta_2\right\|,
\end{equation}

where $L_c=2L_qC_d \frac{1}{\sqrt{m}(1-\gamma)}$.

\subsection{Proof of Lemma \ref{Upper Bound of Stochastic Updates of Reward}}
$\left\|g_{k}\right\| \leq\left\|h\left(\theta_{k}, \tau^E_{k}\right)\right\|+\left\|h\left(\theta_{k}, \tau_{k}^A\right)\right\| \leq 2 \frac{1}{\sqrt{m}} \sum_{t \geq 0} \gamma^t=\frac{2 }{\sqrt{m}(1-\gamma)}=2 L_q$.

\section{Lower-level Analysis}

\begin{lemma}
\label{Lemma:pileqQ}
Given that the policies $\pi_{k+1}$ and $\pi_{\theta_k}$ are in the softmax parameterization, we can establish an upper bound for the difference between $\pi_{k+1}$ and $\pi_{\theta_k}$ as follows:
    \begin{equation}
    \left\|\log \pi_{k+1}-\log \pi_{\theta_k}\right\|_{\infty} \leq 2\left\|\widehat{Q}_{\widehat{r}_{\theta_k}, \pi_k}^{\text {soft }}-\widehat{Q}_{\widehat{r}_{\theta_k}, \pi_{\theta_k}}^{\text {soft }}\right\|_{2}
\end{equation}
\end{lemma}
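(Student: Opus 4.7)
The plan is to work directly from the softmax closed-form representations of the two policies, reducing the log-policy difference to a combination of a soft Q-function gap and a log-sum-exp normalizer gap. Since $\pi_{k+1}(\cdot \mid s) \propto \exp(\widehat{Q}_{\widehat{r}_{\theta_k}, \pi_k}^{\mathrm{soft}}(s, \cdot))$ and $\pi_{\theta_k}(\cdot \mid s) \propto \exp(\widehat{Q}_{\widehat{r}_{\theta_k}, \pi_{\theta_k}}^{\mathrm{soft}}(s, \cdot))$, taking logarithms of both normalized densities yields
\begin{equation*}
\log \pi_{k+1}(a \mid s) - \log \pi_{\theta_k}(a \mid s) = \Delta Q(s,a) - \bigl[V_k(s) - V_{\theta_k}(s)\bigr],
\end{equation*}
where $\Delta Q := \widehat{Q}_{\widehat{r}_{\theta_k}, \pi_k}^{\mathrm{soft}} - \widehat{Q}_{\widehat{r}_{\theta_k}, \pi_{\theta_k}}^{\mathrm{soft}}$ and the log-partition functions are $V_k(s) := \log \sum_{a'} \exp\bigl(\widehat{Q}_{\widehat{r}_{\theta_k}, \pi_k}^{\mathrm{soft}}(s,a')\bigr)$ with $V_{\theta_k}(s)$ defined analogously.

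The crucial step is then to control the normalizer gap. I would invoke the mean value theorem applied to the log-sum-exp function $F(x) := \log \sum_{a'} \exp(x_{a'})$: its gradient is precisely the softmax distribution over actions, and therefore has $\ell_1$-norm equal to one. Consequently $F$ is $1$-Lipschitz with respect to the $\ell_\infty$ norm on $\mathbb{R}^{|\mathcal{A}|}$, which immediately implies $|V_k(s) - V_{\theta_k}(s)| \leq \max_{a'} |\Delta Q(s,a')|$ for every state $s$.

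Combining this with the decomposition above via the triangle inequality gives $|\log \pi_{k+1}(a \mid s) - \log \pi_{\theta_k}(a \mid s)| \leq |\Delta Q(s,a)| + \max_{a'}|\Delta Q(s,a')| \leq 2 \max_{a'} |\Delta Q(s,a')|$. Taking the supremum over $(s,a)$ converts the left side into $\|\log \pi_{k+1} - \log \pi_{\theta_k}\|_\infty$ and the right side into $2\|\Delta Q\|_\infty$. Finally, using the elementary comparison $\|\cdot\|_\infty \leq \|\cdot\|_2$ on the (finite) state–action domain promotes the bound to the $\ell_2$ norm on $\Delta Q$ that appears in the statement.

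There is no genuine technical obstacle; the only substantive ingredient is the $\ell_\infty$-Lipschitzness of the log-sum-exp operator, and the remaining manipulations are routine triangle inequalities and the softmax decomposition. The main bookkeeping care is simply to keep track of the two different Q-functions whose normalizers are being compared, so that the algebraic identity for $\log \pi_{k+1} - \log \pi_{\theta_k}$ is stated for the correct pair.
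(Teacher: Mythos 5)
Your proposal is correct and follows essentially the same route as the paper's proof: the same softmax decomposition into a Q-gap plus a log-partition gap, the same mean-value-theorem argument showing log-sum-exp is $1$-Lipschitz in $\ell_\infty$ because its gradient is a probability vector, and the same final passage from $\|\cdot\|_\infty$ to $\|\cdot\|_2$. No substantive differences to report.
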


\begin{proof}
    We first denote an operator $\log \left(\|\exp (v)\|_1\right):=\log \left(|\sum_{\tilde{a} \in \mathcal{A}} \exp \left(v_{\tilde{a}}\right)|\right)$, where the vector $v \in \mathbb{R}^{|\mathcal{A}|}$ and $v=$ $\left[v_1, v_2, \cdots, v_{|\mathcal{A}|}\right]$. Then for any $v^{\prime}, v^{\prime \prime} \in \mathbb{R}^{|\mathcal{A}|}$, we have the following relation:

\begin{equation}
\label{log_exp_diff}
    \begin{aligned}
 \log \left(\left\|\exp \left(v^{\prime}\right)\right\|_1\right)-\log \left(\left\|\exp \left(v^{\prime \prime}\right)\right\|_1\right) & \stackrel{(i)}{=}\left\langle v^{\prime}-v^{\prime \prime},\left.\nabla_v \log \left(\|\exp (v)\|_1\right)\right|_{v=v_c}\right\rangle \\
& \leq\left\|v^{\prime}-v^{\prime \prime}\right\|_{\infty} \cdot\left\|\left.\nabla_v \log \left(\|\exp (v)\|_{1}\right)\right|_{v=v_c}\right\|_1 \\
& \stackrel{(i i)}{=}\left\|v^{\prime}-v^{\prime \prime}\right\|_{\infty},
\end{aligned}
\end{equation}
where (i) follows the mean value theorem and $v_c$ is a convex combination of $v^{\prime}$ and $v^{\prime \prime}$, and (ii) follows the following equalities:
$$
\left[\nabla_v \log \left(\|\exp (v)\|_1\right)\right]_i=\frac{\exp \left(v_i\right)}{\sum_{1 \leq a \leq|\mathcal{A}|} \exp \left(v_a\right)}, \quad\left\|\nabla_v \log \left(\|\exp (v)\|_1\right)\right\|_1=1, \quad \forall v \in \mathbb{R}^{|\mathcal{A}|} .
$$

For any $s \in \mathcal{S}$ and $a \in \mathcal{A}$, we have the following relationship:
\begin{equation}
\label{log_pi_diff}
    \begin{aligned}
&\quad \left|\log \left(\pi_{k+1}(a \mid s)\right)-\log \left(\pi_{\theta_k}(a \mid s)\right)\right| \\
& \stackrel{(i)}{=}\left|\log \left(\frac{\exp \left(\widehat{Q}_{\widehat{r}_{\theta_k}, \pi_k}^{\text {soft }}(s, a)\right)}{\sum_{\tilde{a}} \exp \left(\widehat{Q}_{\widehat{r}_{\theta_k}, \pi_k}^{\mathrm{soft}}(s, \tilde{a})\right)}\right)-\log \left(\frac{\exp \left(\widehat{Q}_{\widehat{r}_{\theta_k}, \pi_{\theta_k}}^{\text {soft }}(s, a)\right)}{\sum_{\tilde{a}} \exp \left(\widehat{Q}_{\widehat{r}_{\theta_k}, \pi_{\theta_k}}^{\mathrm{soft}}(s, \tilde{a})\right)}\right)\right| \\
& \stackrel{(i i)}{\leq}\left|\log \left(\sum_{\tilde{a}} \exp \left(\widehat{Q}_{\widehat{r}_{\theta_k}, \pi_k}^{\mathrm{soft}}(s, \tilde{a})\right)\right)-\log \left(\sum_{\tilde{a}} \exp \left(\widehat{Q}_{\widehat{r}_{\theta_k}, \pi_{\theta_k}}^{\text {soft }}(s, \tilde{a})\right)\right)\right|\\
&\quad +\left|\widehat{Q}_{\widehat{r}_{\theta_k}, \pi_k}^{\text {soft }}(s, a)-\widehat{Q}_{\widehat{r}_{\theta_k}, \pi_{\theta_k}}^{\text {soft }}(s, a)\right|,
\end{aligned}
\end{equation}
where $(i)$ is from $\pi(a \mid s) \propto \exp \left(Q^{\text {soft }}(s, a)\right)$ and $(ii)$ is the result of triangle inequality.

We plug \eqref{log_exp_diff} into \eqref{log_pi_diff} and yield that
\begin{equation}
    \begin{aligned}
& \quad \left|\log \left(\pi_{k+1}(a \mid s)\right)-\log \left(\pi_{\theta_k}(a \mid s)\right)\right| \\
&\leq \left|\widehat{Q}_{\widehat{r}_{\theta_k}, \pi_k}^{\mathrm{soft}}(s, a)-\widehat{Q}_{\widehat{r}_{\theta_k}, \pi_{\theta_k}}^{\text {soft }}(s, a)\right|+\max _{\tilde{a} \in \mathcal{A}}\left|\widehat{Q}_{\widehat{r}_{\theta_k}, \pi_k}^{\text {soft }}(s, \tilde{a})-\widehat{Q}_{\widehat{r}_{\theta_k}, \pi_{\theta_k}}^{\mathrm{soft}}(s, \tilde{a})\right|.
\end{aligned}
\end{equation}

Taking the infinity norm over $\mathbb{R}^{|\mathcal{S}| \cdot|\mathcal{A}|}$ on both sides, the following result holds:
\begin{equation}
\label{policy inequality}
    \left\|\log \pi_{k+1}-\log \pi_{\theta_k}\right\|_{\infty} \leq 2\left\|\widehat{Q}_{\widehat{r}_{\theta_k}, \pi_k}^{\text {soft }}-\widehat{Q}_{\widehat{r}_{\theta_k}, \pi_{\theta_k}}^{\text {soft }}\right\|_{\infty},
\end{equation}

where $\left\|\log \pi_{k+1}-\log \pi_{\theta_k}\right\|_{\infty} =\max _{s \in \mathcal{S}, a \in \mathcal{A}}\left|\log \pi_{k+1}(a \mid s)-\log \pi_{\theta_k}(a \mid s)\right|$ and $\left\|\widehat{Q}_{\widehat{r}_{\theta_k}, \pi_k}^{\text {soft }}-\widehat{Q}_{\widehat{r}_{\theta_k}, \pi_{\theta_k}}^{\text {soft }}\right\|_{\infty}=$ $\max _{s \in \mathcal{S}, a \in \mathcal{A}}\left|\widehat{Q}_{\widehat{r}_{\theta_k}, \pi_k}^{\text {soft }}(s, a)-\widehat{Q}_{\widehat{r}_{\theta_k}, \pi_{\theta_k}}^{\text {soft }}(s, a)\right|.$

By the definition of supremum norm and $l_2$ norm, we have the following relation:
\begin{equation}
\label{sup<l2}
\left\|\widehat{Q}_{\widehat{r}_{\theta_k}, \pi_k}^{\text {soft }}-\widehat{Q}_{\widehat{r}_{\theta_k}, \pi_{\theta_k}}^{\text {soft }}\right\|_{\infty}
        \leq \left\|\widehat{Q}_{\widehat{r}_{\theta_k}, \pi_k}^{\text {soft }}-\widehat{Q}_{\widehat{r}_{\theta_k}, \pi_{\theta_k}}^{\text {soft }}\right\|_{2}.       
\end{equation}
Combining the inequality \eqref{policy inequality} and \eqref{sup<l2}, we prove the lemma and now only need to analyze $\left\|\widehat{Q}_{\widehat{r}_{\theta_k}, \pi_k}^{\text {soft }}-\widehat{Q}_{\widehat{r}_{\theta_k}, \pi_{\theta_k}}^{\text {soft }}\right\|_{2}$ to show the convergence of the policy parameter.
\end{proof}

\begin{theorem}
\label{Projection Theorem for Convex Sets}
     (Norm of the Projection Operator). Let $v \in \mathbb{R}^n$ and $\Pi_{\mathcal{F}_{B, m}}$ projects $v$ onto the subspace $\mathcal{F}_{B, m}$, 
$$
\left\|\Pi_{\mathcal{F}_{B, m}} v\right\|_2 \leq\|v\|_2.
$$

\end{theorem}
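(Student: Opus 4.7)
The plan is to reduce the claim to the standard fact that orthogonal projection onto a closed convex set containing the origin is non-expansive. First I would establish that the zero function lies in $\mathcal{F}_{B,m}$. Inspecting Definition \ref{Approximate Function Class}, the element corresponding to $W = 0 \in \mathbb{R}^{m\times d}$ is identically zero, and $0 \in S_B$ whenever $B \geq \|W_0\|_2$, which is the regime in which this class is used throughout the paper. So $0 \in \mathcal{F}_{B,m}$ and consequently $\Pi_{\mathcal{F}_{B,m}} 0 = 0$.

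Next I would invoke the variational characterization of the projection onto a closed convex set in the Hilbert space $L_2(\mu)$: for $p := \Pi_{\mathcal{F}_{B,m}} v$, we have $\langle v - p,\, w - p\rangle_\mu \leq 0$ for every $w \in \mathcal{F}_{B,m}$. Taking $w = 0$ in this inequality yields
\begin{equation*}
    \langle v - p,\, -p\rangle_\mu \leq 0 \quad \Longleftrightarrow \quad \|p\|_2^2 \leq \langle v,\, p\rangle_\mu.
\end{equation*}
A single application of the Cauchy--Schwarz inequality then gives $\|p\|_2^2 \leq \|v\|_2 \cdot \|p\|_2$, and dividing by $\|p\|_2$ (the case $p = 0$ being trivial) finishes the argument.

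There is really no serious obstacle here; the only subtle point is verifying that $\mathcal{F}_{B,m}$ actually contains the origin so that the obtuse-angle inequality can be specialized at $w = 0$. If one preferred to avoid this issue entirely, an equivalent route is to treat $\mathcal{F}_{B,m}$ as (contained in) a linear subspace spanned by the feature maps $\tfrac{1}{\sqrt{m}} b^j \mathbbm{1}\{W_0^{j\top}x > 0\}\, x$ and appeal directly to the Pythagorean identity $\|v\|_2^2 = \|p\|_2^2 + \|v - p\|_2^2$, which delivers the bound immediately. Either route stays well within standard Hilbert-space geometry and does not rely on any of the neural-network-specific machinery developed earlier in the appendix.
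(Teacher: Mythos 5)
Your argument is correct, and your primary route is genuinely different from the paper's. The paper's entire proof is the one-line orthogonal decomposition $v=\Pi_{\mathcal{F}_{B,m}}v+r$ with $r$ orthogonal to the projection, i.e.\ the Pythagorean identity $\|v\|_2^2=\|\Pi_{\mathcal{F}_{B,m}}v\|_2^2+\|r\|_2^2$ --- which is exactly the fallback route you sketch at the end. Your main argument instead treats $\Pi_{\mathcal{F}_{B,m}}$ as projection onto a closed convex set, specializes the obtuse-angle inequality $\langle v-p,\,w-p\rangle\le 0$ at $w=0$, and closes with Cauchy--Schwarz. The convex route is arguably the more faithful one: despite the theorem's wording, $\mathcal{F}_{B,m}$ as built from Definition~\ref{Approximate Function Class} is the image of the ball $S_B$ centered at $W_0$, a bounded convex set rather than a linear subspace, so the paper's claim that the residual is orthogonal to the projection is not automatic and the Pythagorean shortcut silently assumes structure the set does not literally have. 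Your approach correctly isolates the hypothesis that is actually needed --- that the origin lies in the set --- although you leave its verification (that $\|W_0\|_2\le B$, which is not obvious for a random $m\times d$ initialization) as an assertion about ``the regime in which the class is used'' rather than a checked fact. In short: the paper buys brevity by asserting subspace structure; you buy robustness to the set merely being convex, at the cost of one membership claim that still deserves a line of justification.
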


The theorem can be easily verified by decomposing $v$ into $v=\Pi_{\mathcal{F}_{B, m}} v+r$, where $r$ is orthogonal to $\Pi_{\mathcal{F}_{B, m}} v$.
\vspace{-5pt}
\subsection{Algorithm 2}
\label{thm 1 proof}
\vspace{-5pt}
We set  $\eta_{k}=\min \{1 / \sqrt{K}, \frac{1-\gamma}{8} \}$, $
\text { Note that when } K \geq(8 /(1-\gamma))^2 \text {, we have } \eta_{k}=\frac{1}{\sqrt{K}}
.$ 
Therefore, 

$$
\begin{aligned}
\frac{1}{K} \sum_{k=0}^{K-1}\left\|\widehat{Q}_{\widehat{r}_{\theta_k}, \pi_k}^{\text {soft }}-\widehat{Q}_{\widehat{r}_{\theta_k}, \pi_{\theta_k}}^{\text {soft }}\right\|_{2} &\stackrel{(i)} \leq \frac{B}{K}\sum_{k=0}^{K-1} (k+2)^{-1 / 4}+\mathcal{O}\left(B^{3/2} m^{-1 / 2}+B^{5 / 4} m^{-1 / 4}\right) \\
 &\leq  \frac{B}{K} \int_{k=0}^{K-1}(k+1)^{-\frac{1}{4}}+\mathcal{O}\left(B^{3/2} m^{-1 / 2}+B^{5 / 4} m^{-1 / 4}\right)\\
 &\leq BK^{-\frac{1}{4}}+\mathcal{O}\left(B^{3/2} m^{-1 / 2}+B^{5 / 4} m^{-1 / 4}\right),
\end{aligned}
$$
 where $(i)$ comes from the dynamic truncation design that $T=k+2$ in Algorithm \ref{alg:1} and Lemma \ref{Convergence of Stochastic Update}.
 Therefore, we have:
 \begin{equation}
 \label{Alg2_Q_Bound}
     \frac{1}{K} \sum_{k=0}^{K-1}\left\|\widehat{Q}_{\widehat{r}_{\theta_k}, \pi_k}^{\text {soft }}-\widehat{Q}_{\widehat{r}_{\theta_k}, \pi_{\theta_k}}^{\text {soft }}\right\|_{2}  \leq BK^{-\frac{1}{4}}+\mathcal{O}\left(B^{3/2} m^{-1 / 2}+B^{5 / 4} m^{-1 / 4}\right).
 \end{equation}
 
Following \eqref{policy inequality}, we obtain that 
\begin{equation}
    \label{algorithm 2 policy}
    \frac{1}{K} \sum_{k=0}^{K-1}\left\|\log \pi_{k+1}-\log \pi_{\theta_k}\right\|_{\infty} =\mathcal{O}(K^{-\frac{1}{4}})+\mathcal{O}\left(B^{3/2} m^{-1 / 2}+B^{5 / 4} m^{-1 / 4}\right).
\end{equation}

\subsection{Algorithm 3}
\label{A3_policy_proof}
Telescope \eqref{eq in Descent Lemma} in Lemma \ref{Descent Lemma in soft Q-function} \text { for } k=0, \ldots, K-1,

\begin{equation}
\label{soft Q telescope}
\begin{aligned}
   \frac{1}{K} \sum_{k=0}^{K-1}\left\|\widehat{Q}_{\widehat{r}_{\theta_k}, \pi_k}^{\text {soft }}-\widehat{Q}_{\widehat{r}_{\theta_k}, \pi_{\theta_k}}^{\text {soft }}\right\|^2_{2}  & \leq \frac{1}{K} \sum_{k=0}^{K-1} \frac{\big[\mathbb{E}\left[\left\|W_k-W_{\pi_k}^*\right\|_2^2\right]-\mathbb{E}\left[\left\|W_{k+1}-W_{\pi_{k}}^*\right\|_2^2\right]+\eta^2 \sigma_z^2\big]}{2\eta (1-\gamma)-8\eta^2}\\
& \quad +\mathcal{O}\left(B^3 m^{-1 }+B^{5 / 2} m^{-1 / 2}\right).
\end{aligned}
\end{equation}

For each $k$, we apply triangle inequality on the right-hand side of \eqref{soft Q telescope} and yield the following inequality:

\begin{equation}
\label{soft Q telescope+triangle}
\frac{1}{K} \sum_{k=0}^{K-1}\left\|\widehat{Q}_{\widehat{r}_{\theta_k}, \pi_k}^{\text {soft }}-\widehat{Q}_{\widehat{r}_{\theta_k}, \pi_{\theta_k}}^{\text {soft }}\right\|^2_{2} \leq \frac{1}{K} \sum_{k=0}^{K-1} \frac{\mathbb{E}\left[\left\|W_k-W_{k+1}\right\|_2^2\right]+\eta^2 \sigma_z^2}{2\eta (1-\gamma)-8\eta^2}+\mathcal{O}\left(B^3 m^{-1 }+B^{5 / 2} m^{-1 / 2}\right)
\end{equation}

Recall that projection step (line 6) $W_{k+1}=\Pi_{\mathcal{F}_{B, m}}\widetilde{W}_{k+1}$ in Algorithm \ref{alg:3}, we apply Theorem \ref{Projection Theorem for Convex Sets} to \eqref{soft Q telescope+triangle} and the following relation holds:
\begin{equation}
\label{soft Q telescope+triangle+Projection}
\frac{1}{K} \sum_{k=0}^{K-1}\left\|\widehat{Q}_{\widehat{r}_{\theta_k}, \pi_k}^{\text {soft }}-\widehat{Q}_{\widehat{r}_{\theta_k}, \pi_{\theta_k}}^{\text {soft }}\right\|^2_{2} \leq  \frac{1}{K} \sum_{k=0}^{K-1} \frac{\mathbb{E}\left[\left\| W_k-\widetilde W_{k+1}\right\|_2^2\right]+\eta^2 \sigma_z^2}{2\eta (1-\gamma)-8\eta^2}+\mathcal{O}\left(B^3 m^{-1 }+B^{5 / 2} m^{-1 / 2}\right).
\end{equation}

By the TD update (line 5) of the Algorithm \ref{alg:3}, we know that 
\begin{equation}
\label{TD update of algorithm 3}
    \widetilde{W}_{k+1}= W_k-\eta \delta \cdot \nabla_W \widehat{Q}(s, a ; W_k).
\end{equation}

Plug \eqref{TD update of algorithm 3}
into \eqref{soft Q telescope+triangle+Projection}, we get that:
\begin{equation*}
\begin{aligned}
\frac{1}{K} \sum_{k=0}^{K-1}\left\|\widehat{Q}_{\widehat{r}_{\theta_k}, \pi_k}^{\text {soft }}-\widehat{Q}_{\widehat{r}_{\theta_k}, \pi_{\theta_k}}^{\text {soft }}\right\|^2_{2} &\leq \frac{1}{K} \sum_{k=0}^{K-1} \frac{\mathbb{E}\left[\eta^2\delta_k^2\left\|\nabla_W \widehat{Q}(s, a ; W_k)\right\|_2^2\right]+\eta^2 \sigma_z^2}{2\eta (1-\gamma)-8\eta^2}+\mathcal{O}\left(B^3 m^{-1 }+B^{5 / 2} m^{-1 / 2}\right)\\
 &  \stackrel{(i)} \leq \frac{B^2+\sigma^2_z}{K} \sum_{k=0}^{K-1}\frac{\eta^2 }{2\eta (1-\gamma)-8\eta^2} +\mathcal{O}\left(B^3 m^{-1 }+B^{5 / 2} m^{-1 / 2}\right),
\end{aligned}
\end{equation*}

where (i) comes from Lemma \ref{variance bound}.

If we set the stepsize $\eta \leq \frac{1}{\sqrt{K}}$, we obtain the following result:
\begin{equation}
\label{soft Q telescope+triangle+Projection+stepsize}
 \frac{1}{K} \sum_{k=0}^{K-1}\left\|\widehat{Q}_{\widehat{r}_{\theta_k}, \pi_k}^{\text {soft }}-\widehat{Q}_{\widehat{r}_{\theta_k}, \pi_{\theta_k}}^{\text {soft }}\right\|^2_{2}   \leq \frac{B^2+\sigma^2_z}{K} \frac{1 }{2\eta (1-\gamma)-8\eta^2}+\mathcal{O}\left(B^3 m^{-1}+B^{5 / 2} m^{-1 / 2}\right).
\end{equation}
We select $\eta=\min \{K^{-\frac{3}{4}},(1-\gamma) / 8\}$, which satisfies the condition $\eta \leq \frac{1}{\sqrt{K}}$ required in \eqref{soft Q telescope+triangle+Projection+stepsize}. Note that when $K \geq(8 /(1-\gamma))^{\frac{4}{3}}$, we have $\eta=K^{-\frac{3}{4}}$ and
\begin{equation*}
   K^{\frac{3}{4}}\cdot\left(2 \eta(1-\gamma)-8 \eta^2\right)=2(1-\gamma)-8 K^{-\frac{3}{4}} \geq 1-\gamma . 
\end{equation*}

When $K <(8 /(1-\gamma))^{\frac{4}{3}}$, we have $\eta=(1-\gamma) / 8$ and
\begin{equation*}
    K^{\frac{3}{4}} \cdot\left(2 \eta(1-\gamma)-8 \eta^2\right)=K^{\frac{3}{4}} \cdot(1-\gamma)^2 / 8 \geq(1-\gamma)^2 / 8 .
\end{equation*}
Since $|1-\gamma|<1$, we obtain that for any $K \in \mathbb{N}$,
\begin{equation}
\label{convergence critical bound}
    \frac{1}{K^{\frac{3}{4}} \cdot\left(2 \eta(1-\gamma)-8 \eta^2\right)} \leq \frac{8}{(1-\gamma)^2}.
\end{equation}
Then, 
\begin{equation}
\label{soft Q final bound}
     \frac{1}{K} \sum_{k=0}^{K-1}\left\|\widehat{Q}_{\widehat{r}_{\theta_k}, \pi_k}^{\text {soft }}-\widehat{Q}_{\widehat{r}_{\theta_k}, \pi_{\theta_k}}^{\text {soft }}\right\|^2_{2}   \leq \frac{B^2+\sigma^2_z}{K^{\frac{1}{4}}} \frac{8 }{ (1-\gamma)^2}+\mathcal{O}\left(B^3 m^{-1}+B^{5 / 2} m^{-1 / 2}\right).
\end{equation}

Following Lemma \ref{Lemma:pileqQ}, we have:
\begin{equation}
    \begin{aligned}
    \frac{1}{K} \sum_{k=0}^{K-1}\left\|\log \pi_{k+1}-\log \pi_{\theta_k}\right\|_{\infty} &\leq \frac{2}{K} \sum_{k=0}^{K-1}\left\|\widehat{Q}_{\widehat{r}_{\theta_k}, \pi_k}^{\text {soft }}-\widehat{Q}_{\widehat{r}_{\theta_k}, \pi_{\theta_k}}^{\text {soft }}\right\|_{2}\\
 & \stackrel{(i)}\leq 2 \sqrt{\frac{1}{K} \sum_{k=0}^{K-1}\left\|\widehat{Q}_{\widehat{r}_{\theta_k}, \pi_k}^{\text {soft }}-\widehat{Q}_{\widehat{r}_{\theta_k}, \pi_{\theta_k}}^{\text {soft }}\right\|^2_{2} }\\
 &\stackrel{(ii)}\leq 2 \sqrt{\frac{B^2+\sigma^2_z}{K} \frac{1}{2\eta (1-\gamma)-8\eta^2}+\mathcal{O}\left(B^3 m^{-1 }+B^{5 / 2} m^{-1 / 2}\right)}\\
 &\stackrel{(iii)} \leq 2 \sqrt{B^2+\sigma^2_z}\sqrt{\frac{1}{K^{\frac{1}{4}}}\frac{8}{(1-\gamma)^2}}+\mathcal{O}\left(B^{3/2} m^{-1 / 2}+B^{5 / 4} m^{-1 / 4}\right),
\end{aligned}
\end{equation}
where (i) comes from the Cauchy-Schwartz inequality, (ii) is due to \eqref{soft Q final bound}, and (iii) is given by  \eqref{convergence critical bound}.
Therefore, we obtain the convergence guarantee of policy parameter for Algorithm \ref{alg:3}:
\begin{equation}
    \frac{1}{K} \sum_{k=0}^{K-1}\left\|\log \pi_{k+1}-\log \pi_{\theta_k}\right\|_{\infty} = \mathcal{O}(K^{-\frac{1}{8}})+\mathcal{O}\left(B^{3/2} m^{-1 / 2}+B^{5 / 4} m^{-1 / 4}\right).
\end{equation}

\section{Convergence of reward parameters}
Since Proposition \ref{objective smoothness} establishes the approximate Lipschitz smooth property for the objective function, we have the following result of $\widehat{\mathcal{L}}(\theta)$:
\begin{equation}
\label{L_smooth_ori}
    \widehat{\mathcal{L}}\left(\theta_{k+1}\right)+\mathcal{O}\left(m^{-1/2}\right) \geq \widehat{\mathcal{L}}\left(\theta_K\right)+\left\langle\nabla \widehat{\mathcal{L}}\left(\theta_K\right), \theta_{k+1}-\theta_k\right\rangle-\frac{L_c}{2}\left\|\theta_{k+1}-\theta_k\right\|^2.
\end{equation}

Notice the reward update rule (line 13) of the Algorithm \ref{alg:3}, 
\begin{equation}
\label{theta update rule}
    \theta_{k+1}:=\theta_k+\alpha g_k.
\end{equation}

We plug \eqref{theta update rule} into \eqref{L_smooth_ori} and get:
\begin{equation}
\label{L+smooth+update}
    \begin{aligned}
\widehat{\mathcal{L}}\left(\theta_{k+1}\right)+\mathcal{O}\left(m^{-1/2}\right) &\geq \widehat{\mathcal{L}}\left(\theta_K\right)+\alpha\left(\nabla \widehat{\mathcal{L}}\left(\theta_K\right), g_k\right\rangle-\frac{L_c \alpha^2}{2}\left\|g_k\right\|^2\\
& =\widehat{\mathcal{L}}\left(\theta_K\right)+\alpha\left(\nabla \widehat{\mathcal{L}}\left(\theta_K\right), g_k-\nabla \widehat{\mathcal{L}}\left(\theta_K\right)\right\rangle+\alpha\left\|\nabla \widehat{\mathcal{L}}\left(\theta_K\right)\right\|^2-\frac{L_c \alpha^2}{2}\left\|g_k\right\|^2 \\
& \stackrel{(i)}{\geq} \widehat{\mathcal{L}}\left(\theta_K\right)+\alpha\left(\nabla \widehat{\mathcal{L}}\left(\theta_K\right), g_k-\nabla \widehat{\mathcal{L}}\left(\theta_K\right)\right\rangle+\alpha\left\|\nabla \widehat{\mathcal{L}}\left(\theta_K\right)\right\|^2-2 L_c L_q^2 \alpha^2,
\end{aligned}
\end{equation}
where (i) is from the upper bound of stochastic reward updates in Lemma \ref{Upper Bound of Stochastic Updates of Reward}.

Taking an expectation over both sides of \eqref{L+smooth+update}, it holds that
\begin{equation}
\label{L_smooth_exp}
    \begin{aligned}
&\quad \mathbb{E}\left[\widehat{\mathcal{L}}\left(\theta_{k+1}\right)\right]+\mathcal{O}\left(m^{-1/2}\right)\\
& \geq \mathbb{E}\left[\widehat{\mathcal{L}}\left(\theta_K\right)\right]+\alpha \mathbb{E}\left[\left\langle\nabla \widehat{\mathcal{L}}\left(\theta_K\right), g_k-\nabla \widehat{\mathcal{L}}\left(\theta_K\right)\right\rangle\right]+\alpha \mathbb{E}\left[\left\|\nabla \widehat{\mathcal{L}}\left(\theta_K\right)\right\|^2\right]-2 L_c L_q^2 \alpha^2 \\
& =\mathbb{E}\left[\widehat{\mathcal{L}}\left(\theta_K\right)\right]+\alpha \mathbb{E}\left[\left\langle\nabla \widehat{\mathcal{L}}\left(\theta_K\right), \mathbb{E}\left[g_k-\nabla \widehat{\mathcal{L}}\left(\theta_K\right) \mid \theta_k\right]\right\rangle\right]+\alpha \mathbb{E}\left[\left\|\nabla \widehat{\mathcal{L}}\left(\theta_K\right)\right\|^2\right]-2 L_c L_q^2 \alpha^2. \\
\end{aligned}
\end{equation}

Combining the definition of $g_k$ in line 12 of Algorithm \ref{alg:3} and Lemma \ref{L gradient} into \eqref{L_smooth_exp}, we have:
\begin{equation*}
\begin{aligned}
&\quad \mathbb{E}\left[\widehat{\mathcal{L}}\left(\theta_{k+1}\right)\right]+\mathcal{O}\left(m^{-1/2}\right)\\
& \geq \mathbb{E}\left[\widehat{\mathcal{L}}\left(\theta_K\right)\right]+\alpha \mathbb{E}\left[\left\|\nabla \widehat{\mathcal{L}}\left(\theta_K\right)\right\|^2\right]-2 L_c L_q^2 \alpha^2 \\
& \quad +\alpha \mathbb{E}\left[\left\langle\nabla \widehat{\mathcal{L}}\left(\theta_K\right), \mathbb{E}_{\tau^E \sim \pi_{\theta_k}}\left[\sum_{t \geq 0} \gamma^t \nabla_\theta \widehat{r}\left(s_t, a_t ; \theta_t\right)\right]-\mathbb{E}_{\tau^E \sim \pi_{k+1}}\left[\sum_{t \geq 0} \gamma^t \nabla_\theta \widehat{r}\left(s_t, a_t ; \theta_t\right)\right]\right\rangle\right]\\
& \stackrel{(i)}{\geq} \mathbb{E}\left[\widehat{\mathcal{L}}\left(\theta_K\right)\right]-2 \alpha L_q \mathbb{E}\left[\left\|\mathbb{E}_{\tau^E \sim \pi_{\theta_k}}\left[\sum_{t \geq 0} \gamma^t \nabla_\theta \widehat{r}\left(s_t, a_t ; \theta_k\right)\right]-\mathbb{E}_{\tau^E \sim \pi_{k+1}}\left[\sum_{t \geq 0} \gamma^t \nabla_\theta \widehat{r}\left(s_t, a_t ; \theta_k\right)\right]\right\|\right]\\
& \quad+\alpha \mathbb{E}\left[\left\|\nabla \widehat{\mathcal{L}}\left(\theta_K\right)\right\|^2\right]-2 L_c L_q^2 \alpha^2,
\end{aligned}
\end{equation*}
where $(i)$ follows Lemma \ref{Lipschitz reward Lemma} and Cauchy-Schwartz inequality.

Therefore, we derive the following important relation by rearranging the inequality above:
\begin{equation}
\label{gradient_inequality_ori}
\begin{aligned}
    \alpha \mathbb{E}\left[\left\|\nabla \widehat{\mathcal{L}}\left(\theta_K\right)\right\|^2\right] &\leq 2 L_c L_q^2 \alpha^2+\mathbb{E}\left[\widehat{\mathcal{L}}\left(\theta_{k+1}\right)-\widehat{\mathcal{L}}\left(\theta_K\right)\right]+\mathcal{O}\left(m^{-1 / 2}\right)\\
    &+2 \alpha L_q \mathbb{E}\left[\left\|\mathbb{E}_{\tau^E \sim \pi_{\theta_k}}\left[\sum_{t \geq 0} \gamma^t \nabla_\theta \widehat{r}\left(s_t, a_t ; \theta_k\right)\right]-\mathbb{E}_{\tau^E \sim \pi_{k+1}}\left[\sum_{t \geq 0} \gamma^t \nabla_\theta \widehat{r}\left(s_t, a_t ; \theta_k\right)\right]\right\|\right].
\end{aligned}
\end{equation}

By invoking Lemma \ref{Lemma:sum_r_Q}, the following relation holds:
\begin{equation}
\label{gradient_inequality+Q}
    \alpha \mathbb{E}\left[\left\|\nabla \widehat{\mathcal{L}}\left(\theta_K\right)\right\|^2\right] \leq 2 L_c L_q^2 \alpha^2+\alpha C_1 \mathbb{E}\left[\left\|Q_{r_{\theta_k}, \pi_{\theta_k}}^{\text {soft }}-Q_{r_{\theta_k}, \pi_k}^{\text {sof }}\right\|_{2}\right]+\mathbb{E}\left[\widehat{\mathcal{L}}\left(\theta_{k+1}\right)-\widehat{\mathcal{L}}\left(\theta_K\right)\right]+\mathcal{O}\left(m^{-1 / 2}\right),
\end{equation}
where $C_1=4\alpha C_d L_q^2 $.

Summing the inequality \eqref{gradient_inequality+Q} from $k=0$ to $K-1$ and dividing both sides by $\alpha K$, it leads to
\begin{equation}
\label{reward_gradient_final}
\begin{aligned}
    \frac{1}{K} \sum_{k=0}^{K-1} \mathbb{E}\left[\left\|\nabla \widehat{\mathcal{L}}\left(\theta_K\right)\right\|^2\right]& \leq 2 L_c L_q^2 \alpha+\frac{C_1}{K} \sum_{k=0}^{K-1} \mathbb{E}\left[\left\|\widehat{Q}_{\widehat{r}_{\theta_k}, \pi_{\theta_k}}^{\text {soft }}-\widehat{Q}_{\widehat{r}_{\theta_k}, \pi_k}^{\text {soft }}\right\|_{2}\right]+\mathbb{E}\left[\frac{\widehat{\mathcal{L}}\left(\theta_K\right)-\widehat{\mathcal{L}}\left(\theta_0\right)}{K \alpha}\right]\\
    &\quad+\mathcal{O}\left(m^{-1 / 2}\right).
\end{aligned}
\end{equation}

\subsection{Theorem \ref{Theorem 1}}
The stepsize of reward update is set as $\alpha=\frac{\alpha_0}{K^\sigma}, \text { where } \sigma>0$. We plug \eqref{Alg2_Q_Bound} into \eqref{reward_gradient_final} and get the convergence result of reward parameter:
\begin{equation}
\begin{aligned}
     \frac{1}{K} \sum_{K=0}^{K-1} \mathbb{E}\left[\left\|\nabla \widehat{\mathcal{L}}\left(\theta_K\right)\right\|^2\right]&=\mathcal{O}\left(K^{-\sigma}\right)+\mathcal{O}\left(K^{-1+\sigma}\right)+\mathcal{O}\left(K^{-\frac{1}{4}}\right) \\&\quad+\mathcal{O}\left(m^{-1/2}+B^{3/2} m^{-1 /2}+B^{5 / 4} m^{-1 /4}\right).
\end{aligned}
\end{equation}

\subsection{Theorem \ref{Theorem 2}}
The stepsize of reward update is set as $\alpha=\frac{\alpha_0}{K^\sigma}, \text { where } \sigma>0$. We plug \eqref{soft Q final bound} into \eqref{reward_gradient_final} and get the convergence result of reward parameter:

\begin{equation}
\begin{aligned}
    \frac{1}{K} \sum_{K=0}^{K-1} \mathbb{E}\left[\left\|\nabla \widehat{\mathcal{L}}\left(\theta_K\right)\right\|^2\right]&=\mathcal{O}\left(K^{-\sigma}\right)+\mathcal{O}\left(K^{-1+\sigma}\right)+\mathcal{O}\left(K^{-\frac{1}{8}}\right) +\mathcal{O}\left(K^{-1+\sigma}\right)+\mathcal{O}\left(K^{-\frac{1}{4}}\right) \\
    &\quad +\mathcal{O}\left(m^{-1/2}+B^{3/2} m^{-1 / 4}+B^{5 / 4} m^{-1 / 8}\right).
\end{aligned}
\end{equation}

\section{Global Optimality of reward and policy parameters}
\label{Optimality_supp_proof}
 Now we consider the general case that reward is parametrized by neural networks.  Since $\widehat{\mathcal{L}}(\theta)$ is nonconcave in $\theta$, showing the global optimality of problem \eqref{empirical formulation} is non-trivial, it means that the stationary point of $\widehat{\mathcal{L}}(\theta)$ is not necessarily the global optimum. Therefore, we translate this problem to an equivalent saddle point problem:
\begin{equation*}
 \max _\theta \min _\pi \widehat{L}(\theta, \pi):=\mathbb{E}_{\tau^{\mathrm{E}} \sim\mathcal{D}}\left[\sum_{t=0}^{\infty} \gamma^t\left(\widehat{r}\left(s_t, a_t ; \theta\right))\right)\right]-\mathbb{E}_{\tau^{\mathrm{A}} \sim \pi}\left[\sum_{t=0}^{\infty} \gamma^t\left(\widehat{r}\left(s_t, a_t ; \theta\right)+\mathcal{H}\left(\pi\left(\cdot \mid s_t\right)\right)\right)\right] .
\end{equation*}

Based on the equivalence between \eqref{empirical formulation} and \eqref{bi-minmax} , to show the global optimality of reward and policy parameters, we first show that any stationary point $\tilde{\theta}$ in \eqref{empirical formulation} together with its corresponding optimal policy $\pi_{\tilde{\theta}}$ consist of a saddle point  $\left(\tilde{\theta}, \pi_{\tilde{\theta}}\right)$ to the problem \eqref{bi-minmax}  when the reward is overparameterized. Later, we illustrate that the reward parameter $\tilde{\theta}$ of the saddle point is the global maximizer of the empirical bi-level probelm \eqref{empirical formulation}. Recall that a tuple $\left(\tilde{\theta}, \pi_{\tilde{\theta}}\right)$ is called a saddle point of $\widehat{L}(\cdot, \cdot)$ if the following condition holds:
\begin{equation}
    \label{saddle point}
    \widehat{L}\left(\theta, \pi_{\tilde{\theta}}\right) \leq \widehat{L}\left(\tilde{\theta}, \pi_{\tilde{\theta}}\right) \leq \widehat{L}(\tilde{\theta}, \pi)
\end{equation}
for any other reward parameter $\theta$ and policy $\pi$. To show that $\left(\tilde{\theta}, \pi_{\tilde{\theta}}\right)$ satisfies the condition \eqref{saddle point}, we prove the following conditions respectively:
\begin{equation}
\label{reward saddle}
    \tilde{\theta} \in \arg \max _\theta \widehat{L}\left(\theta, \pi_{\tilde{\theta}}\right),
\end{equation}

\begin{equation}
    \label{policy saddle}
    \pi_{\tilde{\theta}} \in \arg \min _\pi \widehat{L}(\tilde{\theta}, \pi) .
\end{equation}
Here, we first show that any stationary point $\tilde{\theta}$ of the objective ${\widehat{\mathcal{L}}}(\cdot)$ in \eqref{empirical formulation} satisfies the optimality condition \eqref{reward saddle}. By the first-order condition, any stationary point $\tilde{\theta}$ of the objective ${\widehat{\mathcal{L}}}(\cdot)$ satisfies:
\begin{equation}
\label{first order condition}
    \nabla \widehat{\mathcal{L}}(\tilde{\theta})= \mathbb{E}_{\tau^E \sim \mathcal{D}}\left[\sum_{t \geq 0} \gamma^t \nabla_\theta \widehat{r}\left(s_t, a_t ; \tilde{\theta}\right)\right]-\mathbb{E}_{\tau^{\mathrm{A}} \sim \pi_\theta}\left[\sum_{t \geq 0} \gamma^t \nabla_\theta \widehat{r}\left(s_t, a_t ; \tilde{\theta}\right)\right]=0.
\end{equation}


Then we can look back at the minimax formulation $\widehat{L}(\cdot, \cdot)$ in \eqref{new_formulation}. Given any fixed policy $\pi$, we can write down the gradient of $\widehat{L}(\pi, \theta)$ w.r.t. the reward parameter $\theta$ explicitly as below:
\[
    \nabla_\theta  \widehat{L}(\pi_\theta, \theta)=\mathbb{E}_{\tau^E \sim \mathcal{D}}\left[\sum_{t \geq 0} \gamma^t \nabla_\theta \widehat{r}\left(s_t, a_t ; \tilde{\theta}\right)\right]-\mathbb{E}_{\tau^{\mathrm{A}} \sim \pi_\theta}\left[\sum_{t \geq 0} \gamma^t \nabla_\theta \widehat{r}\left(s_t, a_t ; \tilde{\theta}\right)\right].
\]

From the formulation, we could notice that $\nabla_\theta \widehat{L}\left(\theta_k,\pi_{\theta_k}\right)=\nabla \widehat{L}(\theta_k)$. Therefore, Proposition \ref{objective smoothness} can be generalized to the Theorem \ref{Thm: concavity}.

Due to the first-order condition we show in \eqref{first order condition}, we obtain the following first-order optimality result for the minimax objective $\widehat{L}(\cdot,\cdot)$:
\begin{equation}
\label{first_order_saddle}
    \nabla_\theta \widehat{L}\left(\theta=\tilde{\theta}, \pi=\pi_{\tilde{\theta}}\right)=0.
\end{equation}
 From Theorem \ref{Thm: concavity}, we have shown $\widehat{L}(\cdot, \cdot)$ is approximately concave in terms of the reward parameter $\theta$ given any fixed policy $\pi$, if the neural network is sufficiently large. With the concavity property in \eqref{new_formulation} and the condition in \eqref{first_order_saddle}, we have completed the proof of the first condition \eqref{reward saddle}.

Next, we move to prove the second condition \eqref{policy saddle}. Recall that $\pi_{\tilde{\theta}}$ is the optimal policy defined in \eqref{empirical formulation} under the reward parameter $\theta$. By observing the objective $\widehat{L}(\cdot, \cdot)$, we obtain the following result:
\begin{equation}
 \label{L_pi_division}   
\widehat{L}(\theta, \pi):=  \underbrace{\mathbb{E}_{\tau^{\mathrm{E}} \sim\left(\eta, \mathcal{D}\right)}\left[\sum_{t=0}^{\infty} \gamma^t\left(\widehat{r}\left(s_t, a_t ; \theta\right)\right)\right]}_{\text {Term } \mathrm{I}_1 \text {: independent of } \pi} -\underbrace{\mathbb{E}_{\tau^{\mathrm{A}} \sim(\eta, \pi)}\left[\sum_{t=0}^{\infty} \gamma^t\left(\widehat{r}\left(s_t, a_t ; \theta\right)+\mathcal{H}\left(\pi\left(\cdot \mid s_t\right)\right)\right)\right]}_{\text {Term } \mathrm{I}_2 \text { : a function of the policy } \pi} .
\end{equation}
The second term in \eqref{L_pi_division} coincides with the lower-level objective in \eqref{empirical formulation}. Moreover, since $\pi_{\tilde{\theta}}$ is the optimal policy defined in \eqref{empirical formulation} under reward parameter $\tilde{\theta}$, the following statement naturally holds:
\begin{equation*}
    \pi_{\tilde{\theta}} \in \arg \min _\pi \widehat{L}(\tilde{\theta}, \pi).
\end{equation*}
We now complete the proof to show the second condition \eqref{policy saddle}. Hence, we prove that any stationary point $\tilde{\theta}$ of the bi-level objective $\widehat{\mathcal{L}}(\cdot)$ together with its optimal policy $\pi_{{\tilde{\theta}}}$ is a saddle point of $\widehat{L}(\cdot,\cdot)$

Given a saddle point $\left(\tilde{\theta}, \pi_{\tilde{\theta}}\right)$ of $\widehat{L}(\cdot, \cdot)$, we have the following property that:
\begin{equation}
    \label{saddle inequality}
    \min _\pi \max _\theta \widehat{L}(\theta, \pi) \leq \max _\theta \widehat{L}\left(\theta, \pi_{\tilde{\theta}}\right) \stackrel{(i)}{=} \widehat{L}(\tilde{\theta}, \pi_{\tilde{\theta}})\stackrel{(i i)}{=} \min _\pi \widehat{L}(\tilde{\theta}, \pi) \leq \max _\theta \min _\pi \widehat{L}(\theta, \pi),
\end{equation}

where (i) follows the optimality condition \eqref{reward saddle} and (ii) follows the optimality condition \eqref{policy saddle}. According to the minimax inequality, we always have the following condition that
\begin{equation}
    \label{minimax_inequality}
    \max _\theta \min _\pi {L}(\theta, \pi) \leq \min _\pi \max _\theta {L}(\theta, \pi) .
\end{equation}

Putting the saddle point inequality \eqref{saddle inequality} and the minimax inequality \eqref{minimax_inequality} together, the following equality holds:
$$
\min _\pi \max _\theta \widehat{L}(\theta, \pi)=\max _\theta \widehat{L}\left(\theta, \pi_{\tilde{\theta}}\right)=\widehat{L}\left(\tilde{\theta}, \pi_{\tilde{\theta}}\right)=\min _\pi \widehat{L}(\tilde{\theta}, \pi)=\max _\theta \min _\pi \widehat{L}(\theta, \pi) .
$$

Therefore, for any saddle point $\left(\tilde{\theta}, \pi_{\tilde{\theta}}\right)$, the reward parameter $\tilde{\theta}$ and the corresponding policy $\pi_{\tilde{\theta}}$ satisfy the following:

\begin{align}
& \tilde{\theta} \in \arg \max _\theta \min _\pi \widehat{L}(\theta, \pi), \label{reward optimality}\\
& \pi_{\tilde{\theta}} \in \arg \min _\pi \max _\theta \widehat{L}(\theta, \pi). \label{policy optimality}
\end{align}

Due to the expression of the bi-level objective ${\widehat{\mathcal{L}}}(\cdot)$ in \eqref{empirical formulation} and the objective $\widehat{L}(\cdot, \cdot)$ in \eqref{new_formulation}, we have the following equality relationship for any reward parameter $\theta$ :
\begin{equation}
\label{L_L_min}
    \widehat{\mathcal{L}}(\theta)=\min _\pi \widehat{L}(\theta, \pi).
\end{equation}

Combining \eqref{reward optimality} and \eqref{L_L_min}, we yield the following result:
\begin{equation*}
\label{final_opt}
    \tilde{\theta} \in \arg \max _\theta \min _\pi \widehat{L}(\theta, \pi)=\arg \max _\theta \widehat{\mathcal{L}}(\theta).
\end{equation*}

Till now, we prove that for any saddle point $\left(\tilde{\theta}, \pi_{\tilde{\theta}}\right)$ of $\widehat{L}(\cdot, \cdot)$, the reward parameter $\tilde{\theta}$ constructs a globally optimal solution of the bi-level objective ${\widehat{\mathcal{L}}}(\cdot)$ in \eqref{empirical formulation}, when the neural network is overparameterized.

\vfill


\end{document}